\newcommand{\bA}{\mbox{\boldmath {$A$}}}
\newcommand{\bB}{\mbox{\boldmath {$B$}}}
\newcommand{\bD}{\mbox{\boldmath {$D$}}}
\newcommand{\be}{\mbox{\boldmath {$e$}}}
\newcommand{\bh}{\mbox{\boldmath {$h$}}}
\newcommand{\bH}{\mbox{\boldmath {$H$}}}
\newcommand{\bI}{\mbox{\boldmath {$I$}}}
\newcommand{\bM}{\mbox{\boldmath {$M$}}}
\newcommand{\bO}{\mbox{\boldmath {$O$}}}
\newcommand{\bS}{\mbox{\boldmath {$S$}}}
\newcommand{\bx}{\mbox{\boldmath {$x$}}}
\newcommand{\by}{\mbox{\boldmath {$y$}}}
\newcommand{\bze}{\mbox{\boldmath {$0$}}}
\newcommand{\bmu}{\mbox{\boldmath $ \mu $}}
\newcommand{\bSig}{\mbox{\boldmath $ \Sigma $}}
\newcommand{\bSigma}{\mbox{\boldmath $ \Sigma $}}
\newcommand{\bLam}{\mbox{\boldmath $ \Lambda $}}
\newcommand{\bGamma}{\mbox{\boldmath {$\Gamma$}}}
\newcommand{\bgam}{\mbox{\boldmath $\gamma$}}
\newcommand{\tr}{\mbox{tr}}
\newcommand{\Var}{\mbox{Var}}
\newcommand{\argmin}{\mathop{\rm argmin}\limits}
\numberwithin{equation}{section}
\newtheorem{thm}{Theorem}[section]
\newtheorem{cor}{Corollary}[section]
\newtheorem{pro}{Proposition}[section]
\newtheorem{rem}{Remark}
\newtheorem{lem}{Lemma}[section]
\long\def\symbolfootnote[#1]#2{\begingroup%
\def\thefootnote{\fnsymbol{footnote}}\footnote[#1]{#2}\endgroup}
\begin{document}

\begin{center}
\Large
{\bf High-dimensional quadratic classifiers in non-sparse settings}
\end{center}
\begin{center}
\vskip 0.5cm
\textbf{\large Makoto Aoshima and Kazuyoshi Yata} \\
Institute of Mathematics, University of Tsukuba, Ibaraki, Japan \\[-1cm]
\end{center}
\symbolfootnote[0]{\normalsize Address correspondence to Makoto Aoshima, 
Institute of Mathematics, University of Tsukuba, Ibaraki 305-8571, Japan; 
Fax: +81-298-53-6501; E-mail: aoshima@math.tsukuba.ac.jp}

\begin{abstract}
We consider high-dimensional quadratic classifiers in non-sparse settings. 
The target of classification rules is not Bayes error rates in the context.
The classifier based on the Mahalanobis distance does not always give a preferable performance even if the populations are normal distributions having known covariance matrices.
The quadratic classifiers proposed in this paper draw information about heterogeneity effectively through both the differences of expanding mean vectors and covariance matrices. 
We show that they hold a consistency property in which misclassification rates tend to zero as the dimension goes to infinity under non-sparse settings. 
We verify that they are asymptotically distributed as a normal distribution under certain conditions. 
We also propose a quadratic classifier after feature selection by using both the differences of mean vectors and covariance matrices.
Finally, we discuss performances of the classifiers in actual data analyses.
The proposed classifiers achieve highly accurate classification with very low computational costs. 
\\[2mm]
{\small \noindent\textbf{Keywords:} Bayes error rate; Discriminant analysis; Feature selection; Heterogeneity; Large $p$ small $n$}
\end{abstract}

\section{Introduction}
Globally, there is an ever increasing need for fast, accurate and cost effective analysis of high-dimensional data in many fields, including academia, medicine and business.
However, existing classifiers for high-dimensional data are often complex, time consuming and have no guarantee of accuracy.
In this paper we hope to provide better options.
A common feature of high-dimensional data is that the data dimension is high, however, the sample size is relatively low. 
This is the so-called ``HDLSS" or ``large $p$, small $n$" data situation where $p/n\to\infty$; here $p$ is the data dimension and $n$ is the sample size. 
Suppose we have independent and $p$-variate two populations, $\pi_i,\ i=1,2$, having an unknown mean vector $\bmu_i=(\mu_{i1},...,\mu_{ip})^T$ and unknown covariance matrix $\bSigma_i(>\bO)$ for each $i$. 
Let 
$$
\bmu_{12}=\bmu_1-\bmu_2=(\mu_{121},...,\mu_{12p})^T \quad \mbox{and}\quad \bSigma_{12}=\bSigma_{1}-\bSigma_{2}.
$$ 
We assume that $\limsup_{p\to \infty} |\mu_{12j}|<\infty$ for all $j$.
Note that $\limsup_{p\to \infty} ||\bmu_{12}||^2/p<\infty$, where $||\cdot||$ denotes the Euclidean norm. 
Let $\sigma_{i(j)}$ be the $j$-th diagonal element of $\bSig_i$ for $j=1,...,p\ (i=1,2)$. 
We assume that $\sigma_{i(j)}\in (0,\infty)$ as $p\to \infty$ for all $i,j$. 
Here, for a function, $f(\cdot)$, ``$f(p) \in (0, \infty)$ as $p\to \infty$" implies that $\liminf_{p\to \infty}f(p)>0$ and $\limsup_{p\to \infty}f(p)<\infty$. 
Then, it holds that $\tr(\bSig_i)/p\in (0, \infty)$ as $p\to \infty$ for $i=1,2$.
We do not assume $\bSigma_1=\bSigma_2$.
The eigen-decomposition of $\bSigma_{i}$ is given by $\bSigma_{i}=\bH_{i}\bLam_{i}\bH_{i}^T$, where $\bLam_{i}=\mbox{diag}(\lambda_{i1},...,\lambda_{ip})$ is a diagonal matrix of eigenvalues, $\lambda_{i1}\ge \cdots \ge \lambda_{ip}> 0$, and $\bH_{i}=[\bh_{i1},...,\bh_{ip}]$ is an orthogonal matrix of the corresponding eigenvectors. 
We have independent and identically distributed (i.i.d.) observations, $\bx_{i1},...,\bx_{in_i}$, from each $\pi_i$, where $\bx_{ik}=(x_{i1k},...,x_{ipk})^T,\ k=1,...,n_i$. 
We assume $n_i\ge 2,\ i=1,2$. 
Let $n_{\min}=\min\{n_1,n_2\}$. 
We estimate $\bmu_i$ and $\bSigma_i$ by 
$\overline{\bx}_{in_i}=(\overline{x}_{i1n_i},...,\overline{x}_{ipn_i})^T=\sum_{k=1}^{n_i}{\bx_{ik}}/{n_i}$ 
and $\bS_{in_i}=\sum_{k=1}^{n_i}(\bx_{ik}-\overline{\bx}_{in_i})(\bx_{ik}-\overline{\bx}_{in_i})^T/(n_i-1)$.  
Let $s_{in_i(j)}$ be the $j$-th diagonal element of $\bS_{in_i}$ for $j=1,...,p\ (i=1,2)$. 

In this paper, we consider high-dimensional quadratic classifiers in non-sparse settings. 
Let $\bx_0=(x_{01},...,x_{0p})^T$ be an observation vector of an individual belonging to one of the two 
populations.  
Let $|\bM|$ be the determinant of a square matrix $\bM$. 
When $\pi_i$s are Gaussian, a Bayes optimal rule is given as follows: 
One classifies the individual into $\pi_1$ if
\begin{align}
(\bx_0-\bmu_1)^T\bSig_{1}^{-1}(\bx_0-\bmu_1)
-\log|\bSig_{2}\bSig_{1}^{-1}| <(\bx_0-\bmu_2)^T\bSig_{2}^{-1}(\bx_0-\bmu_2)
\label{1.1}
\end{align}
and into $\pi_2$ otherwise. 
Since $\bmu_i$s and $\bSig_i$s are unknown, one usually considers the following typical classifier: 
$$
(\bx_0-\overline{\bx}_{1n_1})^T\bS_{1n_1}^{-1}(\bx_0-\overline{\bx}_{1n_1})-\log|\bS_{2n_2}\bS_{1n_1}^{-1}|< (\bx_0-\overline{\bx}_{2n_2})^T\bS_{2n_2}^{-1}(\bx_0-\overline{\bx}_{2n_2}).
$$
The classifier usually converges to the Bayes optimal classifier when $n_{\min}\to \infty$ while $p$ is fixed or $n_{\min}/p\to \infty$. 
However, in the HDLSS context, the inverse matrix of $\bS_{in_i}$ does not exist. 
When $\bSigma_1=\bSigma_2$, \cite{Bickel:2004} considered an inverse matrix defined by only diagonal elements of the pooled sample covariance matrix. 
\cite{Fan:2008} considered a classification after feature selection. 
\cite{Fan:2012} proposed the regularized optimal affine discriminant (ROAD). 
When $\bSigma_1\neq \bSigma_2$, \cite{Dudoit:2002} considered an inverse matrix defined by only diagonal elements of $\bS_{in_i}$. 
\cite{Aoshima:2011} considered using $\{\tr(\bS_{in_i})/p\}^{-1}\bI_p$ instead of $\bS_{in_i}^{-1}$ from a geometrical background of HDLSS data and proposed the geometric classifier. 
Here, $\bI_p$ denotes the identity matrix of dimension $p$. 
\cite{Hall:2005} and \cite{Marron:2007} considered distance weighted classifiers.
\cite{Chan:2009} and \cite{Aoshima:2014} considered distance-based classifiers and \cite{Aoshima:2014} gave the misclassification rate adjusted classifier for multiclass, high-dimensional data whose misclassification rates are no more than specified thresholds. 

Recently, \cite{Cai:2011b}, \cite{Shao:2011} and \cite{Li:2015} gave sparse linear or quadratic classification rules for high-dimensional data. 
They showed that their classification rules have Bayes error rates when $\pi_i$s are Gaussian.
They assumed that $\lambda_{ij}$s are bounded under some sparsity conditions such as $\bmu_{12}$, $\bSig_i$s and $\bSig_{12}$ (or $\bSig_i^{-1}$s and $\bSig_1^{-1}-\bSig_2^{-1}$) are sparse. 
For example, when $\bSig_1=\bSig_2\ (=\bSig,\ \mbox{say})$, the error rate of their classification rules is given by $\Phi(-\Delta_{MD}^{1/2}/2)+o(1)$ as $p\to \infty$, where $\Delta_{MD}=\bmu_{12}^T\bSig^{-1}\bmu_{12}$ that is the Mahalanobis distance and $\Phi(\cdot )$ denotes the cumulative distribution function of the standard normal distribution.
Here, $\Phi(-\Delta_{MD}^{1/2}/2)$ is the Bayes error rate.

In this paper, we investigate quadratic classifiers from a perspective that is different from the sparse discriminant analysis. 
We {\it do not assume that $\bmu_{12}$, $\bSig_i$s and $\bSig_{12}$ are sparse}.
In such a context, the target of classification rules is not Bayes error rates as in $\Phi(-\Delta_{MD}^{1/2}/2)+o(1)$ as $p\to\infty$. 
We consider a consistency property such as misclassification rates tend to $0$ as $p$ increases, i.e., 
$$
e(i)\to 0 \ \ \mbox{as $p\to \infty$ for $i=1,2$},
$$
where $e(i)$ denotes the error rate of misclassifying an individual from $\pi_i$ into the other class. 
For example, if one can assume that $\pi_i$s are Gaussian and $\bSig_1=\bSig_2$, the Bayes rule by (\ref{1.1}) has such a consistency property when $\Delta_{MD} \to \infty $ as $p\to \infty$. 
It is likely that $\Delta_{MD} \to \infty $ as $p\to \infty$ when $\bmu_{12}$ is non-sparse in the sense that $||\bmu_{12}||\to \infty$ as $p\to \infty$.
We emphasize that such non-sparse situations often occur in high-dimensional settings. 
For example, see \cite{Hall:2005} or (\ref{6.1}), (\ref{6.2}) and Table~\ref{Tab2} in Section 6.
We will show that quadratic classifiers hold the consistency property when $\bmu_{12}$ or $\bSig_{12}$ is non-sparse such as 
$||\bmu_{12}|| \to \infty$ or $||\bSig_{12}||_F\to \infty$ as $p\to \infty$, where $||\cdot||_F$ is the Frobenius norm.
 
In this paper, we consider the following function of $\bA_i$ to discriminate $\pi_i$s in general: 
\begin{align}
W_i(\bA_i)=&(\bx_0-\overline{\bx}_{in_i})^T\bA_i(\bx_0-\overline{\bx}_{in_i})-\tr(\bS_{in_i} \bA_i)/n_i-\log|\bA_i|,
\label{1.2}
\end{align}
where $\bA_{i}$ is a positive definite matrix satisfying the equation that $\tr\{\bSig_i (\bA_{i'}-\bA_i) \}=\tr(\bA_i^{-1}\bA_{i'})-p$ $(i\neq i')$.
Here, $\tr(\bS_{in_i} \bA_i)/n_i$ is a bias correction term. 
We consider a quadratic classification rule in which one classifies the individual into $\pi_1$ if
\begin{align}
W_1(\bA_1)-W_2(\bA_2)<0
\label{1.3}
\end{align}
and into $\pi_2$ otherwise. 
Note that (\ref{1.3}) becomes a linear classifier when $\bA_1=\bA_2$.
We have that $E\{W_{i'}(\bA_{i'}) \}-E\{W_i(\bA_i) \}=\Delta_{i}$ when $\bx_0 \in \pi_i$, where
\begin{align}
\Delta_{i}=\bmu_{12}^T\bA_{i'}\bmu_{12}+\tr\{\bSig_i (\bA_{i'}-\bA_i) \}+\log |\bA_{i'}^{-1}\bA_i|
\label{1.4}
\end{align} 
for $i=1,2\ (i' \neq i)$.
\begin{pro}
\label{pro1.1}
(i)\ $\Delta_{i} \ge 0$. 
(ii)\ $\Delta_{i}>0$ when $\bmu_1\neq \bmu_2$ or $\bA_1\neq \bA_2$. 
\end{pro}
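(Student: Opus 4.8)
The plan is to use the defining constraint on $\bA_i$ to eliminate the trace term in (\ref{1.4}), after which $\Delta_i$ splits into two manifestly nonnegative pieces. Since $\bA_i$ is chosen so that $\tr\{\bSig_i(\bA_{i'}-\bA_i)\}=\tr(\bA_i^{-1}\bA_{i'})-p$, and since $\log|\bA_{i'}^{-1}\bA_i|=-\log|\bA_i^{-1}\bA_{i'}|$, substituting into (\ref{1.4}) would give
\begin{align*}
\Delta_i=\bmu_{12}^T\bA_{i'}\bmu_{12}+\bigl(\tr(\bA_i^{-1}\bA_{i'})-p-\log|\bA_i^{-1}\bA_{i'}|\bigr).
\end{align*}
The first term is nonnegative because $\bA_{i'}$ is positive definite, and it is strictly positive exactly when $\bmu_{12}\neq\bze$, i.e.\ $\bmu_1\neq\bmu_2$.

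For the bracketed term, put $\bB=\bA_i^{-1}\bA_{i'}$. Although $\bB$ need not be symmetric, it is similar to $\bA_i^{-1/2}\bA_{i'}\bA_i^{-1/2}$, which is symmetric and positive definite, so $\bB$ has real, strictly positive eigenvalues $\nu_1,\dots,\nu_p$. Hence $\tr(\bB)=\sum_{j=1}^p\nu_j$ and $\log|\bB|=\sum_{j=1}^p\log\nu_j$, so the bracketed term equals $\sum_{j=1}^p(\nu_j-1-\log\nu_j)$. Applying the elementary scalar inequality $x-1-\log x\ge 0$ for $x>0$, with equality iff $x=1$, shows the bracketed term is $\ge 0$; together with the previous paragraph this proves (i). For (ii), if $\bmu_1\neq\bmu_2$ the first term is already strictly positive; if instead $\bA_1\neq\bA_2$ then $\bB\neq\bI_p$, so at least one $\nu_j\neq1$ and the corresponding summand $\nu_j-1-\log\nu_j$ is strictly positive, whence $\Delta_i>0$.

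The only delicate point, and the closest thing to an obstacle, is ensuring that $\bA_i^{-1}\bA_{i'}$ has real positive eigenvalues despite not being symmetric; this is exactly what the similarity to $\bA_i^{-1/2}\bA_{i'}\bA_i^{-1/2}$ provides, together with the observation that $\bB=\bI_p$ is equivalent to $\bA_i=\bA_{i'}$, which is what ties the equality analysis to the hypothesis in (ii). Everything else is routine substitution plus the inequality $\log x\le x-1$.
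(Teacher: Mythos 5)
Your proof is correct, and it takes a genuinely different route from the paper's. The paper diagonalizes $\bA_i$ and $\bA_{i'}$ separately and first establishes the trace inequality $\tr(\bA_i^{-1}\bA_{i'})\ge\sum_{j=1}^p\lambda_{i'j(A)}/\lambda_{ij(A)}$ for the sorted eigenvalues (its display (B.1), proved by a telescoping rearrangement), then applies $c-1+\log c^{-1}\ge 0$ to those eigenvalue ratios; for part (ii) it must additionally track when the inequality in (B.1) is strict, distinguishing the case of differing eigenvalues from the eigenvector case $\bh_{ij(A)}^T\bA_{i'}\bh_{ij(A)}<\lambda_{i'j(A)}$. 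You instead work with the single matrix $\bB=\bA_i^{-1}\bA_{i'}$: its similarity to $\bA_i^{-1/2}\bA_{i'}\bA_i^{-1/2}$ gives real positive eigenvalues $\nu_j$, so $\tr(\bB)-p-\log|\bB|=\sum_{j}(\nu_j-1-\log\nu_j)$ is an exact identity rather than a chain of matrix inequalities; the scalar inequality $x-1-\log x\ge 0$ then gives (i), and the equality case $\bB=\bI_p\Leftrightarrow\bA_i=\bA_{i'}$ (valid because a symmetric positive definite matrix with all eigenvalues equal to $1$ is $\bI_p$) gives (ii) immediately, with no separate eigenvector analysis. What the paper's longer route buys is the explicit by-product $\Delta_i\ge\sum_{j=1}^p\{\lambda_{i'j(A)}/\lambda_{ij(A)}-1-\log(\lambda_{i'j(A)}/\lambda_{ij(A)})\}$ in terms of the individual spectra, which it reuses later in the proofs of Propositions 2.1 and 2.2; your argument does not produce that bound, but for Proposition 1.1 itself nothing is lost, and your equality analysis is cleaner.
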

\begin{rem}
\label{rem1.1}
As for $l\ (\ge 3)$-class classification, one may consider a classification rule such as one classifies the individual into $\pi_i$ if
$$
\argmin_{i'=1,...,l}W_{i'}(\bA_{i'})=i.
$$
\end{rem}
In this paper, we specially consider the following four typical $\bA_i$s in (\ref{1.2}):
\begin{description}
 \item[(I)]  $\bA_i=\bI_p$, \ \ {\bf (II)} $\displaystyle \bA_i=\frac{p}{\tr(\bSig_i)} \bI_p$, \ \ {\bf (III)}  $\bA_i=\bSig_{i(d)}^{-1}$, \ and \ \ {\bf (IV)} $\bA_i=\bSig_{i}^{-1} $,
\end{description}
where $\bSig_{i(d)}=\mbox{diag}(\sigma_{i(1)},...,\sigma_{i(p)})$.
These four $\bA_i$s satisfy the condition that $\tr\{\bSig_i (\bA_{i'}-\bA_i) \}=\tr(\bA_i^{-1}\bA_{i'})-p$ $(i\neq i')$ and they provide historical background of discriminant analysis.
Note that $|| \bSig_{12}||_F\ge ||\bA_1^{-1}-\bA_2^{-1} ||_F$ for these four $\bA_i$s. 
Also, under (I) to (IV), we note that $\Delta_i\to\infty$ as $p\to\infty$ when $\bmu_{12}$ or $\bSig_{12}$ is non-sparse.
Practically, $\bA_i$s should be estimated except for (I). 
We will consider quadratic classifiers given by estimating $\bA_i$s in Section 4.
Now, let us see an easy example to check the performance of (I) to (IV) in (\ref{1.3}). 
We set $p=2^{s},\ s=3,...,12$. 
Independent pseudo random observations were generated from $\pi_i: N_p(\bmu_i, \bSigma_i)$, $i=1,2$.
We set $\bmu_1=\bze$ and $\bSig_1=\bB_1( 0.3^{|i-j|^{1/3}})\bB_1$, where
$
\bB_1=\mbox{diag}[\{0.5+1/(p+1)\}^{1/2},...,\{0.5+p/(p+1)\}^{1/2}]. 
$
Note that $\tr(\bSig_1)=p$ and $\bSig_{1(d)}=\bB_1^2$. 
When $\bSig_1=\bSig_2$ and $(n_1,n_2)=(\log_2{p},2\log_2{p})$, we considered two cases: 
\begin{flushleft}
(a) $\bmu_{2}=(1,...,1,0,...,0)^T$ whose first $\lceil p^{2/3} \rceil $ elements are $1$, and\\
(b) $\bmu_{2}=(0,...,0,1,...,1)^T$ whose last $\lceil p^{2/3} \rceil $ elements are $1$.
\end{flushleft}
Here, $\lceil x \rceil$ denotes the smallest integer $\ge x$. 
Next, when $\bmu_{2}=\bze$ (i.e., $\bmu_{12}=\bze$) and $(n_1,n_2)=(5,10)$, we considered two cases: 
\begin{flushleft}
(c) $\bSig_2=1.5\bSig_1$ \ and \ (d) $\bSig_2=1.2\bI_p$.
\end{flushleft}
Note that $\bmu_{12}$ or $\bSig_{12}$ is non-sparse for (a) to (d) because $||\bmu_{12}|| \to \infty$ or $||\bSig_{12}||_F\to \infty$ as $p\to \infty$. 
For $\bx_0\in\pi_i\ (i=1,2)$ we repeated 2000 times to confirm if the classification rule by (\ref{1.3}) with either of (I) to (IV) does (or does not) classify $\bx_0$ correctly and defined $P_{ir}=0\ (\mbox{or}\ 1)$ accordingly for each $\pi_i$.
We calculated the error rates, $\overline{e}(i)= \sum_{r=1}^{2000}P_{ir}/2000$, $i=1,2$. 
Also, we calculated the average error rate, $\overline{e}=\{\overline{e}(1)+\overline{e}(2)\}/2$.
Their standard deviations are less than $0.011$. 
In Figure~\ref{F1}, we plotted $\overline{e}$ for (a) and (b). 
Note that (I) is equivalent to (II) for (a) and (b). 
In Figure~\ref{F2}, we plotted $\overline{e}$ for (c) and (d). 
\begin{figure}
\begin{centering}
\includegraphics[scale=0.385]{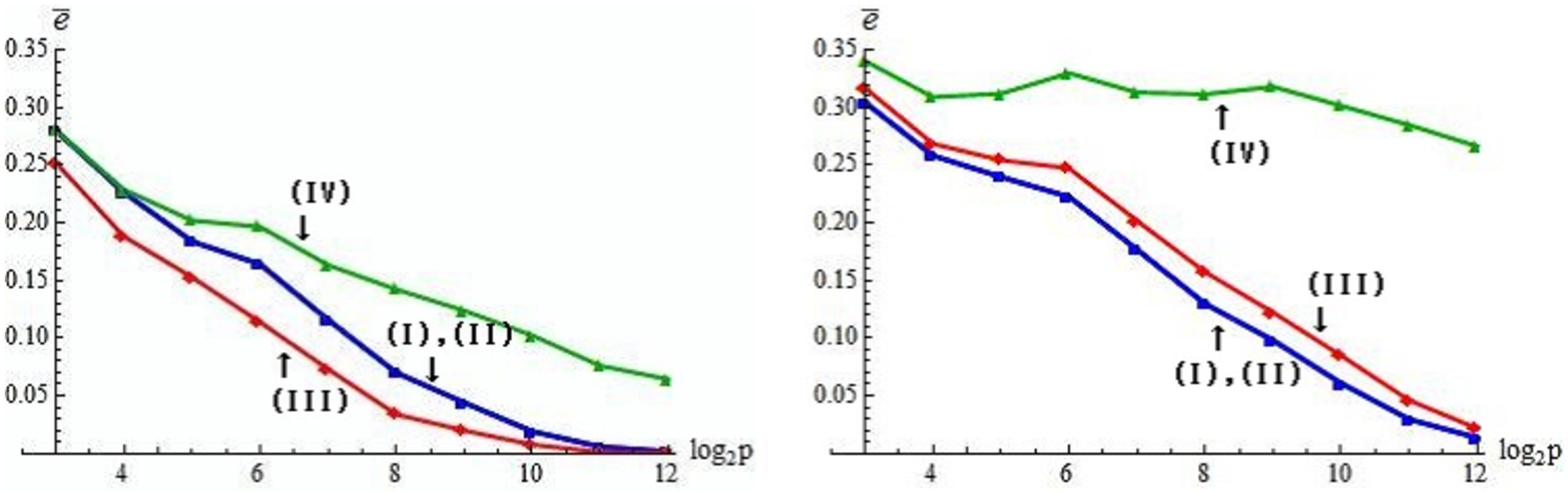} 
\\[-1mm]
\ \ 
(a) $\bmu_1=\bze$ and $\bmu_{2}=(1,...,1,0,...,0)^T$ \hspace{0.1cm} (b) $\bmu_1=\bze$ and $\bmu_{2}=(0,...,0,1,...,1)^T$
\\[-2.5mm]
\caption{The average error rates of the classification rule by (\ref{1.3}) for (I) to (IV) when $\bSig_1=\bSig_2$. 
The left and right panels display $\overline{e}$ in the cases of (a) and (b), respectively.}
\label{F1}
\includegraphics[scale=0.385]{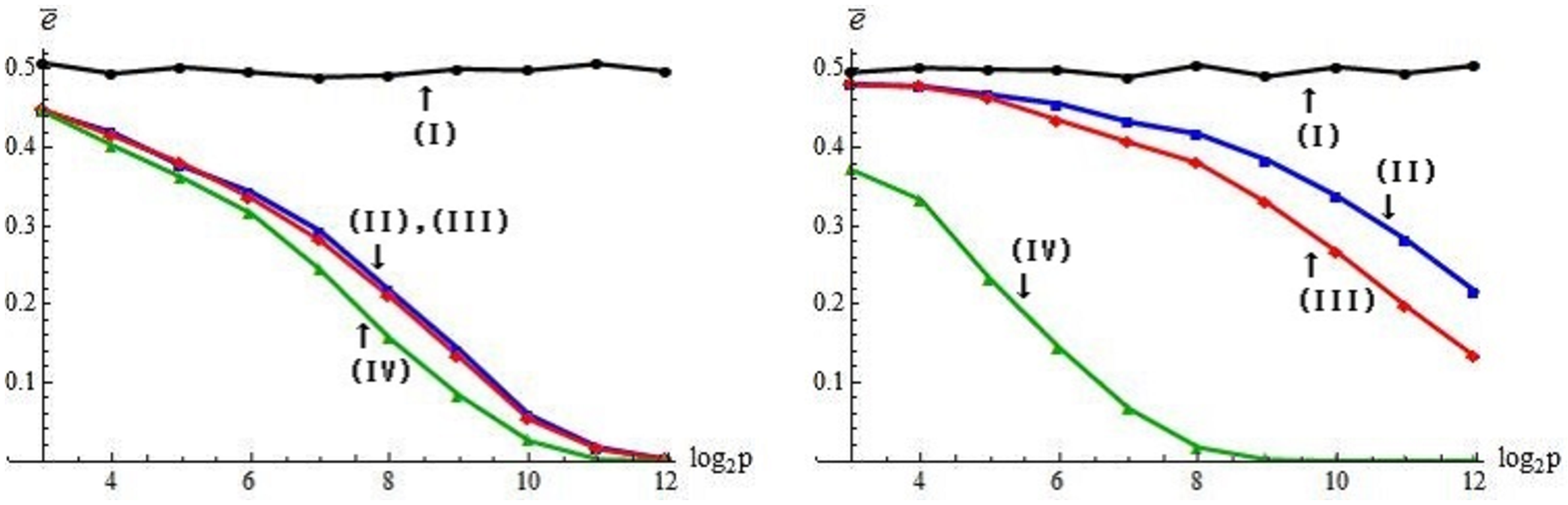} 
\\[-1mm]
(c) $\bSig_2=1.5\bSig_1$ \hspace{3cm} (d) $\bSig_2=1.2\bI_p$
\\[-3mm]
\caption{The average error rates of the classification rule by (\ref{1.3}) for (I) to (IV) when $\bmu_1=\bmu_2$.
The left and right panels display $\overline{e}$ in the cases of (c) and (d), respectively.}
\label{F2}
\end{centering}
\end{figure}
We observed that (IV) gives the worst performance in Figure~\ref{F1} contrary to expectations. 
In general, one would think that the classifier based on the Mahalanobis distance such as (\ref{1.2}) with (IV) is the best when $\pi_i$s are Gaussian and $n_{\min}\to \infty$. 
We emphasize that it is not true for high-dimensional data. 
We will explain its theoretical reason in Section 3.2. 
We observed that (I) (or (II)) gives a better performance compared to (III) for (b) in Figure~\ref{F1}. 
We will discuss the reasons in Section 3.4. 
In Figure~\ref{F2}, the error rates of (I) are close to $0.5$ because of $\bmu_{12}=\bze$. 
On the other hand, (II), (III) and (IV) gave good performances as $p$ increases by drawing information on heteroscedasticity in the classifiers. 
We will give their theoretical backgrounds in Sections 2.2 and 3.4.

We pay special attention to the difference of covariance matrices in classification for high-dimensional data. 
In Section 2, we show that the classification rule by (\ref{1.3}) holds the consistency property under non-sparse settings. 
In Section 3, we verify that the quadratic classifier by (\ref{1.2}) is asymptotically distributed as a normal distribution under certain conditions. 
In Section 4, we consider the estimation of $\bA_i$s and give asymptotic properties of estimated classifiers. 
In Section 5, we propose a quadratic classifier after feature selection by using both the differences of mean vectors and covariance matrices.
In Section 6, we discuss performances of the classifiers in actual data analyses.
Finally, in Section 7, we give concluding remarks of our study. 
\section{Consistency property of the quadratic classifier}
In this section, we discuss the consistency property of quadratic classifiers given by (\ref{1.2}). 
\subsection{Preliminary}
Similar to \cite{Bai:1996} and \cite{Aoshima:2014}, we assume the following assumption about population distributions as necessary:
\begin{description}
\item[(A-i)]\
Let $\by_{ik},\ k=1,...,n_i$, be i.i.d. random $q_i$-vectors having $E(\by_{ik})=\bze$ and $\Var(\by_{ik})=\bI_{q_i}$ for each $i\ (=1,2)$, where $q_i\ge p$.
Let $\by_{ik}=(y_{i1k},...,y_{iq_ik})^T$ whose components satisfy that $\limsup_{p\to \infty} E(y_{ijk}^4)<\infty$ for all $j$ and
\begin{equation}
E( y_{ijk}^2y_{irk}^2)=E( y_{ijk}^2)E(y_{irk}^2)=1\quad \mbox{and} 
\quad  E( y_{ijk} y_{irk}y_{isk}y_{itk})=0
\label{2.1}
\end{equation}
for all $j\neq r,s,t$. 
Then, the observations, $\bx_{ik}$s, from each $\pi_i\ (i=1,2)$ are given by
$$
\bx_{ik}=\bGamma_i\by_{ik}+\bmu_i,\ k=1,...,n_i, 
$$
where $\bGamma_i=[\bgam_{i1},...,\bgam_{iq_i}]$ is a $p\times q_i$ matrix such that $\bGamma_i\bGamma_i^T=\bSigma_i$. 
\end{description} 
Note that $\bGamma_i$ includes the case that $\bGamma_i=\bH_i\bLam_i^{1/2}=[\lambda_{i1}^{1/2}\bh_{i1},...,\lambda_{ip}^{1/2}\bh_{ip}]$. 
We assume the following assumption instead of (A-i) as necessary:
\begin{description}
\item[(A-ii)]\ (A-i) by replacing (\ref{2.1}) with the independence of $y_{ijk},\ j=1,...,q_i\ (i=1,2;\ k=1,...,n_i)$.
\end{description} 
Note that (A-ii) is a special case of (A-i). 
When $\pi_i$ has $N_p(\bmu_i,\bSig_i)$, (A-ii) naturally holds.

Now, we consider the following divergence condition for $p$ and $n_i$s: 
\begin{description}
\item[($\star$)]\ $p\to \infty$ either when $n_i$ is fixed or $n_i\to \infty$ for $i=1,2$.
\end{description} 
Let $\Delta_{iA}=\bmu_{12}^T\bA_{i'} \bSig_i \bA_{i'} \bmu_{12}$ for $i=1,2\ (i' \neq i)$. 
We consider the following conditions under ($\star$) for $i=1,2$ $(i' \neq i)$: 
\begin{description}
 \item[(C-i)]\ $\displaystyle \frac{\tr\{(\bSig_i\bA_{i})^2\} }{n_i \Delta_{i}^2 }=o(1) $  \ and \  
 $\displaystyle \frac{\tr(\bSig_i \bA_{i'} \bSig_{i'}  \bA_{i'})+\tr\{(\bSig_{i'} \bA_{i'})^2\}/n_{i'} }{n_{i'}\Delta_{i}^2 }=o(1)$, 
 \item[(C-ii)]\ $\displaystyle \frac{\Delta_{iA}}{\Delta_{i}^2 }=o(1)$, \ and \ 
 {\bf (C-iii)}\ $\displaystyle \frac{\tr[\{\bSig_i(\bA_{1}-\bA_{2})\}^2] }{\Delta_{i}^2 }=o(1)$. 
\end{description}
\par
Then, we claim the consistency property of (\ref{1.2}) in (\ref{1.3}) as follows:
\begin{thm}
\label{thm2.1}
Assume (A-i). 
Assume also (C-i) to (C-iii). 
Then, we have that 
$$
\frac{W_{i'}(\bA_{i'})-W_i(\bA_i)}{\Delta_{i}}=1+o_P(1) \ \ \mbox{under ($\star$) when $\bx_0 \in \pi_i$ for $i=1,2$ $(i' \neq i)$}.
$$ 
Furthermore, for the classification rule by (\ref{1.3}) with (\ref{1.2}), we have that 
\begin{align}
e(i)\to 0,\ i=1,2,\ \mbox{under ($\star$).} 
\label{2.2}
\end{align}
\end{thm}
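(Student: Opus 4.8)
The plan is to prove both assertions by Chebyshev's inequality. Write $D_i=W_{i'}(\bA_{i'})-W_i(\bA_i)$. The text already records that $E(D_i)=\Delta_i$ when $\bx_0\in\pi_i$ (see (\ref{1.4})), and since (C-i)--(C-iii) are stated with $\Delta_i^2$ in the denominators they force $\Delta_i>0$ for all large $p$. Hence it suffices to show $\Var(D_i)=o(\Delta_i^2)$ under ($\star$) when $\bx_0\in\pi_i$; then $D_i/\Delta_i=1+o_P(1)$, which is the first display of the theorem. For the rule by (\ref{1.3}), an individual from $\pi_i$ is misclassified only if $D_i\le 0$, so $e(i)\le P(|D_i-\Delta_i|\ge\Delta_i)\le\Var(D_i)/\Delta_i^2\to 0$, which is (\ref{2.2}).

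To organise the variance bound I would pass to the representation in (A-i): write $\bx_0=\bGamma_i\by_{i0}+\bmu_i$ with $\by_{i0}$ an independent copy, $\overline{\bx}_{in_i}=\bGamma_i\overline{\by}_{in_i}+\bmu_i$, $\overline{\bx}_{i'n_{i'}}=\bGamma_{i'}\overline{\by}_{i'n_{i'}}+\bmu_{i'}$, and set $\bu=\bGamma_i\by_{i0}$, $\bv=\bGamma_i\overline{\by}_{in_i}$, $\bw=\bGamma_{i'}\overline{\by}_{i'n_{i'}}$, $\bdel=\bmu_i-\bmu_{i'}$; these are mutually independent with mean $\bze$ and $\Var(\bu)=\bSig_i$, $\Var(\bv)=\bSig_i/n_i$, $\Var(\bw)=\bSig_{i'}/n_{i'}$, and $\bdel^T\bA_{i'}\bdel=\bmu_{12}^T\bA_{i'}\bmu_{12}$. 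Expanding the two quadratic forms in (\ref{1.2}) gives
\begin{align*}
D_i=&\ \bu^T(\bA_{i'}-\bA_i)\bu-2\bu^T\bA_{i'}\bw+2\bu^T\bA_i\bv+2\bu^T\bA_{i'}\bdel-2\bw^T\bA_{i'}\bdel\\
&+\Bigl\{\bw^T\bA_{i'}\bw-\frac{\tr(\bS_{i'n_{i'}}\bA_{i'})}{n_{i'}}\Bigr\}-\Bigl\{\bv^T\bA_i\bv-\frac{\tr(\bS_{in_i}\bA_i)}{n_i}\Bigr\}+\bmu_{12}^T\bA_{i'}\bmu_{12}+\log|\bA_{i'}^{-1}\bA_i|.
\end{align*}
Taking expectations recovers $E(D_i)=\Delta_i$, since $E\{\bw^T\bA_{i'}\bw\}=\tr(\bSig_{i'}\bA_{i'})/n_{i'}=E\{\tr(\bS_{i'n_{i'}}\bA_{i'})/n_{i'}\}$ and likewise for the $\bv$-bracket; this is precisely the purpose of the bias-correction terms in (\ref{1.2}). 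By the Cauchy--Schwarz inequality $\Var(D_i)$ is at most seven times the sum of the variances of the seven random summands, so it remains to bound each by $o(\Delta_i^2)$.

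For the quadratic summands I would use the elementary bound $\Var(\by_{ik}^T\bM\by_{ik})=2\tr(\bM^2)+\sum_j(E(y_{ijk}^4)-3)M_{jj}^2\le C\,\tr(\bM^2)$, valid for any symmetric $\bM$ under (A-i) with $C$ depending only on $\limsup_p E(y_{ijk}^4)$: the moment relations (\ref{2.1}) are exactly what makes this hold without full independence of the $y_{ijk}$, together with $\sum_j M_{jj}^2\le\tr(\bM^2)$. This yields $\Var\{\bu^T(\bA_{i'}-\bA_i)\bu\}=O(\tr[\{\bSig_i(\bA_1-\bA_2)\}^2])=o(\Delta_i^2)$ by (C-iii), and $\Var\{\bw^T\bA_{i'}\bw-\tr(\bS_{i'n_{i'}}\bA_{i'})/n_{i'}\}=O(\tr\{(\bSig_{i'}\bA_{i'})^2\}/n_{i'}^2)$, $\Var\{\bv^T\bA_i\bv-\tr(\bS_{in_i}\bA_i)/n_i\}=O(\tr\{(\bSig_i\bA_i)^2\}/n_i^2)$, both $o(\Delta_i^2)$ by (C-i). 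Direct second-moment computations handle three more summands: $\Var(\bu^T\bA_{i'}\bw)=\tr(\bSig_i\bA_{i'}\bSig_{i'}\bA_{i'})/n_{i'}$ and $\Var(\bu^T\bA_i\bv)=\tr\{(\bSig_i\bA_i)^2\}/n_i$ are $o(\Delta_i^2)$ by (C-i), while $\Var(\bu^T\bA_{i'}\bdel)=\bmu_{12}^T\bA_{i'}\bSig_i\bA_{i'}\bmu_{12}=\Delta_{iA}$ is $o(\Delta_i^2)$ by (C-ii).

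The remaining summand is the delicate one: $\Var(\bw^T\bA_{i'}\bdel)=\bmu_{12}^T\bA_{i'}\bSig_{i'}\bA_{i'}\bmu_{12}/n_{i'}$ is not literally among the quantities in (C-i)--(C-iii). I would dominate it by Cauchy--Schwarz, $\bmu_{12}^T\bA_{i'}\bSig_{i'}\bA_{i'}\bmu_{12}\le(\bmu_{12}^T\bA_{i'}\bmu_{12})\,\lambda_{\max}(\bA_{i'}^{1/2}\bSig_{i'}\bA_{i'}^{1/2})\le(\bmu_{12}^T\bA_{i'}\bmu_{12})\,[\tr\{(\bSig_{i'}\bA_{i'})^2\}]^{1/2}$, and then use $\bmu_{12}^T\bA_{i'}\bmu_{12}\le\Delta_i$, which holds since the other two terms of $\Delta_i$ in (\ref{1.4}) obey $\tr\{\bSig_i(\bA_{i'}-\bA_i)\}+\log|\bA_{i'}^{-1}\bA_i|=\tr(\bA_i^{-1}\bA_{i'})-p-\log|\bA_i^{-1}\bA_{i'}|\ge 0$ by the constraint defining $\bA_i$ (the inequality behind Proposition \ref{pro1.1}(i)). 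Thus $\Var(\bw^T\bA_{i'}\bdel)\le\Delta_i[\tr\{(\bSig_{i'}\bA_{i'})^2\}]^{1/2}/n_{i'}$, which is $o(\Delta_i^2)$ exactly when $\tr\{(\bSig_{i'}\bA_{i'})^2\}/(n_{i'}^2\Delta_i^2)=o(1)$, i.e. by (C-i). Summing the seven bounds gives $\Var(D_i)=o(\Delta_i^2)$, and the theorem follows. The main obstacle is this term-by-term bookkeeping — matching each summand and each bias correction to a piece of (C-i)--(C-iii), verifying the quadratic-form moment estimate under (A-i) rather than under independence, and carrying out the Cauchy--Schwarz step that absorbs $\bmu_{12}^T\bA_{i'}\bSig_{i'}\bA_{i'}\bmu_{12}/n_{i'}$ into the stated conditions; once $\Var(D_i)=o(\Delta_i^2)$ is established the passage to (\ref{2.2}) is immediate.
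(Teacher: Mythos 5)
Your proposal is correct and follows essentially the same route as the paper's proof: both reduce the claim to $\Var(D_i)=o(\Delta_i^2)$ term by term, use the moment relations in (A-i) to bound the variance of the quadratic form in $\bA_1-\bA_2$ by $O(\tr[\{\bSig_i(\bA_1-\bA_2)\}^2])$ via (C-iii), and handle the term $\bmu_{12}^T\bA_{i'}\bSig_{i'}\bA_{i'}\bmu_{12}/n_{i'}$ by exactly the same Cauchy--Schwarz chain $\bmu_{12}^T\bA_{i'}\bSig_{i'}\bA_{i'}\bmu_{12}\le\bmu_{12}^T\bA_{i'}\bmu_{12}\,\lambda_{\max}(\bA_{i'}^{1/2}\bSig_{i'}\bA_{i'}^{1/2})\le\Delta_i\,\tr\{(\bSig_{i'}\bA_{i'})^2\}^{1/2}$ that the paper uses in (A.1). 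The only cosmetic difference is that you spell the decomposition out into seven summands where the paper groups them into three variance displays before invoking Chebyshev.
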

\begin{rem}
When $\bA_1=\bA_2$, we can claim Theorem \ref{thm2.1} without (A-i) and (C-iii). 
\end{rem}
Let $\lambda_{\min}(\bM)$ and $\lambda_{\max}(\bM)$ be the smallest and the largest eigenvalues of any positive definite matrix, $\bM$. 
We use the phrase ``$\lambda(\bM)\in (0,\infty)$ as $p\to \infty$" in the sense that $\liminf_{p\to \infty}\lambda_{\min}(\bM)>0$ and $\limsup_{p\to \infty}\lambda_{\max}(\bM)<\infty$.
We note that $\bA_i$s in (I) to (III) satisfy the condition ``$\lambda(\bA_i)\in (0,\infty)$ as $p\to \infty$".
Let $\Delta_{\min}=\min\{\Delta_{1},\Delta_{2} \}$, $\lambda_{\max}=\max\{\lambda_{\max}(\bSig_1),\lambda_{\max}(\bSig_2) \}$ and 
$\tr(\bSig_{\max}^2)=\max\{\tr(\bSig_1^2), \tr(\bSig_2^2) \}$. 
Now, instead of (C-i) and (C-ii), we consider the following simpler conditions under ($\star$):
\begin{description}
\item[(C-i')]\ $\displaystyle \frac{\tr(\bSig_{\max}^2)}{n_{\min} \Delta_{\min}^2}=o(1)$ \ and \ 
{\bf (C-ii')}\ $\displaystyle \frac{\lambda_{\max}}{\Delta_{\min}}=o(1)$. 
\end{description}
\begin{pro}
\label{pro2.1}
Assume that $\limsup_{p\to \infty} \lambda_{\max}(\bA_i)<\infty$ for $i=1,2$.
Then, (C-i') and (C-ii') imply (C-i) and (C-ii), respectively. 
Furthermore, if $\lambda(\bA_i)\in (0,\infty)$ as $p\to \infty$ for $i=1,2$, and $\bA_i,\ i=1,2,$ are diagonal matrices such as in (I) to (III) in Section 1, (C-ii') implies (C-iii).
\end{pro}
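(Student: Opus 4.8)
The plan is to prove the three asserted implications one at a time, in each case bounding the population- and index-specific quantities in (C-i)--(C-iii) by the coarser quantities $n_{\min}$, $\Delta_{\min}$, $\tr(\bSig_{\max}^2)$ and $\lambda_{\max}$ of (C-i') and (C-ii'). First I would fix notation and tools: let $C<\infty$ be a common upper bound for $\lambda_{\max}(\bA_1)$ and $\lambda_{\max}(\bA_2)$, valid for all large $p$ by hypothesis, and record three elementary facts. (a) If $\bze\preceq\bM\preceq\bN$ then $\tr(\bM^2)\le\tr(\bN^2)$. (b) For positive semidefinite $\bSig,\bA$ one has $\bSig^{1/2}\bA\bSig^{1/2}\preceq\lambda_{\max}(\bA)\bSig$, hence $\tr\{(\bSig\bA)^2\}=||\bSig^{1/2}\bA\bSig^{1/2}||_F^2\le\lambda_{\max}(\bA)^2\tr(\bSig^2)$. (c) Cauchy--Schwarz for the Frobenius inner product, $\tr(\bP\bQ)\le\{\tr(\bP^2)\tr(\bQ^2)\}^{1/2}$ for symmetric $\bP,\bQ$. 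I would also use, throughout, the lower bound $\Delta_i\ge\bmu_{12}^T\bA_{i'}\bmu_{12}\ge 0$ (hence $\Delta_i\ge\Delta_{\min}$): the defining relation $\tr\{\bSig_i(\bA_{i'}-\bA_i)\}=\tr(\bA_i^{-1}\bA_{i'})-p$ of the $\bA_i$'s lets one rewrite $\tr\{\bSig_i(\bA_{i'}-\bA_i)\}+\log|\bA_{i'}^{-1}\bA_i|=\tr(\bA_i^{-1}\bA_{i'})-p-\log|\bA_i^{-1}\bA_{i'}|$, which is nonnegative by applying $x-1-\log x\ge0$ to the positive eigenvalues of $\bA_i^{-1}\bA_{i'}$.

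For (C-i')$\Rightarrow$(C-i) I would argue termwise. In the first ratio, fact (b) gives $\tr\{(\bSig_i\bA_i)^2\}\le C^2\tr(\bSig_i^2)\le C^2\tr(\bSig_{\max}^2)$, and since $n_i\ge n_{\min}$ and $\Delta_i\ge\Delta_{\min}$ the ratio is at most $C^2\tr(\bSig_{\max}^2)/(n_{\min}\Delta_{\min}^2)=o(1)$. For the numerator of the second ratio I would write $\tr(\bSig_i\bA_{i'}\bSig_{i'}\bA_{i'})=\tr\{(\bA_{i'}^{1/2}\bSig_i\bA_{i'}^{1/2})(\bA_{i'}^{1/2}\bSig_{i'}\bA_{i'}^{1/2})\}$, apply fact (c), then use $\tr\{(\bA_{i'}^{1/2}\bSig_i\bA_{i'}^{1/2})^2\}=\tr\{(\bSig_i\bA_{i'})^2\}$ together with fact (b) on each factor, getting $\tr(\bSig_i\bA_{i'}\bSig_{i'}\bA_{i'})\le C^2\{\tr(\bSig_i^2)\tr(\bSig_{i'}^2)\}^{1/2}\le C^2\tr(\bSig_{\max}^2)$; similarly $\tr\{(\bSig_{i'}\bA_{i'})^2\}/n_{i'}\le C^2\tr(\bSig_{\max}^2)/n_{\min}$. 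Adding these and dividing by $n_{i'}\Delta_i^2\ge n_{\min}\Delta_{\min}^2$ bounds the second ratio by a constant multiple of $\tr(\bSig_{\max}^2)/(n_{\min}\Delta_{\min}^2)=o(1)$.

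For (C-ii')$\Rightarrow$(C-ii) I would use $\bA_{i'}\bSig_i\bA_{i'}\preceq\lambda_{\max}(\bSig_i)\bA_{i'}^2\preceq\lambda_{\max}(\bSig_i)\lambda_{\max}(\bA_{i'})\bA_{i'}$ (the last step since $\bA_{i'}$ and $\lambda_{\max}(\bA_{i'})\bI_p-\bA_{i'}$ are commuting positive semidefinite matrices) to get $\Delta_{iA}=\bmu_{12}^T\bA_{i'}\bSig_i\bA_{i'}\bmu_{12}\le\lambda_{\max}(\bSig_i)\lambda_{\max}(\bA_{i'})\,\bmu_{12}^T\bA_{i'}\bmu_{12}\le C\lambda_{\max}\Delta_i$, so $\Delta_{iA}/\Delta_i^2\le C\lambda_{\max}/\Delta_i\le C\lambda_{\max}/\Delta_{\min}=o(1)$. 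For (C-ii')$\Rightarrow$(C-iii) under the extra hypotheses I would set $\bD=\bA_1-\bA_2$ (diagonal) and let $a_{ij}$ be the $j$-th diagonal entry of $\bA_i$, so $m\le a_{ij}\le M$ for fixed $0<m\le M<\infty$ and all large $p$. The numerator I would bound by $\tr[\{\bSig_i\bD\}^2]=||\bSig_i^{1/2}\bD\bSig_i^{1/2}||_F^2\le\lambda_{\max}(\bSig_i)\tr(\bSig_i\bD^2)\le\lambda_{\max}\bigl(\max_{i,j}\sigma_{i(j)}\bigr)||\bD||_F^2$, with $\max_{i,j}\sigma_{i(j)}=O(1)$ by the standing assumptions. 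For the denominator, diagonality gives $\Delta_i\ge\tr(\bA_i^{-1}\bA_{i'})-p-\log|\bA_i^{-1}\bA_{i'}|=\sum_j\{a_{i'j}/a_{ij}-1-\log(a_{i'j}/a_{ij})\}$, and since each ratio $a_{i'j}/a_{ij}$ lies in the fixed compact set $[m/M,M/m]$, on which $x-1-\log x\ge c_0(x-1)^2$ for some $c_0>0$ (Taylor at $x=1$), this is $\ge c_0\sum_j(a_{i'j}-a_{ij})^2/a_{ij}^2\ge(c_0/M^2)||\bD||_F^2$, that is, $\Delta_i\ge c'||\bD||_F^2$ with $c'=c_0/M^2>0$. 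Then, when $\bD\neq\bze$, writing $\Delta_i^2\ge(c'||\bD||_F^2)\Delta_{\min}$ gives $\tr[\{\bSig_i\bD\}^2]/\Delta_i^2\le O(1)\,\lambda_{\max}/(c'\Delta_{\min})=o(1)$ by (C-ii'); the case $\bD=\bze$ is immediate.

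I expect the main obstacle to be this last implication, specifically the quadratic lower bound $\Delta_i\ge c'||\bA_1-\bA_2||_F^2$, as it is the only place where the stronger hypotheses are genuinely used. The two-sided eigenvalue bounds $\lambda(\bA_i)\in(0,\infty)$ are what keep the ratios $a_{i'j}/a_{ij}$ in a fixed compact subset of $(0,\infty)$ where $x-1-\log x$ is comparable to $(x-1)^2$ with a constant independent of $p$, and what permit the passage from $\sum_j(a_{i'j}/a_{ij}-1)^2$ to a multiple of $\sum_j(a_{i'j}-a_{ij})^2$; the diagonal structure of the $\bA_i$'s is what makes the log-determinant term split as the coordinatewise sum used above. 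The other two implications are routine once facts (a)--(c) and the defining relation of the $\bA_i$'s are in hand.
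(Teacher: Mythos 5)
Your proof is correct and follows essentially the same route as the paper's: operator bounds via $\limsup_p\lambda_{\max}(\bA_i)<\infty$ plus Cauchy--Schwarz for the cross-trace term, the lower bound $\Delta_i\ge\bmu_{12}^T\bA_{i'}\bmu_{12}$ from the defining relation, and for the diagonal case the Taylor-type inequality $x-1-\log x\gtrsim(x-1)^2$ on a compact set yielding $\Delta_i\gtrsim\|\bA_1-\bA_2\|_F^2$ and hence $\tr[\{\bSig_i(\bA_1-\bA_2)\}^2]/\Delta_i^2=O(\lambda_{\max}/\Delta_{\min})$. The only cosmetic difference is that you bound $\Delta_{iA}$ through $\bA_{i'}\bSig_i\bA_{i'}\preceq\lambda_{\max}(\bSig_i)\lambda_{\max}(\bA_{i'})\bA_{i'}$ rather than via $\lambda_{\max}(\bA_{i'}^{1/2}\bSig_i\bA_{i'}^{1/2})$, which is the same estimate.
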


From the fact that $\lambda_{\max}(\bSig_i) \le  \tr(\bSig_i^2)^{1/2}$ for $i=1,2$, we note that (C-i') and (C-ii') hold even when $n_{\min}$ is fixed under 
\begin{align}
\tr(\bSig_{\max}^2)/\Delta_{\min}^2
\to 0 \ \  
\mbox{as $p\to \infty$.} \label{2.3}
\end{align}
\subsection{Consistency property for (I) to (IV)}
As mentioned in Section 1, four typical $\bA_i$s were specifically selected.
For (I), by putting $\bA_i=\bI_p, i=1,2$, (\ref{1.2}) and (\ref{1.4}) are given as
\begin{align}
&W_i(\bI_p)=||\bx_0-\overline{\bx}_{in_i}||^2-\tr(\bS_{in_i})/n_i 
\label{2.4} \\
&\mbox{and} \ \Delta_{1}=\Delta_{2}=||\bmu_{12}||^2 \ \ (\mbox{hereafter called }\Delta_{(I)}). \notag
\end{align}
For (II), by putting $\bA_i=\{p/\tr(\bSig_i)\} \bI_p, i=1,2$, they are given as
\begin{align}
&W_i(\{p/\tr(\bSig_i)\} \bI_p)=\frac{p||\bx_0-\overline{\bx}_{in_i}||^2}{\tr(\bSig_i)}-\frac{p\tr(\bS_{in_i})}{n_i\tr(\bSig_i)}+p\log\{\tr(\bSig_i)/p \} 
\label{2.5} \\ 
&\mbox{and}\ \Delta_{i}=\frac{p\Delta_{(I)}}{\tr(\bSig_{i'})}+\frac{p \tr(\bSig_i)}{\tr(\bSig_{i'})}-p+p\log\Big\{ \frac{\tr(\bSig_{i'})}{\tr(\bSig_i)} \Big\} \ \ (\mbox{hereafter called }\Delta_{i(II)}). \notag
\end{align}
For (III), by putting $\bA_i= \bSig_{i(d)}^{-1}, i=1,2$, they are given as
\begin{align}
&W_i(\bSig_{i(d)}^{-1})=\sum_{j=1}^p\Big(\frac{(x_{0j}-\overline{x}_{ijn_i})^2}{\sigma_{i(j)}}-\frac{s_{in_i(j)} }{n_i\sigma_{i(j)}}+\log{\sigma_{i(j)} }\Big)
\label{2.6} \\
&\mbox{and}\ \Delta_{i}=\sum_{j=1}^p\Big\{ \frac{\mu_{12j}^2}{\sigma_{i'(j)}}+ \frac{\sigma_{i(j)}}{\sigma_{i'(j)}}-1+\log\Big(\frac{\sigma_{i'(j)}}{\sigma_{i(j)}} \Big) \Big\} \ \ (\mbox{hereafter called }\Delta_{i(III)}). \notag
\end{align} 
For (IV), by putting $\bA_i=\bSig_{i}^{-1}, i=1,2$, they are given as 
\begin{align}
&W_i(\bSig_{i}^{-1})=(\bx_0-\overline{\bx}_{in_i})^T\bSig_i^{-1}(\bx_0-\overline{\bx}_{in_i})-\frac{\tr(\bS_{in_i}\bSig_i^{-1})}{n_i}+\sum_{j=1}^p\log{\lambda_{ij}}
\label{2.7} \\
&\mbox{and}\ \Delta_{i}= \bmu_{12}^T\bSig_{i'}^{-1}\bmu_{12}+\tr(\bSig_i \bSig_{i'}^{-1})-p+\sum_{j=1}^p\log\Big(\frac{\lambda_{i'j}}{\lambda_{ij}} \Big) \ \
(\mbox{hereafter called }\Delta_{i(IV)}). \notag
\end{align}
We first consider the classifiers by (\ref{2.4}) to (\ref{2.6}). 
From Theorem \ref{thm2.1} and Proposition \ref{pro2.1}, we have the following result. 
\begin{cor}
\label{cor2.1}
Assume (C-i') and (C-ii'). 
Then, for the classification rule by (\ref{1.3}) with (\ref{2.4}), we have (\ref{2.2}). 
Furthermore, for the classification rule by (\ref{1.3}) with (\ref{2.5}) or (\ref{2.6}), we have (\ref{2.2}) under (A-i).
\end{cor}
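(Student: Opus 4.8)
The plan is to apply Theorem \ref{thm2.1} and Proposition \ref{pro2.1} to each of the three choices $\bA_i=\bI_p$, $\bA_i=\{p/\tr(\bSig_i)\}\bI_p$ and $\bA_i=\bSig_{i(d)}^{-1}$, which are exactly cases (I), (II) and (III). For all three, $\bA_i$ is a diagonal matrix with $\lambda(\bA_i)\in(0,\infty)$ as $p\to\infty$: for (I) this is trivial; for (II) it follows from the standing assumption $\tr(\bSig_i)/p\in(0,\infty)$; for (III) it follows from $\sigma_{i(j)}\in(0,\infty)$ as $p\to\infty$ for all $i,j$. Hence $\limsup_{p\to\infty}\lambda_{\max}(\bA_i)<\infty$, and Proposition \ref{pro2.1} tells us that (C-i$'$) implies (C-i), (C-ii$'$) implies (C-ii), and, because each $\bA_i$ is diagonal with eigenvalues bounded away from $0$ and $\infty$, (C-ii$'$) also implies (C-iii).

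First I would treat (I). Since $\bA_1=\bA_2=\bI_p$ the rule is linear, so by the remark following Theorem \ref{thm2.1} one needs only (C-i) and (C-ii), and no distributional assumption beyond the moment conditions already built into the setup; thus (C-i$'$) and (C-ii$'$) alone yield $e(i)\to0$, i.e. (\ref{2.2}), for the classifier (\ref{2.4}). I should double-check here that (C-i$'$) indeed reduces to the first half of (C-i) in this case — with $\bA_i=\bI_p$ one has $\tr\{(\bSig_i\bA_i)^2\}=\tr(\bSig_i^2)\le\tr(\bSig_{\max}^2)$ and $\Delta_i=\Delta_{(I)}=\|\bmu_{12}\|^2\ge\Delta_{\min}$ — and similarly that the second half of (C-i) follows since with $\bA_{i'}=\bI_p$ the numerator is $\tr(\bSig_i\bSig_{i'})+\tr(\bSig_{i'}^2)/n_{i'}\le \tr(\bSig_{\max}^2)(1+1/n_{i'})$, which is absorbed into (C-i$'$) up to the constant factor. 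For (C-ii), $\Delta_{iA}=\bmu_{12}^T\bSig_i\bmu_{12}\le\lambda_{\max}\|\bmu_{12}\|^2=\lambda_{\max}\Delta_i$, so $\Delta_{iA}/\Delta_i^2\le\lambda_{\max}/\Delta_{\min}=o(1)$ by (C-ii$'$).

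For (II) and (III) the matrices $\bA_1$ and $\bA_2$ need not be equal (they differ whenever $\tr(\bSig_1)\neq\tr(\bSig_2)$, resp. $\bSig_{1(d)}\neq\bSig_{2(d)}$), so the rule is genuinely quadratic and Theorem \ref{thm2.1} is invoked in full, which is why assumption (A-i) is required in the statement. It remains to verify that (C-i$'$) and (C-ii$'$) imply (C-i)–(C-iii) in these two cases; but this is precisely the content of Proposition \ref{pro2.1}, once we have observed (as above) that $\limsup_p\lambda_{\max}(\bA_i)<\infty$, that $\lambda(\bA_i)\in(0,\infty)$, and that the $\bA_i$ are diagonal — the three hypotheses of that proposition. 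Applying Theorem \ref{thm2.1} then gives $\{W_{i'}(\bA_{i'})-W_i(\bA_i)\}/\Delta_i=1+o_P(1)$ when $\bx_0\in\pi_i$, and hence (\ref{2.2}) for the classifiers (\ref{2.5}) and (\ref{2.6}). The only mild subtlety — and the step I would be most careful about — is the bookkeeping in case (III): checking that the coordinatewise quantities $\tr\{(\bSig_i\bSig_{i(d)}^{-1})^2\}$, $\Delta_{iA}=\sum_j \mu_{12j}^2\,\sigma_{i(j)}/\sigma_{i'(j)}^2$ and $\tr[\{\bSig_i(\bSig_{1(d)}^{-1}-\bSig_{2(d)}^{-1})\}^2]$ are all controlled by $\tr(\bSig_{\max}^2)$ and $\lambda_{\max}$ via the bounds $c\le\sigma_{i(j)}\le C$; this is routine given the uniform positivity and boundedness of the diagonal entries, and is exactly what Proposition \ref{pro2.1} has already packaged, so no further computation is needed here.
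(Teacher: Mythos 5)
Your proof is correct and follows exactly the route the paper takes: the paper's own proof of Corollary~\ref{cor2.1} is a one-line appeal to Theorem~\ref{thm2.1} and Proposition~\ref{pro2.1}, and you have simply filled in the verifications (diagonality of the $\bA_i$, $\lambda(\bA_i)\in(0,\infty)$, and the reduction of (C-i$'$), (C-ii$'$) to (C-i)--(C-iii)), together with the observation that case (I) is linear so that (A-i) and (C-iii) can be dropped via the remark after Theorem~\ref{thm2.1}. One cosmetic slip in your aside on case (III): $\Delta_{iA}=\bmu_{12}^T\bSig_{i'(d)}^{-1}\bSig_i\bSig_{i'(d)}^{-1}\bmu_{12}$ is not $\sum_j\mu_{12j}^2\sigma_{i(j)}/\sigma_{i'(j)}^2$ because $\bSig_i$ need not be diagonal, but this does not affect the argument since you (and Proposition~\ref{pro2.1}) control $\Delta_{iA}$ by $\Delta_i\lambda_{\max}(\bA_{i'}^{1/2}\bSig_i\bA_{i'}^{1/2})$ rather than by that coordinatewise formula.
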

We note that the classifier by (\ref{2.4}) is equivalent to the distance-based classifier by \cite{Aoshima:2014}.
Hereafter, we call the classifier by (\ref{2.4}) the ``distance-based discriminant analysis (DBDA)".
From Corollary \ref{cor2.1}, under (\ref{2.3}), the classification rule by (\ref{1.3}) with (\ref{2.4}), (\ref{2.5}) or (\ref{2.6}) has (\ref{2.2}) even when $n_i$s are fixed.
Note that DBDA has the consistency property without (A-i), so that DBDA is quite robust for non-Gaussian cases.
See \cite{Aoshima:2014} for details. 
When $\bmu_1=\bmu_2$, DBDA does not satisfy (C-i') and (C-ii'), on the other hand, the classifier by (\ref{2.5}) or (\ref{2.6}) still satisfies them.

Now, we consider the following condition for $\bSig_i, i=1,2$: 
\begin{equation}
\tr(\bSig_i^2)/\tr(\bSig_i)^2
\to 0\ \mbox{as $p\to \infty$}. 
\label{2.8}
\end{equation}
We note that $\tr(\bSig_i^2)/\tr(\bSig_i)^2$ is a measure of sphericity. 
Also, note that (\ref{2.8}) is equivalent to the condition that ``$\lambda_{\max}(\bSig_i)/\tr(\bSigma_i)\to 0$ as $p\to \infty$". 
Under (A-i) and (\ref{2.8}), from the fact that $\Var(||\bx_0-\bmu_i||^2)=O\{\tr(\bSig_i^2)\}$ when $\bx_0 \in \pi_i$, we have that as $p\to \infty$ 
$$
||\bx_0-\bmu_i||=\tr(\bSig_i)^{1/2}\{1+o_P(1)\} \ \mbox{when $\bx_0 \in \pi_i$}.
$$
Thus the centroid data lies near the surface of an expanding sphere. 
See \cite{Hall:2005} for details of the geometric representation. 
We emphasize that the classifier by (\ref{2.5}) draws information about heteroscedasticity thorough the geometric representation having different radii, $\tr(\bSig_i)^{1/2}$s, of expanding two spheres. 
Note that $\tr(\bSig_i^2)=o(p^2)$ under (\ref{2.8}). 
Hence, for the classifier by (\ref{2.5}), (\ref{2.3}) holds under (\ref{2.8}) and $\liminf_{p\to \infty} \Delta_{\min(II)}/p>0$, where $\Delta_{\min(II)}=\min\{\Delta_{1(II)},\Delta_{2(II)} \}$. 
Note that $\Delta_{\min(II)}>0$ when $\tr(\bSig_1)\neq \tr(\bSig_2)$ in view of Proposition \ref{pro1.1}. 
If one can assume that $\liminf_{p\to \infty}|\tr(\bSig_1)/\tr(\bSig_2)-1|>0$, it follows $\liminf_{p\to \infty} \Delta_{\min(II)}$
$/p>0$, so that (\ref{2.3}) holds under (\ref{2.8}).
Hence, for the classification rule by (\ref{1.3}) with (\ref{2.5}), we have (\ref{2.2}) even when 
$\bmu_1=\bmu_2$ and $n_i$s are fixed.
See (II) in Figure~\ref{F2}.
The accuracy becomes higher as the difference between $\tr(\bSig_i)$s grows. 

Similarly, for the classifier by (\ref{2.6}), it follows that (\ref{2.3}) holds under (\ref{2.8}) and $\liminf_{p\to \infty}$
$ \Delta_{\min(III)}/p>0$, where $\Delta_{\min(III)}=\min\{\Delta_{1(III)},\Delta_{2(III)} \}$. 
If one can assume that $\liminf_{p\to \infty}$
$\sum_{j=1}^p|\sigma_{1(j)}/\sigma_{2(j)}-1|/p>0$, it follows $\liminf_{p\to \infty} \Delta_{\min(III)}/p>0$, so that the classification rule by (\ref{1.3}) with (\ref{2.6}) has (\ref{2.2}) even when $\bmu_1=\bmu_2$ and $n_i$s are fixed. 
The classifier by (\ref{2.6}) draws information about heteroscedasticity via the difference of diagonal elements between the two covariance matrices. 
The accuracy becomes higher as the difference of those diagonal elements grows.
See (III) in Figure~\ref{F2}. 

Next, we consider the classifier by (\ref{2.7}). 
From Theorem \ref{thm2.1} and Proposition \ref{pro2.1}, we have the following result. 
\begin{cor}
\label{cor2.2}
Assume (A-i).
Assume also $\liminf_{p\to \infty}\lambda_{\min}(\bSig_i)>0$ for $i=1,2$. 
Then, for the classification rule by (\ref{1.3}) with (\ref{2.7}), we have (\ref{2.2}) under (C-i'), (C-ii') and the condition that $\tr\{(\bI_p-\bSig_i\bSig_{i'}^{-1})^2\}=o(\Delta_{\min(IV)}^2)$ for $i=1,2\ (i'\neq i)$, where $\Delta_{\min(IV)}=\min\{\Delta_{1(IV)},\Delta_{2(IV)} \}$.
\end{cor}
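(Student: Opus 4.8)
Corollary \ref{cor2.2} is nothing but the specialization of Theorem \ref{thm2.1} to the choice $\bA_i=\bSig_i^{-1}$, so the plan is simply to check that the hypotheses listed in the corollary deliver conditions (C-i)--(C-iii); since (A-i) is assumed outright, the consistency statement (\ref{2.2}) then follows verbatim from Theorem \ref{thm2.1}. As a preliminary bookkeeping step I would record that, for $\bA_i=\bSig_i^{-1}$, the quantity $\Delta_i$ of (\ref{1.4}) coincides with $\Delta_{i(IV)}$ of (\ref{2.7}): substitute $\bA_i=\bSig_i^{-1}$ into (\ref{1.4}) and use $\tr\{\bSig_i(\bSig_{i'}^{-1}-\bSig_i^{-1})\}=\tr(\bSig_i\bSig_{i'}^{-1})-p$ together with $\log|\bSig_{i'}\bSig_i^{-1}|=\sum_{j=1}^p\log(\lambda_{i'j}/\lambda_{ij})$.

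For (C-i) and (C-ii) I would invoke Proposition \ref{pro2.1}. Its hypothesis is that $\limsup_{p\to\infty}\lambda_{\max}(\bA_i)<\infty$, and here $\lambda_{\max}(\bA_i)=\lambda_{\max}(\bSig_i^{-1})=1/\lambda_{\min}(\bSig_i)$, so the corollary's assumption $\liminf_{p\to\infty}\lambda_{\min}(\bSig_i)>0$ is exactly what is needed. Hence (C-i') and (C-ii'), both of which are assumed in the corollary, imply (C-i) and (C-ii). Note that the diagonality clause of Proposition \ref{pro2.1} is \emph{not} available in case (IV), since $\bSig_i^{-1}$ need not be diagonal; this is precisely why the corollary must carry an extra hypothesis covering (C-iii). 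For (C-iii) I would compute $\bSig_i(\bA_1-\bA_2)=\bSig_i(\bSig_1^{-1}-\bSig_2^{-1})$, which equals $\bI_p-\bSig_1\bSig_2^{-1}$ for $i=1$ and $-(\bI_p-\bSig_2\bSig_1^{-1})$ for $i=2$; in both cases the sign is irrelevant after squaring, so $\tr[\{\bSig_i(\bA_1-\bA_2)\}^2]=\tr\{(\bI_p-\bSig_i\bSig_{i'}^{-1})^2\}$. By the remaining hypothesis this is $o(\Delta_{\min(IV)}^2)$, and since $\Delta_{\min(IV)}=\min\{\Delta_{1(IV)},\Delta_{2(IV)}\}\le\Delta_{i(IV)}=\Delta_i$ (which is positive by Proposition \ref{pro1.1}(ii) whenever $\bSig_1\ne\bSig_2$, and is in any event forced to be positive, indeed divergent, by (C-ii')), it is also $o(\Delta_i^2)$, which is (C-iii).

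With (C-i)--(C-iii) established and (A-i) assumed, Theorem \ref{thm2.1} yields $\{W_{i'}(\bSig_{i'}^{-1})-W_i(\bSig_i^{-1})\}/\Delta_i=1+o_P(1)$ under ($\star$) when $\bx_0\in\pi_i$, hence $e(i)\to0$ for $i=1,2$, i.e.\ (\ref{2.2}). I do not expect a genuine obstacle here: the analytic content is entirely contained in Theorem \ref{thm2.1} and Proposition \ref{pro2.1}, and the only step requiring a little care is recognizing that the corollary's ad hoc condition $\tr\{(\bI_p-\bSig_i\bSig_{i'}^{-1})^2\}=o(\Delta_{\min(IV)}^2)$ is exactly (C-iii), via the matrix identity above and the transfer of the little-$o$ from $\Delta_{\min(IV)}$ to each $\Delta_i$.
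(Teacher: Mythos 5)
Your argument is correct and is precisely the route the paper takes: its proof of Corollary \ref{cor2.2} is the one-line appeal to Theorem \ref{thm2.1} and Proposition \ref{pro2.1}, and you have correctly supplied the details — $\lambda_{\max}(\bSig_i^{-1})=1/\lambda_{\min}(\bSig_i)$ bounded so that (C-i'), (C-ii') give (C-i), (C-ii), and the identity $\tr[\{\bSig_i(\bSig_1^{-1}-\bSig_2^{-1})\}^2]=\tr\{(\bI_p-\bSig_i\bSig_{i'}^{-1})^2\}$ showing the extra hypothesis is exactly (C-iii).
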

When $\bSig_1 \neq \bSig_2$, note that $\Delta_{\min(IV)}>0$ in view of Proposition \ref{pro1.1}.
Then, we have the following result.
\begin{pro}
\label{pro2.2}
When $\liminf_{p\to \infty}|\tr(\bSig_i\bSig_{i'}^{-1})/p-1|>0$ or $\liminf_{p\to \infty}\sum_{j=1}^p|\lambda_{ij}/\lambda_{i'j}$
$-1|/p>0$ $(i\neq i')$, it follows that $\liminf_{p\to \infty} \Delta_{i(IV)}/p>0$.
\end{pro}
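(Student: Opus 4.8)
\emph{Proof plan.} The term $\bmu_{12}^{T}\bSig_{i'}^{-1}\bmu_{12}$ in $\Delta_{i(IV)}$ is nonnegative, so it suffices to bound the remaining part from below. Set $g(t)=t-1-\log t$; then $g\ge 0$, $g$ is convex with $g(1)=0$, strictly decreasing on $(0,1]$ and strictly increasing on $[1,\infty)$, and $g(t)/t$ is increasing on $(1,\infty)$ (since $\{g(t)/t\}'=\log t/t^{2}>0$ there). Let $\nu_{1},\dots,\nu_{p}>0$ be the eigenvalues of $\bSig_{i}\bSig_{i'}^{-1}$. Then $\sum_{j}\nu_{j}=\tr(\bSig_{i}\bSig_{i'}^{-1})$ and $\sum_{j}\log\nu_{j}=\log|\bSig_{i}\bSig_{i'}^{-1}|=\sum_{j}\log(\lambda_{ij}/\lambda_{i'j})$, so
\[
\Delta_{i(IV)}\ \ge\ \tr(\bSig_{i}\bSig_{i'}^{-1})-p+\sum_{j=1}^{p}\log(\lambda_{i'j}/\lambda_{ij})\ =\ \sum_{j=1}^{p}g(\nu_{j}).
\]
Moreover, applying the trace inequality $\tr(\bM\bN)\ge\sum_{j}m_{j}n_{p+1-j}$, valid for positive definite $\bM,\bN$ with decreasingly ordered eigenvalues $\{m_{j}\}$ and $\{n_{j}\}$, to $\bM=\bSig_{i}$ and $\bN=\bSig_{i'}^{-1}$ gives $\tr(\bSig_{i}\bSig_{i'}^{-1})\ge\sum_{j}\lambda_{ij}/\lambda_{i'j}$; writing $r_{j}=\lambda_{ij}/\lambda_{i'j}$ and using $\sum_{j}\log\nu_{j}=\sum_{j}\log r_{j}$,
\[
\Delta_{i(IV)}\ \ge\ \sum_{j=1}^{p}r_{j}-p-\sum_{j=1}^{p}\log r_{j}\ =\ \sum_{j=1}^{p}g(r_{j}).
\]

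The crux is the elementary claim: for every $\delta>0$ there is $c(\delta)>0$ such that, for positive reals $t_{1},\dots,t_{p}$, the bound $p^{-1}\sum_{j}|t_{j}-1|\ge\delta$ forces $p^{-1}\sum_{j}g(t_{j})\ge c(\delta)$. I would prove this by splitting the indices into $J_{1}=\{j:|t_{j}-1|\ge\delta/2\}$ and its complement: the complement contributes $<\delta/2$ to $p^{-1}\sum_{j}|t_{j}-1|$, hence $p^{-1}\sum_{j\in J_{1}}|t_{j}-1|\ge\delta/2$; and on $J_{1}$ one has a pointwise bound $g(t_{j})\ge c''(\delta)\,|t_{j}-1|$, obtained by treating $t_{j}\le 2$ (where $g(t_{j})\ge\min\{g(1-\delta/2),g(1+\delta/2)\}>0$ while $|t_{j}-1|\le 1$) and $t_{j}>2$ (where $g(t_{j})\ge\{g(2)/2\}\,t_{j}\ge\{g(2)/2\}\,|t_{j}-1|$ by monotonicity of $g(t)/t$) separately. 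Summing over $j\in J_{1}$ yields $p^{-1}\sum_{j}g(t_{j})\ge c''(\delta)\,\delta/2$.

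To conclude: in the first case the hypothesis gives $\liminf_{p}|p^{-1}\sum_{j}\nu_{j}-1|>0$, hence $\liminf_{p}p^{-1}\sum_{j}|\nu_{j}-1|>0$, and the claim applied with $t_{j}=\nu_{j}$ gives $\liminf_{p}\Delta_{i(IV)}/p>0$; in the second case the hypothesis is literally $\liminf_{p}p^{-1}\sum_{j}|r_{j}-1|>0$, and the claim with $t_{j}=r_{j}$ finishes. I expect the main obstacle to be the second case: one must (a) invoke the trace inequality in the correct direction, so that $\tr(\bSig_{i}\bSig_{i'}^{-1})$ is bounded \emph{below} by $\sum_{j}\lambda_{ij}/\lambda_{i'j}$, and (b) cope with the fact that $p^{-1}\sum_{j}(r_{j}-1)$ may be small while $p^{-1}\sum_{j}|r_{j}-1|$ is not, so that convexity plus Jensen (which already suffices in the first case) is not enough — this cancellation is exactly what the index split together with the pointwise bound $g(t)\ge c''|t-1|$ on $J_{1}$ is designed to absorb.
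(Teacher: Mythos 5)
Your proof is correct. Its backbone---lower-bounding $\Delta_{i(IV)}$ by $\sum_{j}g(t_j)$ with $g(t)=t-1-\log t$, invoking the rearrangement trace inequality $\tr(\bSig_i\bSig_{i'}^{-1})\ge\sum_{j}\lambda_{ij}/\lambda_{i'j}$ (which is exactly the paper's display (B.1)), and converting an $\ell_1$ deviation of the $t_j$ from $1$ into a lower bound on $\sum_j g(t_j)$ via a pointwise linearization $g(t)\ge c''|t-1|$ away from $t=1$---coincides with the paper's treatment of the second hypothesis. Where you genuinely diverge is the trace hypothesis: you pass to the eigenvalues $\nu_j$ of $\bSig_i\bSig_{i'}^{-1}$ and use $\sum_j\log\nu_j=\sum_j\log(\lambda_{ij}/\lambda_{i'j})$ to write $\tr(\bSig_i\bSig_{i'}^{-1})-p+\log|\bSig_{i'}\bSig_i^{-1}|=\sum_j g(\nu_j)$, after which $|\tr(\bSig_i\bSig_{i'}^{-1})/p-1|\le p^{-1}\sum_j|\nu_j-1|$ closes the argument in one step. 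The paper instead runs a dichotomy on whether $\tr(\bSig_i\bSig_{i'}^{-1})/(\sum_j\lambda_{ij}/\lambda_{i'j})$ tends to $1$ or stays bounded away above $1$, using $\Delta_{i(IV)}\ge\tr(\bSig_i\bSig_{i'}^{-1})-\sum_j\lambda_{ij}/\lambda_{i'j}$ in the second branch; your route avoids that case split (whose two branches do not obviously exhaust all subsequential behaviours without additional words) and is cleaner. You also state the elementary inequality with a constant $c(\delta)$ uniform in $j$ and $p$, which is more careful than the paper's per-$j$ constants $c_{1j},c_{2j}$; that uniformity is precisely what the final $\liminf$ assertion requires.
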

Note that $\tr\{(\bI_p-\bSig_i\bSig_{i'}^{-1})^2\}\le p+\tr\{(\bSig_i\bSig_{i'}^{-1})^2\}=p+O\{\tr(\bSig_i^2) \}=o(p^2)$ under (\ref{2.8}) and $\liminf_{p\to \infty}\lambda_{\min}(\bSig_{i'}) >0$. 
Hence, from Corollary \ref{cor2.2}, for the classification rule by (\ref{1.3}) with (\ref{2.7}), we have (\ref{2.2}) under (A-i), (\ref{2.8}), $\liminf_{p\to \infty} \Delta_{\min(IV)}/p>0$ and $\liminf_{p\to \infty}\lambda_{\min}(\bSig_i)>0$ for $i=1,2$.
Thus from Proposition \ref{pro2.2}, the accuracy becomes higher as the difference of eigenvalues or eigenvectors between the two covariance matrices grows.
See (IV) in Figure~\ref{F2}. 
\section{Asymptotic normality of the quadratic classifier}
In this section, we discuss the asymptotic normality of quadratic classifiers given by (\ref{1.2}). 
We further discuss Bayes error rates for high-dimensional data.
\subsection{Preliminary}
Let 
$$
\delta_{i}=2
\Big\{ \frac{\tr\{(\bSig_i\bA_{i} )^2 \}}{n_i}+\frac{\tr(\bSig_i\bA_{i'} \bSig_{i'} \bA_{i'})}{n_{i'}}
+\Delta_{iA} \Big\}^{1/2} \ \mbox{for $i=1,2\ (i'\neq i)$.}
$$
Note that 
$\delta_{i}^2=
\Var[ 2(\bx_0-\bmu_i)^T \{\bA_i(\overline{\bx}_{in_i}-\bmu_i)-\bA_{i'}(\overline{\bx}_{i'n_{i'}}-\bmu_{i'}+
(-1)^i\bmu_{12})\} ]$ for $i=1,2\ (i' \neq i)$.
Let $m=\min\{p,n_{\min} \}$. 
We assume the following conditions when $m\to \infty$ for $i=1,2\ (i' \neq i)$: 
\begin{description}
 \item[(C-iv)]\ $\displaystyle \frac{\bmu_{12}^T\bA_{i'} \bSig_{i'} \bA_{i'}\bmu_{12}+\tr\{(\bSig_{i'}\bA_{i'})^2\}/n_{i'} }{n_{i'} \delta_{i}^2 }=o(1)$, \ \ 
$\displaystyle \frac{\tr\{ (\bSig_i\bA_{i})^4 \} }{n_i^2 \delta_{i}^4 }=o(1) $ \ and \\ 
$\displaystyle \frac{ \tr\{(\bSig_i\bA_{i'} \bSig_{i'}\bA_{i'})^2 \}}{n_{i'}^2 \delta_{i}^4 }=o(1) $;
 \item[(C-v)]\ $\displaystyle \frac{\tr[\{\bSig_i(\bA_{1}-\bA_{2})\}^2] }{\delta_{i}^2 }= o(1)$; 
  \ and \ {\bf (C-vi)}\ $\displaystyle \frac{\Delta_{iA}}{\delta_{i}^2 }=o(1)$. 
\end{description}
From (\ref{A.6}) in Appendix, under (A-i), (C-iv) and (C-v), it holds that 
\begin{align}
W_{i'}(\bA_{i'})-W_i(\bA_i)-\Delta_{i} =&
2(\bx_0-\bmu_i)^T\Big\{\bA_i(\overline{\bx}_{in_i}-\bmu_i) \notag \\  
&-\bA_{i'}(\overline{\bx}_{i'n_{i'}}-\bmu_{i'}+(-1)^i\bmu_{12}) \Big\}+o_P(\delta_{i}) \notag
\end{align}
as $m\to \infty$ when $\bx_0\in\pi_i$ for $i=1,2\ (i'\neq i)$.
Under (C-vi), it holds that 
$(\bx_0-\bmu_i)^T\bA_{i'}\bmu_{12}=o_P(\delta_{i})$ as $m\to \infty$ when $\bx_0\in\pi_i$ for $i=1,2\ (i' \neq i)$.
Then, we claim the asymptotic normality of (\ref{1.2}) under (A-i) as follows:
\begin{thm}
\label{thm3.1}
Assume (A-i). 
Assume also (C-iv) to (C-vi). 
Then, we have that 
\begin{align}
&\frac{W_{i'}(\bA_{i'})-W_i(\bA_i)-\Delta_{i}}{\delta_{i} }\Rightarrow N(0,1)\ \ \mbox{as $m\to \infty$}  \label{3.1}\\
&\mbox{when $\bx_0\in\pi_i$ for $i=1,2\ (i' \neq i)$}, \notag
\end{align}
where ``$\Rightarrow$" denotes the convergence in distribution and $N(0,1)$ denotes a random variable distributed as the standard normal distribution.
Furthermore, for the classification rule by (\ref{1.3}) with (\ref{1.2}), it holds that 
\begin{align}
\label{3.2}
e(i)= \Phi\Big( \frac{-\Delta_{i}}{\delta_{i}} \Big)+o(1) \ \ \mbox{as $m\to \infty$ for $i=1,2$}.
\end{align}
\end{thm}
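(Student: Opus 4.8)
\emph{Plan of proof.} The strategy is to reduce the assertion to a central limit theorem for a single bilinear statistic, prove that CLT by a martingale argument, and then extract the error rates using the uniform form of the weak convergence. First, combining (\ref{A.6}) --- valid under (A-i), (C-iv) and (C-v) --- with the consequence of (C-vi) noted in the Preliminary, for $\bx_0\in\pi_i$ one has, as $m\to\infty$,
\begin{align*}
W_{i'}(\bA_{i'})-W_i(\bA_i)-\Delta_{i}&=T_i+o_P(\delta_{i}),\\
\mbox{where}\quad T_i&:=2(\bx_0-\bmu_i)^T\{\bA_i(\overline{\bx}_{in_i}-\bmu_i)-\bA_{i'}(\overline{\bx}_{i'n_{i'}}-\bmu_{i'})\}.
\end{align*}
By Slutsky's theorem it then suffices to prove $T_i/\delta_{i}\Rightarrow N(0,1)$ as $m\to\infty$. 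Writing $\bz_0=\bGamma_i\by_0$, $\bz_{ik}=\bGamma_i\by_{ik}$ and $\bz_{i'l}=\bGamma_{i'}\by_{i'l}$ for the centred vectors, which are mutually independent with mean $\bze$, we have $T_i=\frac{2}{n_i}\sum_{k=1}^{n_i}\bz_0^T\bA_i\bz_{ik}-\frac{2}{n_{i'}}\sum_{l=1}^{n_{i'}}\bz_0^T\bA_{i'}\bz_{i'l}$.

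Next I would exhibit $T_i$ as a normalized sum of a martingale difference array. Let $\mathcal{F}_0=\sigma(\bz_0)$, $\mathcal{F}_k=\sigma(\bz_0,\bz_{i1},\dots,\bz_{ik})$ for $1\le k\le n_i$, and $\mathcal{F}_{n_i+l}=\sigma(\mathcal{F}_{n_i},\bz_{i'1},\dots,\bz_{i'l})$ for $1\le l\le n_{i'}$, and set $D_k=\frac{2}{n_i}\bz_0^T\bA_i\bz_{ik}$ for $k\le n_i$ and $D_{n_i+l}=-\frac{2}{n_{i'}}\bz_0^T\bA_{i'}\bz_{i'l}$ for $1\le l\le n_{i'}$. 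Since $\bz_0$ is $\mathcal{F}_0$-measurable while each new $\bz_{ik}$ or $\bz_{i'l}$ is centred and independent of the preceding $\sigma$-field, $\{D_k\}$ is a martingale difference array with $n_i+n_{i'}\ge n_{\min}\to\infty$ terms and $T_i=\sum_kD_k$. The plan is to apply the martingale central limit theorem to $\sum_kD_k/\delta_{i}$; its hypotheses are (a) $\sum_kE(D_k^2\mid\mathcal{F}_{k-1})/\delta_{i}^2\to1$ in probability, and (b) a conditional Lindeberg condition, which I would deduce from the Lyapunov bound $\sum_kE(D_k^4)/\delta_{i}^4\to0$.

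For (a): from $\Var(\bz_{ik})=\bSig_i$ and $\Var(\bz_{i'l})=\bSig_{i'}$ one computes $V^2:=\sum_kE(D_k^2\mid\mathcal{F}_{k-1})=\frac{4}{n_i}\bz_0^T\bA_i\bSig_i\bA_i\bz_0+\frac{4}{n_{i'}}\bz_0^T\bA_{i'}\bSig_{i'}\bA_{i'}\bz_0$, so, since $E(\bz_0\bz_0^T)=\bSig_i$, $E(V^2)=\delta_{i}^2-4\Delta_{iA}=\delta_{i}^2\{1+o(1)\}$ by (C-vi), while the standard variance estimate for quadratic forms under (\ref{2.1}), $\Var(\by_0^T\bM\by_0)=O(\tr(\bM^2))$ (which uses $\sum_jm_{jj}^2\le\tr(\bM^2)$ and the bounded fourth moments), applied to the matrices above gives $\Var(V^2)=O\big(\tr\{(\bSig_i\bA_i)^4\}/n_i^2+\tr\{(\bSig_i\bA_{i'}\bSig_{i'}\bA_{i'})^2\}/n_{i'}^2\big)=o(\delta_{i}^4)$ by the second and third parts of (C-iv); hence $V^2/\delta_{i}^2=1+o_P(1)$. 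For (b): conditioning on $\bz_0$ and invoking the bounded fourth moments, one obtains $\sum_kE(D_k^4)=O\big([\tr\{(\bSig_i\bA_i)^2\}]^2/n_i^3+\tr\{(\bSig_i\bA_i)^4\}/n_i^3\big)$, together with an analogous term over the $i'$-block, and then dividing by $\delta_{i}^4$ --- using $\delta_{i}^2\ge4\tr\{(\bSig_i\bA_i)^2\}/n_i$, (C-iv) and $n_{\min}\to\infty$ --- yields $\sum_kE(D_k^4)/\delta_{i}^4\to0$. The martingale CLT then delivers $T_i/\delta_{i}\Rightarrow N(0,1)$, which with the reduction above proves (\ref{3.1}).

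Finally, for (\ref{3.2}): for $\bx_0\in\pi_i$, rule (\ref{1.3}) misclassifies exactly on an event of the form $\{W_{i'}(\bA_{i'})-W_i(\bA_i)\le0\}$ (the boundary having asymptotically vanishing probability), so $e(i)=P\big(\{W_{i'}(\bA_{i'})-W_i(\bA_i)-\Delta_{i}\}/\delta_{i}\le-\Delta_{i}/\delta_{i}\big)+o(1)$. By (\ref{3.1}) the displayed statistic converges weakly to $N(0,1)$ and $\Phi$ is continuous, so by P\'olya's theorem the convergence of the distribution functions is uniform; as $\Delta_{i}/\delta_{i}$ need not be a fixed sequence, I would invoke precisely this uniformity to obtain $|e(i)-\Phi(-\Delta_{i}/\delta_{i})|\le\sup_{c\in\mathbb{R}}\big|P\big(\{W_{i'}(\bA_{i'})-W_i(\bA_i)-\Delta_{i}\}/\delta_{i}\le c\big)-\Phi(c)\big|+o(1)=o(1)$, which is (\ref{3.2}). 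The main obstacle is the verification of (a) and (b) in the martingale step --- matching the precise form of (C-iv) to the fourth-moment and cross-term contributions of the quadratic forms in $\bz_0$, and noting that (C-vi) is genuinely needed because the $\Delta_{iA}$ summand of $\delta_{i}^2$ is only controlled, not removed; a secondary delicate point is the passage from (\ref{3.1}) to (\ref{3.2}), where the non-fixed ratio $\Delta_{i}/\delta_{i}$ forces using the uniform, rather than pointwise, form of the weak convergence.
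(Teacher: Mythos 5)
Your argument is correct and follows essentially the same route as the paper: reduce to the bilinear term $2(\bx_0-\bmu_i)^T\{\bA_i(\overline{\bx}_{in_i}-\bmu_i)-\bA_{i'}(\overline{\bx}_{i'n_{i'}}-\bmu_{i'})\}$ via the decomposition (\ref{A.6}) together with (C-vi), then apply a martingale CLT over the filtration generated by $\bx_0$ followed by the training observations, verifying the normalization and Lindeberg conditions from precisely the three parts of (C-iv). The only cosmetic differences are that you check the conditional-variance condition $V^2/\delta_{i}^2=1+o_P(1)$ (via $\Var(V^2)$ as a quadratic form in $\bx_0-\bmu_i$) where the paper's Lemma B.1 verifies McLeish's sum-of-squares condition $\sum_k v_k^2=1+o_P(1)$, and that your passage from (\ref{3.1}) to (\ref{3.2}) via P\'olya's theorem makes explicit a uniformity step the paper leaves implicit.
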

Let $\delta_{\min}=\min\{\delta_{1},\delta_{2} \}$. 
Now, instead of (C-iv) to (C-vi), we consider the following conditions when $m\to \infty$: 
\begin{description}
  \item[(C-iv')]\ $\displaystyle \frac{ ||\bmu_{12}||^2\lambda_{\max}+\tr(\bSig_{\max}^2)/n_{\min}}{n_{\min} \delta_{\min}^2 }=o(1)$ \ and \  
$\displaystyle \frac{ \lambda_{\max}^2 }{n_{\min} \delta_{\min}^2}=o(1)$,
 \item[(C-v')]\ 
 $\displaystyle \frac{ \tr\{(\bA_{1}-\bA_2)^2\} \lambda_{\max} }{\delta_{\min}^2}=o(1)$, 
 \ and \  {\bf (C-vi')}\ $\displaystyle \frac{||\bmu_{12}||^2 \lambda_{\max}}{ \delta_{\min}^2 }=o(1)$.
\end{description}
\begin{pro}
\label{pro3.1}
Assume that $\limsup_{p\to \infty} \lambda_{\max}(\bA_i)<\infty$ for $i=1,2$.
Then, (C-iv') and (C-vi') imply (C-iv) and (C-vi), respectively. 
Furthermore, if $\lambda(\bA_i)\in (0,\infty)$ as $p\to \infty$ for $i=1,2$, and $\bA_i,\ i=1,2,$ are diagonal matrices such as in (I) to (III) in Section 1, (C-v') implies (C-v).
\end{pro}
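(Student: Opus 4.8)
To establish Proposition~\ref{pro3.1}, the plan is to argue exactly along the lines of the proof of Proposition~\ref{pro2.1}: bound each of the three primed conditions above by its unprimed counterpart using only elementary operator‑norm and trace inequalities. Throughout I would use that $\delta_i\ge\delta_{\min}$ and $n_i\ge n_{\min}$, that $\lambda_{\max}(\bSig_i)\le\lambda_{\max}$ and $\tr(\bSig_i^2)\le\tr(\bSig_{\max}^2)$, and that, by hypothesis, $C:=\max_{i=1,2}\limsup_{p\to\infty}\lambda_{\max}(\bA_i)^2<\infty$. A second recurring ingredient is that, since each of the three summands in the definition of $\delta_i$ is nonnegative, $n_i\delta_i^2\ge 4\tr\{(\bSig_i\bA_i)^2\}$ and $n_{i'}\delta_i^2\ge 4\tr(\bSig_i\bA_{i'}\bSig_{i'}\bA_{i'})$.

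The implication (C-vi')$\Rightarrow$(C-vi) is immediate: $\Delta_{iA}=\bmu_{12}^T\bA_{i'}\bSig_i\bA_{i'}\bmu_{12}\le\lambda_{\max}(\bA_{i'})^2\lambda_{\max}(\bSig_i)\,\|\bmu_{12}\|^2\le C\lambda_{\max}\|\bmu_{12}\|^2$, so $\Delta_{iA}/\delta_i^2\le C\|\bmu_{12}\|^2\lambda_{\max}/\delta_{\min}^2=o(1)$ by (C-vi'). The analogous bounds handle the first summand of (C-iv): combine $\bmu_{12}^T\bA_{i'}\bSig_{i'}\bA_{i'}\bmu_{12}\le C\lambda_{\max}\|\bmu_{12}\|^2$ with $\tr\{(\bSig_{i'}\bA_{i'})^2\}=\tr\{(\bA_{i'}^{1/2}\bSig_{i'}\bA_{i'}^{1/2})^2\}\le\lambda_{\max}(\bA_{i'})^2\tr(\bSig_{i'}^2)\le C\tr(\bSig_{\max}^2)$ to see that the quotient is $O\bigl((\|\bmu_{12}\|^2\lambda_{\max}+\tr(\bSig_{\max}^2)/n_{\min})/(n_{\min}\delta_{\min}^2)\bigr)=o(1)$ by the first half of (C-iv').

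For the two fourth‑order terms in (C-iv) the idea is to split each product into a symmetric positive semidefinite factor and use $\tr(\bX^2)\le\lambda_{\max}(\bX)\tr(\bX)$. For the second summand, set $\bB=\bA_i^{1/2}\bSig_i\bA_i^{1/2}$, so $\tr\{(\bSig_i\bA_i)^4\}=\tr(\bB^4)\le\lambda_{\max}(\bB)^2\tr(\bB^2)$ with $\lambda_{\max}(\bB)\le C^{1/2}\lambda_{\max}$ and $\tr(\bB^2)=\tr\{(\bSig_i\bA_i)^2\}$; since $n_i^2\delta_i^4\ge 4\,\tr(\bB^2)\,n_i\delta_i^2$, the quotient is at most $C\lambda_{\max}^2/(4n_{\min}\delta_{\min}^2)=o(1)$. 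For the third summand, put $\bM=\bA_{i'}^{1/2}\bSig_i\bA_{i'}^{1/2}$ and $\bN=\bA_{i'}^{1/2}\bSig_{i'}\bA_{i'}^{1/2}$; then $\tr\{(\bSig_i\bA_{i'}\bSig_{i'}\bA_{i'})^2\}=\tr\{(\bM^{1/2}\bN\bM^{1/2})^2\}\le\lambda_{\max}(\bM^{1/2}\bN\bM^{1/2})\,\tr(\bM\bN)\le C\lambda_{\max}^2\,\tr(\bM\bN)$, and $n_{i'}^2\delta_i^4\ge 4\,\tr(\bM\bN)\,n_{i'}\delta_i^2$ again gives a bound $C\lambda_{\max}^2/(4n_{\min}\delta_{\min}^2)=o(1)$.

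Finally, for (C-v')$\Rightarrow$(C-v) the diagonality of $\bA_1,\bA_2$ enters. Writing $\bD=\bA_1-\bA_2$, which is diagonal, $\tr[\{\bSig_i\bD\}^2]=\|\bSig_i^{1/2}\bD\bSig_i^{1/2}\|_F^2\le\lambda_{\max}(\bSig_i)\,\tr(\bD^2\bSig_i)\le(\max_j\sigma_{i(j)})\,\lambda_{\max}\,\tr\{(\bA_1-\bA_2)^2\}$, where $\max_j\sigma_{i(j)}$ stays bounded by the standing assumptions on $\bSig_i$; dividing by $\delta_i^2\ge\delta_{\min}^2$ and invoking (C-v') yields (C-v). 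I expect the only real difficulty to be bookkeeping rather than substance: in the two fourth‑order terms one must choose the symmetric positive semidefinite factor so that the bound $\tr(\bX^2)\le\lambda_{\max}(\bX)\tr(\bX)$ can be combined with the linear lower bound $n_i\delta_i^2\ge 4\tr(\bX)$ without losing the single factor of $n_{\min}$, and it is precisely in matching these powers of $\lambda_{\max}$ and of $n_{\min}$ with what (C-iv') supplies that care is needed.
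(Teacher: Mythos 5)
Your proposal is correct and follows essentially the same route as the paper's proof: bound the quadratic forms by $\lambda_{\max}(\bA_{i'})^2\lambda_{\max}(\bSig_l)\|\bmu_{12}\|^2$, control the fourth-order traces via $\tr(\bX^2)\le\lambda_{\max}(\bX)\tr(\bX)$ for a positive semidefinite symmetrization and absorb $\tr(\bX)$ into $n_l\delta_i^2/4$ using the definition of $\delta_i$, and handle (C-v) through the diagonal identity $\tr\{(\bA_1-\bA_2)\bSig_i(\bA_1-\bA_2)\}=\sum_j\sigma_{i(j)}(a_{1(j)}-a_{2(j)})^2$ with the standing boundedness of the $\sigma_{i(j)}$. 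The only differences are cosmetic (your choice of symmetric factors $\bA_i^{1/2}\bSig_i\bA_i^{1/2}$ and $\bM^{1/2}\bN\bM^{1/2}$ versus the paper's $\bSig_i^{1/2}\bA_l\bSig_l\bA_l\bSig_i^{1/2}$), and the resulting bounds coincide.
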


Next, we consider the asymptotic normality of (\ref{1.2}) under (A-ii). 
We assume the following condition instead of (C-vi) when $m\to \infty$ for $i=1,2\ (i'\neq i)$: 
\begin{description}
  \item[(C-vii)]\  $\displaystyle \frac{\sum_{j=1}^{q_i} (\bgam_{ij}^T \bA_{i'} \bmu_{12})^4 }{ \delta_{i}^4 }=o(1)$. 
\end{description}
Note that 
$\sum_{j=1}^{q_i} (\bgam_{is}^T \bA_{i'}\bmu_{12})^4\le \sum_{j,j'=1}^{q_i} (\bgam_{ij}^T \bA_{i'}\bmu_{12})^2(\bgam_{ij'}^T \bA_{i'}\bmu_{12} )^2=\Delta_{iA}^2$. 
Thus (C-vii) is milder than (C-vi). 
\begin{rem}
The condition in (C-vii) can be written as a condition concerning eigenvalues and eigenvectors.
If $\bGamma_i=\bH_i \bLam_i^{1/2}$, $\bA_i=\bSig_i^{-1},\ i=1,2$, and $\bSig_1=\bSig_2$, it holds that $\sum_{j=1}^{q_i} \{\bgam_{ij}^T \bA_{i'}\bmu_{12}\}^4=\sum_{j=1}^{p} \psi_j^2$ and $\Delta_{i A}=\bmu_{12}^T\bSig_i^{-1}\bmu_{12}=\sum_{j=1}^{p}\psi_j$, where $\psi_j=(\bmu_{12}^T\bh_{ij})^2$
$/\lambda_{ij}$.
Hence, the condition `` $\sum_{j=1}^{p} \psi_j^2/(\sum_{j=1}^{p} \psi_j)^2\to 0$ as $p\to \infty$" implies (C-vii). 
\end{rem}
Now, we claim the asymptotic normality of (\ref{1.2}) under (A-ii) as follows:
\begin{thm}
\label{thm3.2}
Assume (A-ii). 
Assume also (C-iv), (C-v) and (C-vii). 
Then, we have (\ref{3.1}). 
Furthermore, for the classification rule by (\ref{1.3}) with (\ref{1.2}), we have (\ref{3.2}). 
\end{thm}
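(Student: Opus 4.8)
The plan is to follow the same route as in Theorem~\ref{thm3.1}, replacing the use of (C-vi) by the weaker (C-vii) in the single place where (C-vi) was actually needed, and to exploit the independence in (A-ii) to get a cleaner Lindeberg-type argument. From the decomposition quoted right before Theorem~\ref{thm3.1}, under (A-i) (hence under (A-ii)) together with (C-iv) and (C-v) we have
\begin{align}
W_{i'}(\bA_{i'})-W_i(\bA_i)-\Delta_{i} =&
2(\bx_0-\bmu_i)^T\Big\{\bA_i(\overline{\bx}_{in_i}-\bmu_i) \notag \\
&-\bA_{i'}(\overline{\bx}_{i'n_{i'}}-\bmu_{i'}+(-1)^i\bmu_{12}) \Big\}+o_P(\delta_{i}) \notag
\end{align}
as $m\to\infty$ when $\bx_0\in\pi_i$. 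Write $\bz_0=\bx_0-\bmu_i=\bGamma_i\by_{i0}$ with $\by_{i0}$ having independent coordinates under (A-ii). The right-hand side splits into the ``cross'' terms $2\bz_0^T\bA_i(\overline{\bx}_{in_i}-\bmu_i)$ and $2\bz_0^T\bA_{i'}(\overline{\bx}_{i'n_{i'}}-\bmu_{i'})$, which involve products of $\by_{i0}$ with independent sample means and so are conditionally mean-zero, and the ``drift'' term $-2(-1)^i\bz_0^T\bA_{i'}\bmu_{12}$, which is the only term where (C-vi) was used in Theorem~\ref{thm3.1}.

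First I would handle the drift term. Under (A-ii), $\bz_0^T\bA_{i'}\bmu_{12}=\sum_{j=1}^{q_i}(\bgam_{ij}^T\bA_{i'}\bmu_{12})\,y_{ij0}$ is a sum of independent mean-zero variables with variance $\Delta_{iA}=o(\delta_i^2)$ by (C-vi), but more to the point, (C-vii) says $\sum_{j=1}^{q_i}(\bgam_{ij}^T\bA_{i'}\bmu_{12})^4=o(\delta_i^4)$. Since (C-vii) is milder than (C-vi) (as noted in the excerpt), I would keep $\bz_0^T\bA_{i'}\bmu_{12}$ inside the asymptotically normal sum rather than discarding it: combining it with the cross terms gives a single sum of (conditionally or unconditionally) independent contributions, and (C-vii) is exactly the Lyapunov fourth-moment bound needed to absorb this piece into the CLT. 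Concretely, conditioning on $\bx_{i'1},\dots,\bx_{i'n_{i'}}$ (so the $\bA_{i'}$-parts become fixed linear functionals of $\by_{i0}$) and on nothing for the $\bA_i$-part, one writes the leading term as $\sum_{j} c_j y_{ij0}$ plus a bilinear remainder, verifies the Lyapunov condition using $E(y_{ij0}^4)$ bounded, (C-iv) for the bilinear/quadratic part, and (C-vii) for the linear-in-$\bmu_{12}$ part, and then checks that the conditional variance converges to $\delta_i^2\{1+o_P(1)\}$.

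The second step is the variance computation: show $\Var$ of the leading term equals $\delta_i^2\{1+o(1)\}$. By the definition of $\delta_i$ and the displayed identity $\delta_i^2=\Var[2(\bx_0-\bmu_i)^T\{\bA_i(\overline{\bx}_{in_i}-\bmu_i)-\bA_{i'}(\overline{\bx}_{i'n_{i'}}-\bmu_{i'}+(-1)^i\bmu_{12})\}]$, this is essentially bookkeeping: the three cross-covariances vanish by independence of the three groups of data, leaving $\tr\{(\bSig_i\bA_i)^2\}/n_i+\tr(\bSig_i\bA_{i'}\bSig_{i'}\bA_{i'})/n_{i'}+\Delta_{iA}$, i.e. $\delta_i^2/4$, exactly. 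Then invoke a martingale CLT (or a conditional Lindeberg argument as in \cite{Bai:1996}, \cite{Aoshima:2014}) to conclude $\{W_{i'}(\bA_{i'})-W_i(\bA_i)-\Delta_i\}/\delta_i\Rightarrow N(0,1)$, which is (\ref{3.1}). Finally, for (\ref{3.2}) one argues exactly as in Theorem~\ref{thm3.1}: the classification rule (\ref{1.3}) misclassifies $\bx_0\in\pi_i$ iff $W_{i'}(\bA_{i'})-W_i(\bA_i)<0$, i.e. iff $\{W_{i'}(\bA_{i'})-W_i(\bA_i)-\Delta_i\}/\delta_i<-\Delta_i/\delta_i$, so $e(i)=\Phi(-\Delta_i/\delta_i)+o(1)$ by (\ref{3.1}) and the continuity of $\Phi$.

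The main obstacle is the CLT for the leading term under only the fourth-moment hypothesis (A-ii) plus (C-vii): the term is a degenerate-type bilinear form in the independent coordinates of $\by_{i0}$ (and the sample means) plus a linear form, and one must verify a Lindeberg/Lyapunov condition uniformly while the number of summands and the conditioning structure both vary with $p,n_1,n_2$. The key point making it work is that (C-vii) replaces the crude variance bound (C-vi) by the sharp fourth-moment bound, so the linear-in-$\bmu_{12}$ part does not need to be negligible — it just needs to satisfy Lyapunov — which is precisely the slack that lets us weaken (C-vi) to (C-vii) when passing from (A-i) to the independent-coordinate case (A-ii).
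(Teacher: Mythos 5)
Your proposal is correct and follows essentially the same route as the paper: the expansion you quote is exactly the paper's (\ref{A.6}) (valid under (A-i), (C-iv), (C-v)), and your conditional Lindeberg/martingale CLT for the leading term — indexed by the independent coordinates of $\by_{i0}$ under (A-ii), with (C-vii) supplying the fourth-moment bound that lets the $\bmu_{12}$-linear piece stay inside the normal sum instead of being killed by (C-vi) — is precisely the content of the paper's Lemma B.2, after which (\ref{3.2}) follows from (\ref{3.1}) as you describe.
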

\subsection{Bayes error rates}
When considering Theorem \ref{thm3.2} under the situation that 
\begin{equation}
\tr\{(\bSig_i\bA_{i})^2 \}/n_i+\tr(\bSig_i\bA_{i'}\bSig_{i'}\bA_{i'})/n_{i'}=o(\Delta_{iA}) \ \ \mbox{as $m\to \infty$} 
\label{3.3}
\end{equation}
for $i=1,2$ ($i' \neq i$), one has (\ref{3.2}) as
$$
e(i)= \Phi\{ -\Delta_{i}/(2\Delta_{iA}^{1/2} )\}+o(1) \ \mbox{as $m\to\infty$ for $i=1,2$}.
$$
Note that $\delta_{i}/(2\Delta_{iA}^{1/2})=1+o(1)$ under (\ref{3.3}). 
If $\bSig_1=\bSig_2(=\bSig)$, the ratio $\Delta_{i}/\Delta_{iA}^{1/2}$ has a maximum when $\bA_1=\bA_2=\bSig^{-1}$. 
Then, the ratio becomes the Mahalanobis distance such as $\Delta_{i}/\Delta_{iA}^{1/2}=\Delta_{MD}^{1/2}$.
The classification rule by (\ref{1.3}) with (\ref{1.2}) has an error rate converging to the Bayes error rate in the sense that $e(i)=\Phi(-\Delta_{MD}^{1/2}/2)+o(1)$ for $i=1,2$. 
On the other hand, if $\bSig_1\neq \bSig_2$ and $\pi_i$s are Gaussian, under (C-iii) for (IV), the Bayes optimal classifier by (\ref{1.1}) becomes as follows: 
$$
 2(\bx_0-\bmu_i)^T\bSig_{i'}^{-1}\bmu_{12}+o_P(\Delta_{i(IV)})>(-1)^{i}\Delta_{i(IV)}
$$
when $\bx_0 \in \pi_i$ ($i'\neq i$). 
Note that $\Var\{(\bx_0-\bmu_i)^T\bSig_{i'}^{-1}\bmu_{12}\}=\bmu_{12}^T\bSig_{i'}^{-1}\bSig_i\bSig_{i'}^{-1}\bmu_{12}$ (hereafter called $\Delta_{iA(IV)}$) when $\bx_0 \in \pi_i$ ($i' \neq i$) and $\Delta_{iA(IV)}$ is the same as $\Delta_{iA}$ for (IV). 
Hence, $(\bx_0-\bmu_i)^T\bSig_{i'}^{-1}\bmu_{12}/\Delta_{iA(IV)}^{1/2}$ is distributed as $N(0,1)$ when $\bx_0 \in \pi_i:N_p(\bmu_i,\bSig_i)$.
Then, the Bayes error rate becomes $e(i)= \Phi\{ -\Delta_{i(IV)}/(2\Delta_{iA(IV)}^{1/2})\}+o(1)$ for $i=1,2$, under some conditions. 

When considering Theorem \ref{thm3.2} under the situation that 
\begin{equation}
p/n_i+\tr(\bSig_i\bSig_{i'}^{-1} )/n_{i'}=o(\Delta_{iA(IV)}) \ \ \mbox{as $m\to \infty$} 
\label{3.4}
\end{equation}
for $i=1,2$ ($i' \neq i$), one can claim that the classification rule by (\ref{1.3}) with (\ref{2.7}) has the Bayes error rate asymptotically even when $\pi_i$s are non-Gaussian. 
Note that (\ref{3.4}) is equivalent to (\ref{3.3}) for (IV) and (\ref{3.4}) usually holds when $n_{\min}\to \infty$ while $p$ is fixed or $p\to \infty$ but $n_{\min}/p\to \infty$.
If (\ref{3.4}) is not met, the classifier by (\ref{2.7}) is not optimal. 
We emphasize that (\ref{3.4}) does not always hold for high-dimensional settings such as $n_{\min}/p\to 0$ or $n_{\min}/p\to c\ (>0)$. 
For example, let us consider the setup of Figure~\ref{F1}. 
The condition ``$p/n_i=o(\Delta_{iA(IV)})$" is not met from the facts that $\Delta_{iA(IV)}=O(p^{2/3})$ and $n_1=n_2=o(p^{1/3})$, so that (\ref{3.4}) does not hold.
On the other hand, (C-iv) to (C-vi) hold, so that one can claim the asymptotic normality in Theorem \ref{thm3.1}.
Note that (\ref{3.4}) does not hold under (C-vi) for (IV). 
Thus the error rate of the classifier based on the Mahalanobis distance does not converge to the Bayes error rate when Theorem \ref{thm3.1} is claimed. 
Such situations frequently occur in HDLSS settings such as $n_{\min}/p\to 0$. 
This is the reason why the classifier based on the Mahalanobis distance does not always give a preferable performance for high-dimensional data even when $n_{\min}\to \infty$, $\bSig_i$s are known and $\pi_i$s are Gaussian.
\subsection{Asymptotic normality for (I) to (IV)}
We consider $\delta_{i}$s for (I) to (IV).
For (I), by putting $\bA_i= \bI_p, i=1,2$, one has $\delta_{i} (i\ne i')$ as
$$
\delta_{i}=2\Big\{\frac{\tr(\bSig_i^2)}{n_i}+\frac{\tr(\bSig_i\bSig_{i'})}{n_{i'}}+\bmu_{12}^T\bSig_i\bmu_{12}\Big\}^{1/2} \ (\mbox{hereafter called }\delta_{i(I)}).
$$
For (II), by putting $\bA_i=\{p/\tr(\bSig_i)\} \bI_p, i=1,2$, it is given as
\begin{align*}
\delta_{i}= \frac{2p}{\tr(\bSig_{i'})}\Big\{ \frac{\delta_{i(I)}^2}{4}+ \frac{\tr(\bSig_i^2)}{n_i}\Big(\frac{\tr(\bSig_{i'})^2}{\tr(\bSig_i)^2}-1\Big) \Big\}^{1/2} \ (\mbox{hereafter called }\delta_{i(II)}).
\end{align*} 
For (III), by putting $\bA_i= \bSig_{i(d)}^{-1}, i=1,2$, it is given as 
\begin{align*}
&\delta_{i}=2\Big\{ \frac{\tr\{(\bSig_i\bSig_{i(d)}^{-1})^2\}}{n_i}+\frac{\tr(\bSig_i\bSig_{i'(d)}^{-1} \bSig_{i'}\bSig_{i'(d)}^{-1})}{n_{i'}}+\bmu_{12}^T \bSig_{i'(d)}^{-1} \bSig_i \bSig_{i'(d)}^{-1}\bmu_{12}\Big\}^{1/2} \\ 
&(\mbox{hereafter called }\delta_{i(III)}).
\end{align*} 
For (IV), by putting $\bA_i=\bSig_{i}^{-1}, i=1,2$, it is given as 
$$
\delta_{i}=2\Big\{ \frac{p}{n_i}+ \frac{\tr(\bSig_i \bSig_{i'}^{-1} )}{n_{i'}}+\bmu_{12}^T\bSig_{i'}^{-1} \bSig_i \bSig_{i'}^{-1}\bmu_{12} \Big\}^{1/2}\ (\mbox{hereafter called }\delta_{i(IV)}).
$$
 
From Theorems \ref{thm3.1}, \ref{thm3.2} and Proposition \ref{pro3.1}, we have the following result for (I) to (III). 
\begin{cor}
\label{cor3.1}
Assume (C-iv').
Assume either (A-i) and (C-vi') or (A-ii) and (C-vii). 
Then, for the classification rule by (\ref{1.3}) with (\ref{2.4}), we have (\ref{3.2}). 
Furthermore, under (C-v'), for the classification rule by (\ref{1.3}) with (\ref{2.5}) or (\ref{2.6}), we have (\ref{3.2}).
\end{cor}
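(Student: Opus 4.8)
The plan is to deduce everything from Theorems~\ref{thm3.1} and~\ref{thm3.2} together with Proposition~\ref{pro3.1}, so the only real work is to check that the $\bA_i$s in (I), (II) and (III) satisfy the eigenvalue hypotheses required by Proposition~\ref{pro3.1} and that the primed conditions genuinely transfer to the unprimed ones in each of the three cases; the asymptotic normality and the error-rate formula (\ref{3.2}) then come for free from the general theorems.

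First I would treat the classifier by (\ref{2.4}), i.e.\ case (I) with $\bA_i=\bI_p$. Here $\lambda_{\max}(\bA_i)=1$, so $\limsup_{p\to\infty}\lambda_{\max}(\bA_i)<\infty$ and $\lambda(\bA_i)\in(0,\infty)$ trivially. Since $\bA_1=\bA_2=\bI_p$, we have $\bSig_i(\bA_1-\bA_2)=\bO$, hence (C-v) holds automatically and no condition of type (C-v') is needed. By Proposition~\ref{pro3.1}, (C-iv') implies (C-iv) and (C-vi') implies (C-vi). Therefore, under (A-i) together with (C-iv') and (C-vi'), all hypotheses of Theorem~\ref{thm3.1} are in force and (\ref{3.2}) follows; under (A-ii) together with (C-iv') and (C-vii), (C-iv') still gives (C-iv), (C-v) holds as above and (C-vii) is assumed, so Theorem~\ref{thm3.2} applies and again yields (\ref{3.2}).

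Next I would treat the classifiers by (\ref{2.5}) and (\ref{2.6}), i.e.\ cases (II) and (III). In case (II), $\bA_i=\{p/\tr(\bSig_i)\}\bI_p$ is diagonal and, since $\tr(\bSig_i)/p\in(0,\infty)$ as $p\to\infty$, one has $\lambda(\bA_i)\in(0,\infty)$ and $\limsup_{p\to\infty}\lambda_{\max}(\bA_i)<\infty$. In case (III), $\bA_i=\bSig_{i(d)}^{-1}$ is diagonal and, since $\sigma_{i(j)}\in(0,\infty)$ as $p\to\infty$ for all $j$, again $\lambda(\bA_i)\in(0,\infty)$ and $\limsup_{p\to\infty}\lambda_{\max}(\bA_i)<\infty$. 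Thus in both cases Proposition~\ref{pro3.1} applies in full, so (C-iv') implies (C-iv), (C-vi') implies (C-vi), and (C-v') implies (C-v). Consequently, under (A-i) with (C-iv'), (C-v') and (C-vi') we invoke Theorem~\ref{thm3.1}, and under (A-ii) with (C-iv'), (C-v') and (C-vii) we invoke Theorem~\ref{thm3.2}; either way (\ref{3.2}) holds for the classification rule by (\ref{1.3}) with (\ref{2.5}) or (\ref{2.6}).

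I do not anticipate a serious obstacle: the statement is essentially a translation of the general asymptotic normality results into the three concrete choices of $\bA_i$, and the only things requiring care are the bookkeeping of which package of assumptions feeds which theorem — the branch (A-i)/(C-vi') routing through Theorem~\ref{thm3.1} versus the branch (A-ii)/(C-vii) routing through Theorem~\ref{thm3.2} — and the verification that $\bA_i$ in (II) and (III) is diagonal with eigenvalues bounded and bounded away from zero, which is precisely the hypothesis Proposition~\ref{pro3.1} needs for the implication (C-v')$\Rightarrow$(C-v). For case (I) one additionally records that (C-v) is vacuous, which explains why no (C-v')-type assumption is imposed for (\ref{2.4}).
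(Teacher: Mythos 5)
Your proposal is correct and follows exactly the route the paper takes: the paper's proof of Corollary \ref{cor3.1} simply states that it follows straightforwardly from Theorems \ref{thm3.1} and \ref{thm3.2} together with Proposition \ref{pro3.1}, and your write-up fills in precisely the bookkeeping that phrase elides (boundedness and diagonality of the $\bA_i$s in (I)--(III), the vacuity of (C-v) when $\bA_1=\bA_2=\bI_p$, and the routing of the (A-i)/(C-vi') branch through Theorem \ref{thm3.1} versus the (A-ii)/(C-vii) branch through Theorem \ref{thm3.2}). No gaps.
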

\begin{rem}
\label{rem3.1}
When $\tr(\bSig_1)/\tr(\bSig_2)\to 1$ as $p\to \infty$, it holds $\{\delta_{i(I)}p/\tr(\bSig_{i'})\}/\delta_{i(II)}=1+o(1)\ (i\neq i')$. 
Note that $\Delta_{i(II)}\tr(\bSig_{i'})/p\ge \Delta_{(I)}$.
It follows that $\Delta_{(I)}/\delta_{i(I)} \le \Delta_{i(II)}/\delta_{i(II)}$ for sufficiently large $p$ in (\ref{3.2}).
\end{rem}
From Theorems \ref{thm3.1} and \ref{thm3.2} and Proposition \ref{pro3.1}, we have the following result for (IV).
\begin{cor}
\label{cor3.2}
Assume that (C-iv'), $\liminf_{p\to \infty}\lambda_{\min}(\bSig_i)>0$ and $\tr\{(\bI_p-\bSig_i\bSig_{i'}^{-1})^2\}=o(\delta_{\min(IV)}^2)$
for $i=1,2$ $(i'\neq i)$, 
where $\delta_{\min(IV)}=\min\{\delta_{1(IV)}$
$,\delta_{2(IV)}\}$. 
Assume either (A-i) and (C-vi') or (A-ii) and (C-vii). 
Then, for the classification rule by (\ref{1.3}) with (\ref{2.7}), we have (\ref{3.2}). 
\end{cor}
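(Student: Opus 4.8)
The plan is to derive Corollary~\ref{cor3.2} from Theorems~\ref{thm3.1} and~\ref{thm3.2} together with Proposition~\ref{pro3.1}, by checking that, for the particular choice $\bA_i=\bSig_i^{-1}$ $(i=1,2)$, the abstract hypotheses (C-iv)--(C-vi) hold in the (A-i) case and (C-iv), (C-v), (C-vii) hold in the (A-ii) case. The one structural point to keep in mind is that $\bSig_i^{-1}$ need not be diagonal, so the clause of Proposition~\ref{pro3.1} that reduces (C-v') to (C-v) for diagonal $\bA_i$ is unavailable; this is exactly why the extra assumption $\tr\{(\bI_p-\bSig_i\bSig_{i'}^{-1})^2\}=o(\delta_{\min(IV)}^2)$ appears in the statement.

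First I would record the elementary identities for (IV). Since $\bA_i=\bSig_i^{-1}$, one has $\lambda_{\max}(\bA_i)=1/\lambda_{\min}(\bSig_i)$, so the hypothesis $\liminf_{p\to\infty}\lambda_{\min}(\bSig_i)>0$ gives $\limsup_{p\to\infty}\lambda_{\max}(\bA_i)<\infty$, which is precisely the requirement of Proposition~\ref{pro3.1}. Substituting $\bA_i=\bSig_i^{-1}$ in the definition of $\delta_i$ gives $\delta_i=\delta_{i(IV)}$, because $\tr\{(\bSig_i\bSig_i^{-1})^2\}=p$, $\tr(\bSig_i\bSig_{i'}^{-1}\bSig_{i'}\bSig_{i'}^{-1})=\tr(\bSig_i\bSig_{i'}^{-1})$ and $\Delta_{iA}=\bmu_{12}^T\bSig_{i'}^{-1}\bSig_i\bSig_{i'}^{-1}\bmu_{12}$; in particular $\delta_{\min}=\delta_{\min(IV)}$ and hence $\delta_{\min(IV)}^2\le\delta_{i(IV)}^2$ for $i=1,2$.

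Next I would invoke Proposition~\ref{pro3.1} to pass from (C-iv') to (C-iv) and, in the (A-i) branch, from (C-vi') to (C-vi); in the (A-ii) branch (C-vii) is assumed outright. For (C-v) I would compute directly: with $\bA_1-\bA_2=\bSig_1^{-1}-\bSig_2^{-1}$ one has $\bSig_1(\bA_1-\bA_2)=\bI_p-\bSig_1\bSig_2^{-1}$ and $\bSig_2(\bA_1-\bA_2)=-(\bI_p-\bSig_2\bSig_1^{-1})$, so $\tr[\{\bSig_i(\bA_1-\bA_2)\}^2]=\tr\{(\bI_p-\bSig_i\bSig_{i'}^{-1})^2\}$ for $i=1,2$ $(i'\neq i)$. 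The assumed bound then gives $\tr[\{\bSig_i(\bA_1-\bA_2)\}^2]=o(\delta_{\min(IV)}^2)=o(\delta_{i(IV)}^2)=o(\delta_i^2)$, which is (C-v). With (C-iv)--(C-vi) in hand under (A-i), Theorem~\ref{thm3.1} applies; with (C-iv), (C-v), (C-vii) in hand under (A-ii), Theorem~\ref{thm3.2} applies; either way we obtain (\ref{3.1}) and the error-rate expansion (\ref{3.2}) for the rule (\ref{1.3}) with (\ref{2.7}).

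The argument is mostly bookkeeping; the one step that genuinely requires care — and the reason the corollary needs an auxiliary hypothesis rather than following from Proposition~\ref{pro3.1} alone — is the verification of (C-v), where the non-diagonal form of $\bSig_i^{-1}$ prevents the crude bound through $\tr\{(\bA_1-\bA_2)^2\}\lambda_{\max}$ and forces one to control $\tr\{(\bI_p-\bSig_i\bSig_{i'}^{-1})^2\}$ directly.
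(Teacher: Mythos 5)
Your proposal is correct and follows exactly the route the paper intends: the paper's own proof is the single sentence ``From Theorems 3.1 and 3.2 and Proposition 3.1, we can claim Corollaries 3.1 and 3.2 straightforwardly,'' and your write-up simply supplies the omitted bookkeeping. In particular, your identification of $\tr[\{\bSig_i(\bA_1-\bA_2)\}^2]=\tr\{(\bI_p-\bSig_i\bSig_{i'}^{-1})^2\}$ as the reason the corollary carries an explicit hypothesis in place of (C-v') is exactly the right observation.
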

\subsection{Comparisons of the classifiers}
In this section, we investigate the performance of the classifier in (\ref{1.2}) for (I) to (IV) by using the asymptotic normality. 
When $\bSig_1=\bSig_2$, we consider (I), (III) and (IV) in the setup of Figure~\ref{F1}. 
Note that (I), (III) and (IV) satisfy (C-iv) to (C-vi) from the facts that $n_{\min}=o(p^{1/3})$, $\Delta_{iA}=O(||\bmu_{12} ||^2)=O(p^{2/3})$, $\tr(\bSig_i^2)/p \in (0,\infty)$ and $\tr(\bSig_i^4)=o(p^2)$ as $p \to \infty$ for $i=1,2$. 
Thus, Theorem \ref{thm3.1} holds for (I), (III) and (IV). 
We plotted the asymptotic error rates, $\Phi(-\Delta_{(I)}/\delta_{1(I)})$, $\Phi(-\Delta_{1(III)}/\delta_{1(III)})$ and $\Phi(-\Delta_{1(IV)}/\delta_{1(IV)})$ in Figure~\ref{F3}. 
From (\ref{3.2}), we note that $e(1)-e(2)=o(1)$ when $\bSig_1=\bSig_2$. 
Thus, the average error rate, $\overline{e}=\{\overline{e}(1)+\overline{e}(2)\}/2$, is regarded as an estimate of $e(1)$.
We laid $\overline{e}$ for (I), (III) and (IV) by borrowing from Figure~\ref{F1}.
We observed that $\overline{e}$ behaves very close to the asymptotic error rate as expected theoretically.
We also plotted the Bayes error rate, $\Phi(-\Delta_{MD}^{1/2}/2)$. 
We observed that (IV) does not converge to the Bayes error rate when Theorem \ref{thm3.1} is claimed. 
See Section 3.2 for the details. 
\begin{figure}
\begin{centering}
\includegraphics[scale=0.41]{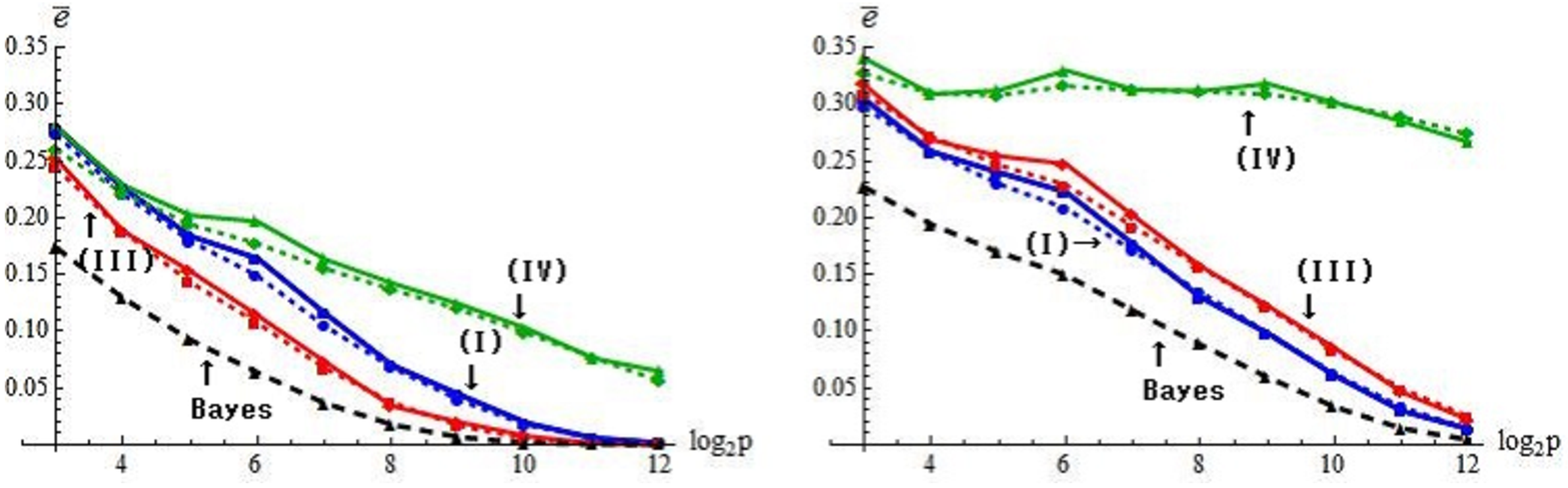}
\\[-1mm]
\ \ 
(a) $\bmu_1=\bze$ and $\bmu_{2}=(1,...,1,0,...,0)^T$ \hspace{0.1cm} (b) $\bmu_1=\bze$ and $\bmu_{2}=(0,...,0,1,...,1)^T$
\caption{The asymptotic error rates (dashed lines) by $\Phi(-\Delta_{(I)}/\delta_{1(I)})$, $\Phi(-\Delta_{1(III)}/\delta_{1(III)})$ and $\Phi(-\Delta_{1(IV)}/\delta_{1(IV)})$, together with the corresponding $\overline{e}$ (solid lines) by (\ref{2.4}), (\ref{2.6}) and (\ref{2.7}) in the setup of Figure~\ref{F1}. 
The Bayes optimal error rate was given by $\Phi(-\Delta_{MD}^{1/2}/2)$.}
\label{F3}
\end{centering}
\end{figure}
As for (I) and (III), the difference of the performances depends on the configuration of $\mu_{ij}$s and $\sigma_{i(j)}$s. 
When $p$ is sufficiently large, we note that $\Delta_{(I)}=\sum_{j=1}^p \mu_{12j}^2<\Delta_{1(III)}=\sum_{j=1}^p \mu_{12j}^2/\sigma_{2(j)}$ for (a) and $\Delta_{(I)}>\Delta_{1(III)}$ for (b) because $\sigma_{2(j)}=0.5+j/(p+1)$, $j=1,...,p$ both for (a) and (b). 
It follows that $\Delta_{(I)}/\delta_{i(I)}<\Delta_{i(III)}/\delta_{i(III)}$ for (a) and $\Delta_{(I)}/\delta_{i(I)}>\Delta_{i(III)}/\delta_{i(III)}$ for (b).
Thus (III) is better than (I) for (a), on the other hand, they trade places for (b). 

When $\bSig_{1}\neq \bSig_{2} $, (II), (III) and (IV) draw information about heteroscedasticity through the difference of $\tr(\bSig_i)$s, $\bSig_{i(d)}$s or $\bSig_i$s, respectively. 
We consider them in the setup of Figure~\ref{F2}. 
For (c), note that $\Delta_{(I)}=0$ but $\Delta_{i(II)}=\Delta_{i(III)}=\Delta_{i(IV)}>c p$ for some constant $c>0$.
(II), (III) and (IV) hold the consistency property even when $n_i$s are fixed because (C-i) to (C-iii) are satisfied. 
Actually, in Figure~\ref{F2}, we observed that the three classifiers gave preferable performances by using the difference of $\tr(\bSig_i)$s, $\bSig_{i(d)}$s or $\bSig_i$s as $p$ increases. 
For (d), note that the difference of $\tr(\bSig_i)$s is smaller than that for (c). 
Actually, in Figure~\ref{F2}, we observed that (II) gives a worse performance for (d) compared to (c). 
On the other hand, (III) gave a better performance compared to (II) because $\Delta_{i(III)}$ is sufficiently larger than $\Delta_{i(II)}$ for (d) when $p$ is large.  
(IV) draws information about heteroscedasticity from the difference of the covariance matrices themselves, so that it gave the best performance in this case. 
However, we note that it is quite difficult to estimate $\bSig_i^{-1}$s feasibly for high-dimensional data. 
See Section 5.2 for the details.
\section{Estimation of the quadratic classifier}
We denote an estimator of $\bA_i$ by $\hat{\bA}_i$.
We consider estimating the quadratic classifier by $W_i(\hat{\bA}_i)$.
\subsection{Preliminary}
Let $||\bM||=\lambda_{\max}^{1/2}(\bM^T \bM)$ for any square matrix $\bM$. 
Let $\kappa$ be a constant such as $\kappa=\Delta_{\min}$ or $\kappa=\delta_{\min}$.
We consider the following condition for $\hat{\bA}_i$s under ($\star$):
\begin{description}
  \item[(C-viii)]\ $p || \hat{\bA}_i-\bA_i||=o_P(\kappa)$ for $i=1,2$.
\end{description}
\begin{pro}
\label{pro4.1}
Assume (C-viii). 
Assume also that $\lambda(\bA_i)\in (0,\infty)$ as $p\to \infty$ for $i=1,2$. 
Then, we have that 
\begin{align}
&W_1(\hat{\bA}_1)-W_2(\hat{\bA}_2)=W_1(\bA_1)-W_2(\bA_2)+o_P(\kappa)\label{4.1} \\
&\mbox{under ($\star$) when $\bx_0\in\pi_i$ for $i=1,2$.} \notag
\end{align}
\end{pro}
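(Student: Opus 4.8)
The plan is to expand $W_i(\hat{\bA}_i)-W_i(\bA_i)$ into a sum of terms, each of which is a quadratic or linear form in the centered observation vectors and the sample covariance matrices, multiplied by the perturbation $\hat{\bA}_i-\bA_i$, and then bound each term by $o_P(\kappa)$ using (C-viii) together with the spectral control $\lambda(\bA_i)\in(0,\infty)$. First I would write, directly from the definition (\ref{1.2}),
\begin{align*}
W_i(\hat{\bA}_i)-W_i(\bA_i)=&(\bx_0-\overline{\bx}_{in_i})^T(\hat{\bA}_i-\bA_i)(\bx_0-\overline{\bx}_{in_i})\\
&-\tr\{\bS_{in_i}(\hat{\bA}_i-\bA_i)\}/n_i-\log|\hat{\bA}_i\bA_i^{-1}|,
\end{align*}
so that it suffices to show each of the three terms on the right is $o_P(\kappa)$ when $\bx_0\in\pi_i$; summing over $i=1,2$ then gives (\ref{4.1}).

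For the quadratic term, I would use $|\bv^T(\hat{\bA}_i-\bA_i)\bv|\le \|\hat{\bA}_i-\bA_i\|\,\|\bv\|^2$ with $\bv=\bx_0-\overline{\bx}_{in_i}$, and note that under (A-i)-type moment assumptions (or even just the stated moment bounds on the $\bx_{ik}$) one has $\|\bx_0-\overline{\bx}_{in_i}\|^2=O_P(\tr(\bSig_i)+\tr(\bSig_{i'}))=O_P(p)$, since $\tr(\bSig_i)/p\in(0,\infty)$. Combined with (C-viii), $p\|\hat{\bA}_i-\bA_i\|=o_P(\kappa)$, this gives a bound of order $o_P(\kappa)$. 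For the trace term, $|\tr\{\bS_{in_i}(\hat{\bA}_i-\bA_i)\}|\le \|\hat{\bA}_i-\bA_i\|\,\tr(\bS_{in_i})\cdot(\text{rank factor})$; more carefully, $|\tr(\bM\bN)|\le \|\bN\|\tr(\bM)$ for $\bM\ge\bO$, so this is at most $\|\hat{\bA}_i-\bA_i\|\tr(\bS_{in_i})$, and $\tr(\bS_{in_i})=O_P(p)$ by the same reasoning, giving $o_P(\kappa)/n_i=o_P(\kappa)$. For the log-determinant term, I would write $\log|\hat{\bA}_i\bA_i^{-1}|=\tr\log(\bI_p+\bA_i^{-1}(\hat{\bA}_i-\bA_i))$ and use $|\log|\bI_p+\bE||\le$ (something controlled by) $p\|\bE\|$ when $\|\bE\|$ is small; since $\|\bA_i^{-1}(\hat{\bA}_i-\bA_i)\|\le \lambda_{\min}(\bA_i)^{-1}\|\hat{\bA}_i-\bA_i\|\to 0$ by (C-viii) and the lower spectral bound on $\bA_i$, the bound $p\|\bA_i^{-1}(\hat{\bA}_i-\bA_i)\|=O(1)\cdot p\|\hat{\bA}_i-\bA_i\|=o_P(\kappa)$ closes this case.

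The main obstacle I anticipate is handling the cross term that appears after expanding $(\bx_0-\overline{\bx}_{in_i})$ around its mean: strictly one should center at $\bmu_i$ (when $\bx_0\in\pi_i$) and verify that $(\bx_0-\bmu_i)^T(\hat{\bA}_i-\bA_i)(\overline{\bx}_{in_i}-\bmu_i)$ and related pieces are negligible; these need a Cauchy–Schwarz step plus the $O_P(p)$ and $O_P(p/n_i)$ bounds on the relevant norms, which is routine but is where the bookkeeping concentrates. A secondary subtlety is that (C-viii) only controls the operator norm of $\hat{\bA}_i-\bA_i$, so every bound must be phrased in terms of $\|\cdot\|$ times a trace or a squared Euclidean norm (both $O_P(p)$), never in terms of a Frobenius norm of the perturbation; keeping that discipline, together with the uniform spectral bounds $\lambda(\bA_i)\in(0,\infty)$ to control $\bA_i^{-1}$ in the determinant term, is what makes the argument go through. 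Finally, I would remark that if $\hat{\bA}_i=\bA_i$ (as for (I)) the statement is trivial, and that the proof does not require $\bx_0$'s label to be known, only that $\bx_0\in\pi_i$ for the stated conclusion.
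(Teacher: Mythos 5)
Your proposal is correct and rests on the same three estimates the paper uses: operator-norm bounds of the form $\|\hat{\bA}_i-\bA_i\|$ times an $O_P(p)$ quantity for the quadratic and trace terms, and a symmetrized eigenvalue argument for the log-determinant. The organization differs, though. The paper proves the proposition by first establishing a reusable Lemma B.3, which decomposes $W_1(\hat{\bA}_1)-W_1(\bA_1)-W_2(\hat{\bA}_2)+W_2(\bA_2)$ after centering at $\bmu_i$ and isolates cross terms carrying an extra $n_l^{-1/2}$ factor; that finer bookkeeping is not needed for Proposition 4.1 itself but is what lets the same lemma be re-used for Corollary 4.2 (GQDA), where $\hat{\bA}_i-\bA_i$ is a scalar multiple of $\bI_p$ and sharper cancellations than (C-viii) are exploited. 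Your direct expansion $W_i(\hat{\bA}_i)-W_i(\bA_i)=(\bx_0-\overline{\bx}_{in_i})^T\hat{\bB}_i(\bx_0-\overline{\bx}_{in_i})-\tr(\bS_{in_i}\hat{\bB}_i)/n_i-\log|\hat{\bA}_i\bA_i^{-1}|$, with $\hat{\bB}_i=\hat{\bA}_i-\bA_i$, is shorter and entirely adequate here; in fact the ``cross term'' you worry about never needs to be isolated, since $\|\bx_0-\overline{\bx}_{in_i}\|^2=O_P(p)$ already follows from $\tr(\bSig_l)=O(p)$ and $\|\bmu_{12}\|^2=O(p)$ whichever class $\bx_0$ comes from. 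Two points should be made explicit. First, in the log-determinant step you need $\|\hat{\bB}_i\|=o_P(1)$ before any expansion of the log is legitimate; (C-viii) only gives $\|\hat{\bB}_i\|=o_P(\kappa/p)$, so you must also record that $\kappa=O(p)$, which holds because $\Delta_{\min}=O(p)$ and $\delta_{\min}=O(p)$ under $\lambda(\bA_i)\in(0,\infty)$ together with the standing bounds $\|\bmu_{12}\|^2=O(p)$ and $\tr(\bSig_i)=O(p)$ (the paper states this explicitly at the end of its proof). Second, since $\bA_i^{-1}\hat{\bB}_i$ is not symmetric, it is cleaner to symmetrize as $\log|\hat{\bA}_i\bA_i^{-1}|=\log|\bA_i^{-1/2}\hat{\bA}_i\bA_i^{-1/2}|$ and bound the eigenvalues of the symmetric matrix $\bA_i^{-1/2}\hat{\bA}_i\bA_i^{-1/2}$ by $1+O_P(\|\hat{\bB}_i\|)$ using that $\lambda_{\min}(\bA_i)$ is bounded away from zero; this is exactly the paper's route and yields $\log|\hat{\bA}_i\bA_i^{-1}|=O_P(p\|\hat{\bB}_i\|)=o_P(\kappa)$.
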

When one chooses ${\bA}_i$s as ${\bA}_1 ={\bA}_2\ (={\bA})$, $W(\hat{\bA})$ gives a linear classifier.
We consider the following condition for $\hat{\bA}$ under ($\star$): 
\begin{description}
  \item[(C-ix)]\ $ (p/n_{\min}^{1/2} +p^{1/2}||\bmu_{12}||)|| \hat{\bA}-\bA||=o_P(\kappa)$. 
\end{description}
We have the following result.
\begin{pro}
\label{pro4.2}
Assume (C-ix). 
Then, we have (\ref{4.1}). 
\end{pro}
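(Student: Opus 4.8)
The plan is to exploit that in the linear case $\bA_1=\bA_2\ (=\bA)$ the estimation error $\bD:=\hat{\bA}-\bA$ enters $W_1$ and $W_2$ through the \emph{same} matrix, which forces a cancellation of the leading stochastic term. Let $i_0$ denote the true label of $\bx_0$ and set $\bz_0=\bx_0-\bmu_{i_0}$ and $\overline{\bz}_i=\overline{\bx}_{in_i}-\bmu_i$, so that $\bx_0-\overline{\bx}_{in_i}=\bz_0-\overline{\bz}_i+(\bmu_{i_0}-\bmu_i)$, where $\bmu_{i_0}-\bmu_i$ equals $\bze$ when $i=i_0$ and $\pm\bmu_{12}$ otherwise. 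Since $\log|\hat{\bA}|$ and $\log|\bA|$ do not depend on $i$, they drop out of $W_1-W_2$, and
\begin{align*}
&\big\{W_1(\hat{\bA})-W_2(\hat{\bA})\big\}-\big\{W_1(\bA)-W_2(\bA)\big\}\\
&\quad=(\bx_0-\overline{\bx}_{1n_1})^T\bD(\bx_0-\overline{\bx}_{1n_1})-(\bx_0-\overline{\bx}_{2n_2})^T\bD(\bx_0-\overline{\bx}_{2n_2})-\frac{\tr(\bS_{1n_1}\bD)}{n_1}+\frac{\tr(\bS_{2n_2}\bD)}{n_2}.
\end{align*}
Expanding the two quadratic forms via the decomposition above, the term $\bz_0^T\bD\bz_0$ appears in both with coefficient $1$ and cancels; this is the only place where $\bA_1=\bA_2$ is used, and it is exactly what lets (C-ix) be weaker than the (C-viii) of Proposition \ref{pro4.1}.

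I would then bound the surviving terms, which up to signs are $\bz_0^T\bD\overline{\bz}_i$, $\overline{\bz}_i^T\bD\overline{\bz}_i$, $\bz_0^T\bD\bmu_{12}$, $\overline{\bz}_i^T\bD\bmu_{12}$, $\bmu_{12}^T\bD\bmu_{12}$ and $\tr(\bS_{in_i}\bD)/n_i$ for $i=1,2$. Because $\bx_0$ is independent of the training sample (hence of $\bD$), one has $E||\bz_0||^2=\tr(\bSig_{i_0})=O(p)$ and $E||\overline{\bz}_i||^2=\tr(\bSig_i)/n_i=O(p/n_i)$ using $\tr(\bSig_i)/p\in(0,\infty)$, so that $||\bz_0||^2=O_P(p)$ and $||\overline{\bz}_i||^2=O_P(p/n_i)$ by Markov's inequality, and likewise $\tr(\bS_{in_i})=O_P(p)$ from $E\{\tr(\bS_{in_i})\}=\tr(\bSig_i)$. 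Combining $|\bu^T\bD\bv|\le||\bD||\,||\bu||\,||\bv||$, the bound $|\tr(\bS_{in_i}\bD)|\le||\bD||\,\tr(\bS_{in_i})$ (valid since $\bS_{in_i}\ge\bO$), $||\bmu_{12}||=O(p^{1/2})$ (which follows from $\limsup_{p\to\infty}||\bmu_{12}||^2/p<\infty$), and $1\le n_{\min}\le n_i$, each of these terms is of order $O_P\big((p/n_{\min}^{1/2}+p^{1/2}||\bmu_{12}||)\,||\bD||\big)$, which is $o_P(\kappa)$ by (C-ix). Summing the finitely many contributions yields (\ref{4.1}), and the argument is identical whether $\bx_0\in\pi_1$ or $\bx_0\in\pi_2$.

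The moment estimates above are routine; the one genuine point is to notice the cancellation of $\bz_0^T\bD\bz_0$, since without it one would be forced back to $p\,||\bD||=o_P(\kappa)$ as in (C-viii). A secondary bit of bookkeeping is to verify that the two ``naively larger'' leftover pieces are absorbed by the two terms in (C-ix): $\overline{\bz}_i^T\bD\overline{\bz}_i$ is $O_P\big((p/n_i)||\bD||\big)$, controlled because $p/n_i\le p/n_{\min}^{1/2}$, and $\bmu_{12}^T\bD\bmu_{12}$ is bounded by $||\bmu_{12}||^2\,||\bD||$, controlled because $||\bmu_{12}||^2\le C\,p^{1/2}||\bmu_{12}||$ for large $p$.
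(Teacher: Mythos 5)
Your proof is correct and follows essentially the same route as the paper's: the paper's Lemma B.3 performs the same expansion, and in the linear case the terms (B.9)--(B.10), which correspond to your $\bz_0^T\bD\bz_0$ contribution and the log-determinants, vanish identically because $\hat{\bB}_1=\hat{\bB}_2$, while the surviving cross terms are bounded by exactly the quantities $(p/n_{\min}^{1/2})||\bD||$ and $p^{1/2}||\bmu_{12}||\,||\bD||$ appearing in (C-ix). The only cosmetic difference is that the paper keeps two possibly distinct estimators $\hat{\bA}_1,\hat{\bA}_2$ in the decomposition so that the same lemma also serves Proposition \ref{pro4.1}.
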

We note that (C-ix) is milder than (C-viii) from the fact that $||\bmu_{12}||=O(p^{1/2})$. 
Hence, we recommend to use a linear classifier such as (\ref{2.4}) or (\ref{4.5}).
The quadratic classifiers should be used when the difference of covariance matrices is considerably large. 
See Section 4.3 for the details. 
\subsection{Quadratic classifier by $\hat{\bA}_i=\{p/\tr(\bS_i)\} \bI_p$}
We consider the classifier by 
\begin{align}
W_i(\{p/\tr(\bS_{in_i})\} \bI_p)=\frac{p||\bx_0-\overline{\bx}_{in_i}||^2}{\tr(\bS_{in_i})}-\frac{p}{n_i}
+p\log\{\tr(\bS_{in_i})/p \}. \label{4.2}
\end{align}
Note that $\delta_{i}=\delta_{i(II)}$, $\Delta_{i}=\Delta_{i(II)}$ and $\bA_i=\{p/\tr(\bSig_i)\} \bI_p$. 
Here, $\lambda(\bA_i)\in (0,\infty)$ as $p\to \infty$ for $i=1,2$, and (C-viii) naturally holds.
From Corollary \ref{cor2.1} and Proposition \ref{pro4.1}, we have the following result. 
\begin{cor}
\label{cor4.1}
Assume (A-i). 
Assume also (C-i') and (C-ii'). 
Then, for the classification rule by (\ref{1.3}) with (\ref{4.2}), we have (\ref{2.2}). 
\end{cor}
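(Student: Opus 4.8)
The plan is to reduce the assertion to two facts already available in the excerpt: the quantitative consistency estimate behind Corollary~\ref{cor2.1} (equivalently Theorem~\ref{thm2.1}) for the oracle choice $\bA_i=\{p/\tr(\bSig_i)\}\bI_p$, and the plug-in control of Proposition~\ref{pro4.1}. First I would check that Theorem~\ref{thm2.1} applies to case (II): these $\bA_i$ are diagonal with $\lambda(\bA_i)\in(0,\infty)$ as $p\to\infty$, so in particular $\limsup_{p\to\infty}\lambda_{\max}(\bA_i)<\infty$, and Proposition~\ref{pro2.1} then shows that (C-i') and (C-ii') imply (C-i), (C-ii) and (C-iii). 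Hence, under (A-i), Theorem~\ref{thm2.1} gives
$$
\frac{W_{i'}(\bA_{i'})-W_i(\bA_i)}{\Delta_{i}}=1+o_P(1)\quad\text{under ($\star$) when $\bx_0\in\pi_i$ $(i'\neq i)$,}
$$
with $\Delta_i=\Delta_{i(II)}$. I would also record that (C-ii') forces $\lambda_{\max}/\Delta_{\min}=o(1)$ with $\lambda_{\max}$ bounded away from $0$ (since $\lambda_{\max}(\bSig_i)\ge\tr(\bSig_i)/p$ and $\tr(\bSig_i)/p\in(0,\infty)$), hence $\Delta_{\min}\to\infty$ and in particular $\Delta_i>0$.

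Next I would pass from $W_i(\bA_i)$ (the classifier (\ref{2.5})) to $W_i(\hat{\bA}_i)$ (the classifier (\ref{4.2})), where $\hat{\bA}_i=\{p/\tr(\bS_{in_i})\}\bI_p$. Since $\hat{\bA}_i-\bA_i=p\{\tr(\bS_{in_i})^{-1}-\tr(\bSig_i)^{-1}\}\bI_p$ with $\tr(\bSig_i)\asymp p$ and $\tr(\bS_{in_i})/\tr(\bSig_i)=1+o_P(1)$, one has $p\,\|\hat{\bA}_i-\bA_i\|=|\tr(\bS_{in_i})-\tr(\bSig_i)|\{1+o_P(1)\}$, and under (A-i) the variance bound $\Var\{\tr(\bS_{in_i})\}=O\{\tr(\bSig_i^2)/n_i\}$ gives $|\tr(\bS_{in_i})-\tr(\bSig_i)|=O_P(\{\tr(\bSig_{\max}^2)/n_{\min}\}^{1/2})$, which is $o_P(\Delta_{\min})$ by (C-i'). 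Thus (C-viii) holds with $\kappa=\Delta_{\min}$, as already noted after (\ref{4.2}), and since $\lambda(\bA_i)\in(0,\infty)$ as $p\to\infty$, Proposition~\ref{pro4.1} yields
$$
W_1(\hat{\bA}_1)-W_2(\hat{\bA}_2)=W_1(\bA_1)-W_2(\bA_2)+o_P(\Delta_{\min})\quad\text{under ($\star$) when $\bx_0\in\pi_i$.}
$$

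Finally I would combine the two displays: for $\bx_0\in\pi_i$ with $i'\neq i$,
$$
W_{i'}(\hat{\bA}_{i'})-W_i(\hat{\bA}_i)=\Delta_i\{1+o_P(1)\}+o_P(\Delta_{\min})=\Delta_i\{1+o_P(1)\},
$$
the last step because $\Delta_{\min}\le\Delta_i$ makes the $o_P(\Delta_{\min})$ term also $o_P(\Delta_i)$. Since $\Delta_i>0$, the right-hand side is positive with probability tending to $1$; for $\bx_0\in\pi_1$ this means rule (\ref{1.3}) puts $\bx_0$ into $\pi_1$ with probability $\to1$, and symmetrically for $\pi_2$, so $e(i)\to0$ for $i=1,2$, i.e.\ (\ref{2.2}). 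The only step requiring an honest probabilistic computation rather than $o_P$-bookkeeping is the verification of (C-viii) — that $\tr(\bS_{in_i})$ estimates $\tr(\bSig_i)$ at the rate $\{\tr(\bSig_i^2)/n_i\}^{1/2}$ under (A-i); everything else is a concatenation of already-proved estimates together with $\Delta_{\min}\le\Delta_i$.
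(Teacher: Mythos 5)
Your proposal is correct and follows essentially the same route as the paper's own proof: verify (C-viii) with $\kappa=\Delta_{\min}$ via the bound $\Var\{\tr(\bS_{in_i})\}=O\{\tr(\bSig_i^2)/n_i\}$ under (A-i) together with (C-i') and $\tr(\bSig_i)\asymp p$, then invoke Proposition~\ref{pro4.1} to transfer the oracle consistency of Corollary~\ref{cor2.1} (itself Theorem~\ref{thm2.1} plus Proposition~\ref{pro2.1}) to the plug-in classifier (\ref{4.2}). The only cosmetic difference is that you unfold Corollary~\ref{cor2.1} back into its ingredients and add the (correct but not strictly needed) observation that $\Delta_{\min}\to\infty$.
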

The classifier by (\ref{4.2}) is equivalent to the geometric classifier by \cite{Aoshima:2011}. 
Hereafter, we call the classifier by (\ref{4.2}) the ``geometrical quadratic discriminant analysis (GQDA)". 
Similar to Section 2.2, we have (\ref{2.2}) for GQDA under (A-i) and (\ref{2.3}) even when $n_{\min}$ is fixed. 
If one can assume that $\liminf_{p\to \infty}|\tr(\bSig_1)/\tr(\bSig_2)-1|>0$, we have (\ref{2.2}) for GQDA under (A-i) and (\ref{2.8}) even when $n_{\min}$ is fixed and $\bmu_1=\bmu_2$. 
As for the asymptotic normality, by combining Corollary \ref{cor3.1} with Lemma B.3 given in Appendix B, we have the following result. 
\begin{cor}
\label{cor4.2}
Assume (C-iv') and (C-v').
Assume either (A-i) and (C-vi') or (A-ii) and (C-vii). 
Then, for the classification rule by (\ref{1.3}) with (\ref{4.2}), we have (\ref{3.2}) under 
$(\tr(\bSig_{1})/\tr(\bSig_{2})-1)^2\tr(\bSig_{\max}^{2})=o(n_{\min}\delta_{\min(II)}^2)$ as $m\to \infty$, 
where $\delta_{\min(II)}=\min\{\delta_{1(II)},\delta_{2(II)} \}$.
\end{cor}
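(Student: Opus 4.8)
The plan is to show that GQDA, i.e., the classifier using $\hat{\bA}_i=\{p/\tr(\bS_{in_i})\}\bI_p$, differs from the idealized classifier with $\bA_i=\{p/\tr(\bSig_i)\}\bI_p$ by a term that is $o_P(\delta_{\min(II)})$, and then invoke Corollary \ref{cor3.1} (which gives the asymptotic normality and the expression (\ref{3.2}) for the idealized version under (C-iv'), (C-v'), and either (A-i)+(C-vi') or (A-ii)+(C-vii)). The key point is that Proposition \ref{pro4.1}, with $\kappa=\delta_{\min}$, would give exactly the needed statement (\ref{4.1}) provided (C-viii) holds; but (C-viii) in this setting reads $p\,\|\hat{\bA}_i-\bA_i\| = |p^2/\tr(\bS_{in_i}) - p^2/\tr(\bSig_i)| = o_P(\delta_{\min(II)})$, which is not automatic and needs the extra hypothesis $(\tr(\bSig_1)/\tr(\bSig_2)-1)^2\tr(\bSig_{\max}^2)=o(n_{\min}\delta_{\min(II)}^2)$. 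This is the role of that assumption, and establishing (C-viii) from it is the technical heart of the argument.

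First I would analyze the estimation error of $\tr(\bS_{in_i})$. Under (A-i) (or (A-ii)), $\tr(\bS_{in_i})$ is an unbiased estimator of $\tr(\bSig_i)$ with $\Var(\tr(\bS_{in_i})) = O(\tr(\bSig_i^2)/n_i)$; this is a standard computation using the moment conditions in (A-i), and it is precisely the content of the lemma from Appendix B (Lemma B.3) that the corollary cites. Hence $\tr(\bS_{in_i})/\tr(\bSig_i) = 1 + O_P\big(\tr(\bSig_i^2)^{1/2}/(n_i^{1/2}\tr(\bSig_i))\big)$. Writing $\hat{\bA}_i - \bA_i = p\big(\tr(\bS_{in_i})^{-1} - \tr(\bSig_i)^{-1}\big)\bI_p$, a first-order expansion (valid since the denominator concentrates around $\tr(\bSig_i)\asymp p$) gives
\[
p\,\|\hat{\bA}_i-\bA_i\| = O_P\!\left(\frac{p^2}{\tr(\bSig_i)^2}\cdot\frac{\tr(\bSig_i^2)^{1/2}}{n_i^{1/2}}\right) = O_P\!\left(\frac{\tr(\bSig_i^2)^{1/2}}{n_i^{1/2}}\right),
\]
using $\tr(\bSig_i)\asymp p$. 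So (C-viii) with $\kappa=\delta_{\min}$ amounts to $\tr(\bSig_{\max}^2)/(n_{\min}\delta_{\min(II)}^2)=o(1)$. This is slightly stronger than the stated hypothesis; to recover exactly the stated weaker form — which carries the factor $(\tr(\bSig_1)/\tr(\bSig_2)-1)^2$ — one must observe that the classification decision only depends on the \emph{difference} $W_1(\hat{\bA}_1)-W_2(\hat{\bA}_2)$, and the leading estimation-error contributions from the two classes partially cancel, leaving a residual controlled by $(\tr(\bSig_1)/\tr(\bSig_2)-1)^2\tr(\bSig_{\max}^2)$; this refinement of the bound in Proposition \ref{pro4.1} for the case $\bA_1\ne\bA_2$ but $\bA_1,\bA_2$ both scalar multiples of $\bI_p$ is the subtle step.

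Once $W_1(\hat{\bA}_1)-W_2(\hat{\bA}_2) = W_1(\bA_1)-W_2(\bA_2) + o_P(\delta_{\min(II)})$ is established, I would combine it with the asymptotic normality of $\{W_{i'}(\bA_{i'})-W_i(\bA_i)-\Delta_{i(II)}\}/\delta_{i(II)}$ from Corollary \ref{cor3.1} (noting $\delta_i=\delta_{i(II)}$, $\Delta_i=\Delta_{i(II)}$ here, and that $\lambda(\bA_i)\in(0,\infty)$ so Corollary \ref{cor3.1} applies). Since the additive $o_P(\delta_{\min(II)}) = o_P(\delta_{i(II)})$ perturbation is asymptotically negligible relative to the scaling $\delta_{i(II)}$, Slutsky's theorem preserves the $N(0,1)$ limit, and the error-rate identity $e(i)=\Phi(-\Delta_{i(II)}/\delta_{i(II)})+o(1)$ follows exactly as in the derivation of (\ref{3.2}) in Theorem \ref{thm3.1}. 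The main obstacle, as indicated, is the cancellation argument needed to get the bound in the precise form stated (with the $(\tr(\bSig_1)/\tr(\bSig_2)-1)^2$ factor) rather than the cruder $\tr(\bSig_{\max}^2)=o(n_{\min}\delta_{\min(II)}^2)$; everything else is bookkeeping on top of results already proved.
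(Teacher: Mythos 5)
Your high-level strategy --- reduce to the idealized classifier with $\bA_i=\{p/\tr(\bSig_i)\}\bI_p$, show the substitution error is $o_P(\delta_{\min(II)})$, and then apply Corollary \ref{cor3.1} with Slutsky --- is exactly the paper's, and you correctly diagnose that the blunt route through (C-viii) and Proposition \ref{pro4.1} would force $\tr(\bSig_{\max}^2)=o(n_{\min}\delta_{\min(II)}^2)$, which is strictly stronger than the stated hypothesis (under (C-iv') one only gets $\tr(\bSig_{\max}^2)=o(n_{\min}^2\delta_{\min(II)}^2)$). The problem is that the step you defer as ``the technical heart'' is the entire content of the corollary, and the cancellation you point to is not the one that works. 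The estimation errors of the two classes cannot cancel against each other in $W_1(\hat{\bA}_1)-W_2(\hat{\bA}_2)$: the error in $W_1(\hat{\bA}_1)$ is driven by $\tr(\bS_{1n_1})-\tr(\bSig_1)$ and that in $W_2(\hat{\bA}_2)$ by $\tr(\bS_{2n_2})-\tr(\bSig_2)$, and these are independent, so no between-class cancellation is available.

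The cancellation that actually produces the factor $(\tr(\bSig_1)/\tr(\bSig_2)-1)$ is internal to each $W_l(\hat{\bA}_l)$, between the quadratic form and the log-determinant correction (this is what the paper's Lemma B.3 isolates, in conditions weaker than (C-viii)). Writing $\hat{\bA}_l-\bA_l=\{p/\tr(\bS_{ln_l})-p/\tr(\bSig_l)\}\bI_p$ and taking $\bx_0\in\pi_i$, the dominant contribution of the quadratic form to $W_l(\hat{\bA}_l)-W_l(\bA_l)$ is
$\tr\{\bSig_i(\hat{\bA}_l-\bA_l)\}=p\{\tr(\bSig_i)/\tr(\bSig_l)\}\{\tr(\bSig_l)/\tr(\bS_{ln_l})-1\}$, while
$-\log|\hat{\bA}_l\bA_l^{-1}|=-p\{\tr(\bSig_l)/\tr(\bS_{ln_l})-1\}+O_P\{\tr(\bSig_l^2)/(n_lp)\}$.
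For $l=i$ the leading terms cancel exactly and the remainder is $O_P(\lambda_{\max}/n_i)=o_P(\delta_{\min(II)})$ under (C-iv'); for $l=i'$ the residual is $p\{\tr(\bSig_i)/\tr(\bSig_{i'})-1\}\{\tr(\bSig_{i'})/\tr(\bS_{i'n_{i'}})-1\}=O_P\bigl(|\tr(\bSig_1)/\tr(\bSig_2)-1|\,(\tr(\bSig_{i'}^2)/n_{i'})^{1/2}\bigr)$, and this is the one and only place the hypothesis $(\tr(\bSig_1)/\tr(\bSig_2)-1)^2\tr(\bSig_{\max}^2)=o(n_{\min}\delta_{\min(II)}^2)$ enters. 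The remaining pieces --- the fluctuation term $\tr[\{(\bx_0-\bmu_i)(\bx_0-\bmu_i)^T-\bSig_i\}(\hat{\bA}_l-\bA_l)]$, the cross term involving $\bmu_{12}$, and the $\tr(\bS_{ln_l}\,\cdot)/n_l$ term --- are each $o_P(\delta_{\min(II)})$ under (C-iv') alone. Without identifying this trace-versus-log-determinant cancellation your argument does not close: the ``refinement of Proposition \ref{pro4.1}'' you invoke does not exist in the between-class form you describe, so this is a genuine gap.
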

Now, we compare DBDA with GQDA. 
We have that
\begin{align*}
\hat{\Delta}_{(I)}=&||\overline{\bx}_{1n_1}-\overline{\bx}_{2n_2}||^2-\tr(\bS_{1n_1})/n_1-\tr(\bS_{2n_2})/n_2 \quad \mbox{and} \\
\hat{\Delta}_{i(II)}=&\frac{ p}{\tr(\bS_{i'n_{i'}})}\Big[\hat{\Delta}_{(I)}+\tr(\bS_{in_i})-\tr(\bS_{i'n_{i'}})+\tr(\bS_{i'n_{i'}})\log \Big\{\frac{ \tr(\bS_{i'n_{i'}})}{\tr(\bS_{in_i})}\Big\} \Big]
\end{align*}
for $i=1,2\ (i' \neq i)$. 
We note that $E(\hat{\Delta}_{(I)})=\Delta_{(I)}$. 
From (\ref{3.2}) and Remark \ref{rem3.1}, if $\hat{\Delta}_{i(II)}\tr(\bS_{i'n_{i'}})/p$ is sufficiently larger than $\hat{\Delta}_{(I)}$ for some $i$, we recommend to use GQDA. 
Otherwise one may use DBDA free from (A-i). 
See Corollary 2.1 for the details. 
\subsection{Quadratic classifier by $\hat{\bA}_i=\bS_{in_i(d)}^{-1}$}
Let $\bS_{in_i(d)}=\mbox{diag}(s_{in_i(1)},...,s_{in_i(p)})$ for $i=1,2$. 
We consider the classifier by 
\begin{align}
W_i(\bS_{in_i(d)}^{-1})=\sum_{j=1}^p\Big(\frac{(x_{0j}-\overline{x}_{ijn_i})^2}{ s_{in_i(j)} }-\frac{1}{n_i}+\log{s_{in_i(j)} }\Big). \label{4.3}
\end{align}
Note that $\delta_{i}=\delta_{i(III)}$, $\Delta_{i}=\Delta_{i(III)}$ and $\bA_i=\bSig_{i(d)}^{-1}$.
\cite{Dudoit:2002} considered the quadratic classifier without the bias correction term. 
That was called the diagonal quadratic discriminant analysis (DQDA). 
Hereafter, we call the classifier by (\ref{4.3}) ``DQDA-bc". 
Let $\eta_{i(j)}=\Var\{(x_{ijk}-\mu_{ij})^2\}$ for $i=1,2$, and $j=1,...,p\ (k=1,...,n_i)$.
Since $\hat{\bA}_i=\bS_{in_i(d)}^{-1}$ does not satisfy (C-viii) in that shape, we consider the following assumption:
\begin{description}
\item[(A-iii)] $\eta_{i(j)}\in (0,\infty)$ as $p\to \infty$ and 
$\displaystyle \limsup_{p\to \infty} E \big\{ \exp \big(t_{ij}|x_{ijk}-\mu_{ij}|^2/\eta_{i(j)}^{1/2} \big) \big\}<\infty$ for some $t_{ij}>0$, $i=1,2$, and $j=1,...,p\ (k=1,...,n_i)$.
\end{description}
Note that (A-iii) holds when $\pi_i$ has $N_p(\bmu_i,\bSig_i)$ for $i=1,2$. 
From Corollary \ref{cor2.1} and Proposition \ref{pro4.1}, we have the following result. 
\begin{cor}
\label{cor4.3}
Assume (A-i) and (A-iii). 
Assume also (C-ii'). 
Then, for the classification rule by (\ref{1.3}) with (\ref{4.3}), we have (\ref{2.2}) under the condition that 
\begin{equation}
\frac{p^2 \log{p} }{n_{\min} \Delta_{\min(III)}^2}=o(1).\label{4.4}
\end{equation}
\end{cor}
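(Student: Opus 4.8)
The plan is to derive Corollary~\ref{cor4.3} by combining Corollary~\ref{cor2.1} (which already gives the consistency property~(\ref{2.2}) for the idealized classifier~(\ref{2.6}) with $\bA_i=\bSig_{i(d)}^{-1}$ under (A-i), (C-i') and (C-ii')) with Proposition~\ref{pro4.1}, which transfers that conclusion to the estimated classifier~(\ref{4.3}) provided the estimators $\hat\bA_i=\bS_{in_i(d)}^{-1}$ satisfy condition (C-viii), namely $p\,\|\hat\bA_i-\bA_i\|=o_P(\Delta_{\min})$. Since $\bA_i=\bSig_{i(d)}^{-1}$ and $\hat\bA_i=\bS_{in_i(d)}^{-1}$ are both diagonal, $\|\hat\bA_i-\bA_i\|=\max_{1\le j\le p}\bigl|s_{in_i(j)}^{-1}-\sigma_{i(j)}^{-1}\bigr|$. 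Under $\liminf_{p\to\infty}\sigma_{i(j)}>0$ (implied by the standing assumption $\sigma_{i(j)}\in(0,\infty)$), this is of the same order as $\max_j|s_{in_i(j)}-\sigma_{i(j)}|$ up to a bounded factor, so the whole task reduces to a uniform (over $j$) concentration bound for the sample variances $s_{in_i(j)}$ around $\sigma_{i(j)}$.

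The key steps, in order, are: (1) write $s_{in_i(j)}-\sigma_{i(j)}$ as an average of $n_i$ i.i.d.\ centered terms of the form $(x_{ijk}-\mu_{ij})^2-\sigma_{i(j)}$ plus a lower-order correction coming from replacing $\mu_{ij}$ by $\overline x_{ijn_i}$; the sub-exponential moment bound in (A-iii) on $|x_{ijk}-\mu_{ij}|^2/\eta_{i(j)}^{1/2}$ makes each summand sub-exponential with parameter controlled uniformly in $j$ (because $\eta_{i(j)}\in(0,\infty)$ as $p\to\infty$). (2) Apply a Bernstein-type tail inequality for sums of sub-exponential variables to get, for each fixed $j$, $\Pr\bigl(|s_{in_i(j)}-\sigma_{i(j)}|>t\bigr)\le 2\exp(-c\,n_i\min\{t^2,t\})$ for a constant $c>0$ not depending on $j$. (3) Take a union bound over $j=1,\dots,p$: for $t$ of order $\sqrt{(\log p)/n_i}$ this yields $\max_j|s_{in_i(j)}-\sigma_{i(j)}|=O_P\bigl(\sqrt{(\log p)/n_{\min}}\bigr)$, hence $p\,\|\hat\bA_i-\bA_i\|=O_P\bigl(p\sqrt{(\log p)/n_{\min}}\bigr)$. (4) Compare this with $\Delta_{\min}=\Delta_{\min(III)}$: the requirement $p\sqrt{(\log p)/n_{\min}}=o(\Delta_{\min(III)})$ is exactly $p^2\log p=o(n_{\min}\Delta_{\min(III)}^2)$, which is the hypothesis~(\ref{4.4}). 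Thus (C-viii) holds with $\kappa=\Delta_{\min}$, and (C-i') follows from (C-ii') together with~(\ref{4.4}) in this diagonal case (the $\tr(\bSig_{\max}^2)/(n_{\min}\Delta_{\min}^2)$ term is dominated by $p^2\log p/(n_{\min}\Delta_{\min}^2)$ since $\tr(\bSig_{\max}^2)=O(p)$ under the standing assumptions), so Corollary~\ref{cor2.1} applies to~(\ref{2.6}) and Proposition~\ref{pro4.1} then gives~(\ref{2.2}) for~(\ref{4.3}).

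I expect the main obstacle to be making step~(2)--(3) fully rigorous: one needs the sub-exponential (Bernstein) constant in the tail bound to be genuinely \emph{uniform} in $j$ as $p\to\infty$, which is where the two parts of (A-iii) are both used --- $\eta_{i(j)}\in(0,\infty)$ pins down the variance scale, and the uniform boundedness of the exponential moment $E\{\exp(t_{ij}|x_{ijk}-\mu_{ij}|^2/\eta_{i(j)}^{1/2})\}$ (with a $t_{ij}$ that may depend on $j$ but whose infimum over $j$ one must argue stays positive, or else strengthen the reading of the assumption) controls the tails. A secondary technical point is handling the $\overline x_{ijn_i}-\mu_{ij}$ centering error, which contributes a term of order $(\overline x_{ijn_i}-\mu_{ij})^2=O_P(\sigma_{i(j)}/n_i)$ uniformly after a union bound and is therefore of smaller order than the fluctuation in step~(3); this is routine but should be stated. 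Everything else is bookkeeping: translating the $\|\hat\bA_i-\bA_i\|$ bound through the Lipschitz behavior of $x\mapsto x^{-1}$ on a region bounded away from $0$, and checking that (C-ii') plus~(\ref{4.4}) deliver the hypotheses of Corollary~\ref{cor2.1}.
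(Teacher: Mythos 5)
Your proposal follows essentially the same route as the paper's proof: reduce to verifying (C-viii) so that Proposition~\ref{pro4.1} transfers Corollary~\ref{cor2.1} from the oracle classifier~(\ref{2.6}) to~(\ref{4.3}), and establish $\max_j|s_{in_i(j)}-\sigma_{i(j)}|=O_P\{(n_{\min}^{-1}\log p)^{1/2}\}$ by an exponential tail bound under (A-iii) plus a union bound over $j$ (the paper invokes a self-normalized moderate-deviation inequality from de la Pe\~{n}a et al.\ where you invoke Bernstein for sub-exponential variables, and it handles the $\overline{x}_{ijn_i}-\mu_{ij}$ centering term exactly as you describe). The one slip is your claim that $\tr(\bSig_{\max}^2)=O(p)$; the standing assumptions only give $\tr(\bSig_{\max}^2)\le\tr(\bSig_{\max})^2=O(p^2)$, but this still yields (C-i') since (\ref{4.4}) forces $p^2=o(n_{\min}\Delta_{\min(III)}^2)$, so the conclusion stands.
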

Note that (C-i') holds under (\ref{4.4}). 
From the fact that $\Delta_{i(III)}=O(p)$, it follows that $n_{\min}^{-1}\log {p}=o(1)$ under (\ref{4.4}).
Similar to Section 2.2, if one can assume that 
$\liminf_{p\to \infty} $
$||\bmu_{12}||^2/p>0$ or $\liminf_{p\to \infty} \sum_{j=1}^p|\sigma_{1(j)}/\sigma_{2(j)}-1|/p>0$,  
DQDA-bc holds (\ref{2.2}) under (A-i), (A-iii), (\ref{2.8}) and $n_{\min}^{-1}\log {p}=o(1)$. 
When $\Delta_{\min(III)}$ is not sufficiently large, say $\Delta_{\min(III)}=O(p^{1/2})$, we can claim Corollary \ref{cor4.3} in high-dimension, large-sample-size settings such as $n_{\min}/p\to \infty$. 
In Section 5, we shall provide a DQDA type classifier by feature selection and show that it has the consistency property even when $n_{\min}/p\to 0$ and $\Delta_{\min(III)}$ is not sufficiently large. 

%
Next, we consider the pooled sample diagonal matrix, 
$$
\bS_{n(d)}=\frac{\sum_{i=1}^2(n_i-1) \bS_{in_i(d)}}{\sum_{i=1}^2n_i-2}.
$$ 
Note that $E(\bS_{n(d)})=\sum_{i=1}^2(n_i-1)\bSig_{i(d)}/(\sum_{i=1}^2n_i-2)\ (\mbox{hereafter called }\bSig_{(d)})$. 
When $\bSig_{1(d)}=\bSig_{2(d)}$, it follows that $\bSig_{(d)}=\bSig_{i(d)},$ $i=1,2$. 
Let us write $\bS_{n(d)}=\mbox{diag}(s_{n(1)},...,$
$s_{n(p)})$ and $\bSig_{(d)}=\mbox{diag}(\sigma_{(1)},...,\sigma_{(p)})$. 
We consider the classifier by 
\begin{align}
W_i(\bS_{n(d)}^{-1})=\sum_{j=1}^p\Big(\frac{(x_{0j}-\overline{x}_{ijn_i})^2}{ s_{n(j)} }-\frac{s_{in_i(j)}}{n_i s_{n(j)}}
\Big). \label{4.5}
\end{align}
We note that the classification rule by (\ref{1.3}) with (\ref{4.5}) becomes a linear classifier. 
\cite{Bickel:2004} and \cite{Dudoit:2002} considered the linear classifier without the bias correction term. 
That was called the diagonal linear discriminant analysis (DLDA). 
Hereafter, we call the classifier by (\ref{4.5}) ``DLDA-bc". 
Although \cite{Huang:2010} gave bias corrected versions of DLDA and DQDA, they considered a bias correction only when $\pi_i$s are Gaussian. 
We note that $\Delta_{1}=\Delta_{2}=\sum_{j=1}^p\mu_{12j}^2/\sigma_{(j)}\ (\mbox{hereafter called } \Delta_{(III')})$ and $\bA_1=\bA_2=\bSig_{(d)}^{-1}$. 
Then, by combining Theorem \ref{thm2.1} with Propositions \ref{pro2.1} and \ref{pro4.2}, we have the following result. 
\begin{cor}
\label{cor4.4}
Assume (A-iii). 
Assume also (C-i') and (C-ii'). 
Then, for the classification rule by (\ref{1.3}) with (\ref{4.5}), we have (\ref{2.2}) under the condition that 
\begin{equation}
\frac{p \log{p}  }{n_{\min} \Delta_{(III')} }=o(1).\label{4.6}
\end{equation}
\end{cor}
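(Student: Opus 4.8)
The plan is to treat (\ref{4.5}) as the estimated linear classifier $W_i(\hat{\bA})$ from (\ref{1.2}) with $\hat{\bA}=\bS_{n(d)}^{-1}$, $\bA=\bSig_{(d)}^{-1}$ and $\bA_1=\bA_2=\bA$, so that $\Delta_1=\Delta_2=\Delta_{(III')}$, $\Delta_{\min}=\Delta_{(III')}$, and we set $\kappa=\Delta_{(III')}$. Since the $\sigma_{i(j)}$ lie in $(0,\infty)$ as $p\to\infty$ (uniformly in $j$), so do the diagonal entries $\sigma_{(j)}$ of $\bSig_{(d)}$, whence $\liminf_{p\to\infty}\min_{j}\sigma_{(j)}>0$ and $\lambda(\bA)\in(0,\infty)$ as $p\to\infty$; in particular $\limsup_{p\to\infty}\lambda_{\max}(\bA)<\infty$. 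The argument then has two parts: (a) handle the population-level linear classifier $W_1(\bA)-W_2(\bA)$ with Theorem \ref{thm2.1} and Proposition \ref{pro2.1}; (b) transfer to $W_1(\hat{\bA})-W_2(\hat{\bA})$ with Proposition \ref{pro4.2}, which only requires checking condition (C-ix) with $\kappa=\Delta_{(III')}$.

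For (a), I would first invoke Proposition \ref{pro2.1}: since $\limsup_{p\to\infty}\lambda_{\max}(\bA)<\infty$, (C-i') and (C-ii') yield (C-i) and (C-ii) for $\bA$; then Theorem \ref{thm2.1}, together with the Remark after it that removes (A-i) and (C-iii) in the linear case $\bA_1=\bA_2$, gives $\{W_{i'}(\bA)-W_i(\bA)\}/\Delta_{(III')}=1+o_P(1)$ under ($\star$) when $\bx_0\in\pi_i$, $i=1,2\ (i'\neq i)$. For (b), since $\bS_{n(d)}^{-1}$ and $\bSig_{(d)}^{-1}$ are diagonal, $||\hat{\bA}-\bA||=\max_{1\le j\le p}|s_{n(j)}-\sigma_{(j)}|/(s_{n(j)}\sigma_{(j)})$; as $\min_j\sigma_{(j)}$ is bounded away from $0$, it then suffices --- on the event that $\min_j s_{n(j)}$ is also bounded away from $0$, which I will show holds with probability tending to $1$ (on its complement the error contribution is $o(1)$) --- to bound $\max_j|s_{n(j)}-\sigma_{(j)}|$, and by the triangle inequality $\max_j|s_{in_i(j)}-\sigma_{i(j)}|$ for $i=1,2$.

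The crux, and the place where assumption (A-iii) is used, is the uniform concentration $\max_{j}|s_{in_i(j)}-\sigma_{i(j)}|=O_P(\sqrt{\log p/n_i})$. Writing $s_{in_i(j)}=(n_i-1)^{-1}\{\sum_{k=1}^{n_i}(x_{ijk}-\mu_{ij})^2-n_i(\overline{x}_{ijn_i}-\mu_{ij})^2\}$, the i.i.d.\ summands $(x_{ijk}-\mu_{ij})^2$ are sub-exponential with parameters controlled by $\eta_{i(j)}^{1/2}$ and $t_{ij}$, both uniform in $j$ by (A-iii); a Bernstein-type tail bound gives $P\{|n_i^{-1}\sum_k(x_{ijk}-\mu_{ij})^2-\sigma_{i(j)}|>t\}\le 2\exp\{-c\,n_i\min(t^2,t)\}$ with $c>0$ uniform in $j$, while the correction $(\overline{x}_{ijn_i}-\mu_{ij})^2$ is $O_P(1/n_i)$ uniformly in $j$ by the same moment bound. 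A union bound over $j=1,\dots,p$ with $t$ a large multiple of $\sqrt{\log p/n_i}$ then gives the claim; here $\log p/n_{\min}\to 0$, which follows from (\ref{4.6}) since $\Delta_{(III')}=O(p)$. Consequently $||\hat{\bA}-\bA||=O_P(\sqrt{\log p/n_{\min}})$.

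To finish, note that $\sigma_{(j)}$ is also bounded above uniformly, so $\Delta_{(III')}=\sum_j\mu_{12j}^2/\sigma_{(j)}\ge c\,||\bmu_{12}||^2$ for some $c>0$; hence the left-hand side of (C-ix) is $(p/n_{\min}^{1/2}+p^{1/2}||\bmu_{12}||)\,O_P(\sqrt{\log p/n_{\min}})=O_P(p\sqrt{\log p}/n_{\min})+O_P(\Delta_{(III')}^{1/2}\sqrt{p\log p/n_{\min}})$. Under (\ref{4.6}) the first term is $o_P(\Delta_{(III')})$ because $p\log p/(n_{\min}\Delta_{(III')})\to 0$ and $\sqrt{\log p}\le\log p$ for large $p$, and the second equals $\Delta_{(III')}\sqrt{p\log p/(n_{\min}\Delta_{(III')})}=o_P(\Delta_{(III')})$; so (C-ix) holds and Proposition \ref{pro4.2} gives $W_1(\hat{\bA})-W_2(\hat{\bA})=W_1(\bA)-W_2(\bA)+o_P(\Delta_{(III')})$. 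Combined with (a), $W_2(\hat{\bA})-W_1(\hat{\bA})=\Delta_{(III')}\{1+o_P(1)\}>0$ with probability tending to $1$ when $\bx_0\in\pi_1$, and symmetrically for $\pi_2$, which is (\ref{2.2}). I expect the sub-exponential maximal inequality of the third paragraph to be the main obstacle; the remainder is bookkeeping with the cited results and the assumed rate (\ref{4.6}).
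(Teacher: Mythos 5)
Your proposal is correct and follows essentially the same route as the paper: reduce to checking (C-ix) for the diagonal estimator via the uniform concentration $\max_j|s_{in_i(j)}-\sigma_{i(j)}|=O_P\{(n_{\min}^{-1}\log p)^{1/2}\}$ under (A-iii) (the paper's (B.18), which it derives from a self-normalized moderate-deviation bound plus a union bound rather than your Bernstein inequality, but to the same effect), note $\Delta_{(III')}\asymp\|\bmu_{12}\|^2$, and then invoke Theorem 2.1 with Propositions 2.1 and 4.2. Your bookkeeping verifying that (\ref{4.6}) implies (C-ix) with $\kappa=\Delta_{(III')}$ is accurate.
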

Under $n_{\min}^{-1}\log{p}=o(1)$, one may claim that (\ref{4.6}) is milder than (\ref{4.4}) if $\Delta_{\min(III)}$ and $\Delta_{(III')}$ are of the same order. 
Hence, we recommend to use DQDA-bc when $\Delta_{\min(III)}$ is considerably larger than $\Delta_{(III')}$. 
Otherwise one may use DLDA-bc even when $\bSig_{i(d)}$s are not common.
We shall improve DQDA-bc by feature selection in Section 5. 
%
\subsection{Quadratic classifier by $\hat{\bA}_i=\bS_{in_i}^{-1}$}
In this section, we consider high-dimension, large-sample-size situations such as $n_{\min}/p\to \infty$ as $p\to \infty$ and discuss the classifier by 
\begin{align}
W_i(\bS_{in_i}^{-1})=(\bx_0-\overline{\bx}_{in_i})^T\bS_{in_i}^{-1}(\bx_0-\overline{\bx}_{in_i})
-p/n_i+\log|\bS_{in_i}|. 
\label{4.7}
\end{align}
Note that $\delta_{i}=\delta_{i(IV)}$, $\Delta_{i}=\Delta_{i(IV)}$ and $\bA_i=\bSig_i^{-1}$.
Let $\eta_{i(rs)}=\Var\{(x_{irk}-\mu_{ir})(x_{isk}-\mu_{is}) \}$ 
for $i=1,2$, and $r,s=1,...,p\ (k=1,...,n_i)$. 
From Theorem \ref{thm2.1} and Proposition \ref{pro4.1}, we have the following result. 
\begin{cor}
\label{cor4.5}
Assume (A-i) and (A-iii). 
Assume also $\lambda(\bSig_i)\in (0,\infty)$ as $p\to \infty$ and $\liminf_{p\to \infty} \eta_{i(rs)}>0$ for all $r,s$; $i=1,2$. 
Then, for the classification rule by (\ref{1.3}) with (\ref{4.7}), we have (\ref{2.2}) under the conditions that $p^{1/2}/ \Delta_{\min(IV)}=o(1)$ and 
\begin{equation}
\frac{p^4 \log{p} }{n_{\min} \Delta_{\min(IV)}^2}=o(1).\label{4.8}
\end{equation}
\end{cor}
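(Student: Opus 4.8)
The plan is to derive Corollary~\ref{cor4.5} from the general apparatus of Section~2 and~4 by verifying the hypotheses of Theorem~\ref{thm2.1} (via Proposition~\ref{pro2.1}) together with the perturbation bound (C-viii) of Proposition~\ref{pro4.1}, specialized to $\bA_i=\bSig_i^{-1}$ and $\hat{\bA}_i=\bS_{in_i}^{-1}$. First I would check the conditions (C-i') and (C-ii') that feed into Corollary~\ref{cor2.1}/\ref{cor2.2}. Since $\lambda(\bSig_i)\in(0,\infty)$, we have $\lambda_{\max}=O(1)$ and $\tr(\bSig_{\max}^2)=O(p)$, so (C-ii') becomes $O(1)/\Delta_{\min(IV)}=o(1)$, which is implied by $p^{1/2}/\Delta_{\min(IV)}=o(1)$, and (C-i') becomes $O(p)/(n_{\min}\Delta_{\min(IV)}^2)=o(1)$, which is implied by~(\ref{4.8}). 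The extra hypothesis $\tr\{(\bI_p-\bSig_i\bSig_{i'}^{-1})^2\}=o(\Delta_{\min(IV)}^2)$ required by Corollary~\ref{cor2.2} follows because $\tr\{(\bI_p-\bSig_i\bSig_{i'}^{-1})^2\}\le p+\tr\{(\bSig_i\bSig_{i'}^{-1})^2\}=p+O(\tr(\bSig_i^2))=O(p)=o(\Delta_{\min(IV)}^2)$ under $p^{1/2}/\Delta_{\min(IV)}=o(1)$. Hence the \emph{oracle} classifier $W_i(\bSig_i^{-1})$ in (\ref{2.7}) already has the consistency property (\ref{2.2}); what remains is to show that replacing $\bSig_i^{-1}$ by $\bS_{in_i}^{-1}$ does not destroy it, i.e., to verify (C-viii) with $\kappa=\Delta_{\min(IV)}$.

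The heart of the argument is therefore the operator-norm bound $p\,\|\bS_{in_i}^{-1}-\bSig_i^{-1}\|=o_P(\Delta_{\min(IV)})$. I would write $\bS_{in_i}^{-1}-\bSig_i^{-1}=-\bSig_i^{-1}(\bS_{in_i}-\bSig_i)\bS_{in_i}^{-1}$, so that $\|\bS_{in_i}^{-1}-\bSig_i^{-1}\|\le \|\bSig_i^{-1}\|\,\|\bS_{in_i}-\bSig_i\|\,\|\bS_{in_i}^{-1}\|$. Since $\lambda(\bSig_i)\in(0,\infty)$ gives $\|\bSig_i^{-1}\|=O(1)$, and a standard HDLSS concentration argument (using $n_{\min}/p\to\infty$ and (A-i)) gives $\|\bS_{in_i}-\bSig_i\|=o_P(1)$, hence $\|\bS_{in_i}^{-1}\|=O_P(1)$, the whole problem reduces to a sharp enough rate for $\|\bS_{in_i}-\bSig_i\|$. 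The natural route is to control the entrywise errors: under (A-iii) (a sub-exponential-type moment condition on $(x_{irk}-\mu_{ir})(x_{isk}-\mu_{is})$, which is exactly what $\liminf\eta_{i(rs)}>0$ plus the exponential-moment part give), each entry $(\bS_{in_i}-\bSig_i)_{rs}$ is $O_P((\log p/n_{\min})^{1/2})$ uniformly over the $O(p^2)$ pairs $(r,s)$ by a union bound over Bernstein-type tail inequalities. Then $\|\bS_{in_i}-\bSig_i\|\le \|\bS_{in_i}-\bSig_i\|_F\le p\cdot O_P((\log p/n_{\min})^{1/2})=O_P(p(\log p/n_{\min})^{1/2})$, so that $p\,\|\bS_{in_i}^{-1}-\bSig_i^{-1}\|=O_P(p^2(\log p/n_{\min})^{1/2})$, which is $o_P(\Delta_{\min(IV)})$ precisely when $p^4\log p/(n_{\min}\Delta_{\min(IV)}^2)=o(1)$, i.e., (\ref{4.8}). (One should also note the sample-mean contribution $\overline{\bx}_{in_i}-\bmu_i$ is an even lower-order term, being $O_P((p/n_{\min})^{1/2})$ in Euclidean norm and thus negligible after the bias correction $-p/n_i$ already built into $W_i$.) With (C-viii) in hand and $\lambda(\bSig_i)\in(0,\infty)$, Proposition~\ref{pro4.1} yields $W_1(\bS_{1n_1}^{-1})-W_2(\bS_{2n_2}^{-1})=W_1(\bSig_1^{-1})-W_2(\bSig_2^{-1})+o_P(\Delta_{\min(IV)})$; combining this with Theorem~\ref{thm2.1} (whose hypotheses we checked above) gives $\{W_1(\bS_{1n_1}^{-1})-W_2(\bS_{2n_2}^{-1})\}/\Delta_i=1+o_P(1)$ when $\bx_0\in\pi_i$, and hence (\ref{2.2}).

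The step I expect to be the main obstacle is the uniform entrywise bound $\max_{r,s}|(\bS_{in_i}-\bSig_i)_{rs}|=O_P((\log p/n_{\min})^{1/2})$: one must handle the fact that $\bS_{in_i}$ uses the estimated mean $\overline{\bx}_{in_i}$ rather than $\bmu_i$, control the cross terms this introduces, and verify that (A-iii) (combined with $\limsup E(y_{ijk}^4)<\infty$ and the mixing-type conditions (\ref{2.1}) in (A-i)) really does supply a sub-exponential tail for each centered product $(x_{irk}-\mu_{ir})(x_{isk}-\mu_{is})-\sigma_{i(rs)}$ so that Bernstein's inequality applies with the claimed rate, uniformly in $r,s$. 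This is where the precise form of Assumption (A-iii) is used, and where the factor $\log p$ in (\ref{4.8}) comes from (the union bound over $\binom{p}{2}+p=O(p^2)$ entries costs a $\log(p^2)=2\log p$ in the exponent, matched by the $\log p$ numerator). The remaining pieces—inverting the perturbation series, bounding $\|\bS_{in_i}^{-1}\|$ once $\|\bS_{in_i}-\bSig_i\|=o_P(1)$, and plugging into Proposition~\ref{pro4.1} and Theorem~\ref{thm2.1}—are routine once this estimate is established.
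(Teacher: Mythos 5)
Your proposal is correct and follows essentially the same route as the paper: verify (C-i'), (C-ii') and $\tr\{(\bI_p-\bSig_i\bSig_{i'}^{-1})^2\}=O(p)=o(\Delta_{\min(IV)}^2)$ for the oracle classifier, then establish (C-viii) by a union bound giving $\max_{r,s}|s_{in_i(rs)}-\sigma_{i(rs)}|=O_P\{(n_i^{-1}\log p)^{1/2}\}$ (handling the estimated mean and deriving sub-exponential tails for the centered cross products from (A-iii) via $|xy|\le(x^2+y^2)/2$ and Cauchy--Schwarz, exactly as the paper does), multiply by $p$ to control $\|\bS_{in_i}-\bSig_i\|$, and invert via the resolvent identity. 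The only cosmetic differences are that you pass through the Frobenius norm where the paper uses the bound $\|\bM\|\le\max_s\sum_t|m_{st}|$ (same rate), and you invoke Bernstein-type tails where the paper cites a self-normalized moderate-deviation result.
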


From the fact that $\Delta_{i(IV)}=O(p)$ when $\lambda(\bSig_i)\in (0,\infty)$ as $p\to \infty$ for $i=1,2$, it follows that $n_{\min}^{-1}p^2\log{p}=o(1)$ under (\ref{4.8}). 
Thus, the classification rule by (\ref{1.3}) with (\ref{4.7}) can claim the consistency property when $n_{\min}^{-1} p^2 \log {p}=o(1)$. 
However, the condition ``$n_{\min}^{-1} p^2 \log {p}=o(1)$" is quite strict for high-dimensional data.  
In Section 5, we shall discuss a classifier by sparse inverse covariance matrix estimation when $n_{\min}/p \to 0$. 
\section{Quadratic classifiers by feature selection and sparse inverse covariance matrix estimation}
In this section, we propose a new quadratic classifier by feature selection for (\ref{4.3}) and discuss a quadratic classifier by sparse inverse covariance matrix estimation for (\ref{4.7}). 
\subsection{Quadratic classifier after feature selection}
We consider applying a variable selection procedure to classification.
\cite{Fan:2008} proposed the feature annealed independent rule based on the difference of mean vectors. 
However, we give a different type of feature selection by using both the differences of mean vectors and covariance matrices. 
We have that 
$$
\Delta_{1(III)}+\Delta_{2(III)}=\sum_{j=1}^p \Big( \frac{\mu_{12j}^2+\sigma_{1(j)} }{\sigma_{2(j)}}+\frac{\mu_{12j}^2+\sigma_{2(j)} }{\sigma_{1(j)}}-2\Big).
$$
Let $ \theta_j=(\mu_{12j}^2+\sigma_{1(j)})/(2\sigma_{2(j)})+(\mu_{12j}^2+\sigma_{2(j)})/(2\sigma_{1(j)})-1$ for $j=1,...,p$. 
Note that $\Delta_{1(III)}+\Delta_{2(III)}=2\sum_{j=1}^p \theta_j$. 
Also, note that $\theta_j>0$ when $\mu_{1j}\neq \mu_{2j}$ or $\sigma_{1(j)}\neq \sigma_{2(j)}$. 
Now, we give an estimator of $\theta_j\ (j=1,...,p)$ by 
$$
\hat{\theta}_j=\frac{(\overline{x}_{1jn_1}-\overline{x}_{2jn_2})^2+s_{1n_1(j)}}{2s_{2n_2(j)}}+
\frac{(\overline{x}_{1jn_1}-\overline{x}_{2jn_2})^2+s_{2n_2(j)}}{2s_{1n_1(j)}}-1.
$$
Then, we have the following result. 
\begin{thm}
\label{thm5.1}
Assume (A-iii). Assume also $n_{\min}^{-1}\log{p}=o(1)$. 
Then, we have that as $p\to \infty$ 
$$
\max_{j=1,...,p}| \hat{\theta}_j-{\theta}_j|=O_P\{ (n_{\min}^{-1} \log{p})^{1/2} \}.
$$
\end{thm}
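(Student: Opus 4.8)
The plan is to reduce the statement to uniform-in-$j$ concentration of three elementary quantities around their population counterparts, at the rate $\rho_p:=(n_{\min}^{-1}\log p)^{1/2}$ (which tends to $0$ by hypothesis): the sample means $\overline{x}_{ijn_i}$, the sample variances $s_{in_i(j)}$, and the squared mean difference $(\overline{x}_{1jn_1}-\overline{x}_{2jn_2})^2$. Once those are in place, a short computation with ratios handles $\hat{\theta}_j-\theta_j$.

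First I would establish the deviation bounds. Under (A-iii) the variables $(x_{ijk}-\mu_{ij})^2$ are sub-exponential with parameters uniform in $j$ --- this is the role of $\eta_{i(j)}\in(0,\infty)$ as $p\to\infty$ together with a uniform choice of the $t_{ij}$ --- so in particular $x_{ijk}-\mu_{ij}$ is sub-Gaussian uniformly in $j$. A Bernstein-type tail bound applied to $\overline{x}_{ijn_i}-\mu_{ij}=n_i^{-1}\sum_{k=1}^{n_i}(x_{ijk}-\mu_{ij})$, combined with a union bound over $j=1,\dots,p$, then gives $\max_{j}|\overline{x}_{ijn_i}-\mu_{ij}|=O_P(\rho_p)$ for $i=1,2$; here the assumption $n_{\min}^{-1}\log p=o(1)$ is what keeps the target deviation $C\rho_p$ in the quadratic (Gaussian) regime of the inequality, so the union bound over $p$ coordinates costs only the $\log p$ already built into $\rho_p$. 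Next I would write $s_{in_i(j)}-\sigma_{i(j)}$ as the centered average $(n_i-1)^{-1}\sum_{k}\{(x_{ijk}-\mu_{ij})^2-\sigma_{i(j)}\}$ plus a correction of order $n_{\min}^{-1}$ and a term $-\{n_i/(n_i-1)\}(\overline{x}_{ijn_i}-\mu_{ij})^2$; the first piece is $O_P(\rho_p)$ uniformly by the same device applied to centered sub-exponential summands, and the remainder is $O_P(n_{\min}^{-1}\log p)=o_P(\rho_p)$ uniformly, so $\max_j|s_{in_i(j)}-\sigma_{i(j)}|=O_P(\rho_p)$. Finally, since $\limsup_{p\to\infty}|\mu_{12j}|<\infty$, from $\overline{x}_{1jn_1}-\overline{x}_{2jn_2}=\mu_{12j}+\{(\overline{x}_{1jn_1}-\mu_{1j})-(\overline{x}_{2jn_2}-\mu_{2j})\}$ I would expand the square to obtain $\max_j|(\overline{x}_{1jn_1}-\overline{x}_{2jn_2})^2-\mu_{12j}^2|=O_P(\rho_p)$.

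Then I would plug these into the ratio algebra. Each of $\hat{\theta}_j$ and $\theta_j$ is a sum of two fractions of the form $(A+B)/(2C)$ and $(a+b)/(2c)$, with $a=b=\mu_{12j}^2$ in the numerators, denominators $c\in\{\sigma_{1(j)},\sigma_{2(j)}\}$ and $C\in\{s_{1n_1(j)},s_{2n_2(j)}\}$. The standing assumption $\sigma_{i(j)}\in(0,\infty)$ bounds the $c$'s above and away from $0$, and by the variance bound above the $C$'s are, uniformly in $j$ and with probability tending to one, bounded away from $0$ as well, while the numerators are uniformly bounded above. Using
$$\frac{A+B}{2C}-\frac{a+b}{2c}=\frac{(A-a)+(B-b)}{2C}-\frac{(a+b)(C-c)}{2Cc}$$
together with $\max_j|A-a|$, $\max_j|B-b|$, $\max_j|C-c|=O_P(\rho_p)$, each fraction differs from its population value by $O_P(\rho_p)$ uniformly in $j$; the $-1$'s cancel, and summing the two contributions yields $\max_{j=1,\dots,p}|\hat{\theta}_j-\theta_j|=O_P(\rho_p)$.

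The main obstacle will be the uniform-in-$j$ concentration step: one must apply a sub-exponential (Bernstein) bound with constants that do not depend on $j$, which is exactly the uniform reading of (A-iii), and then check that the union bound over the $p$ coordinates is absorbed into $\rho_p$ precisely because $n_{\min}^{-1}\log p=o(1)$. The ratio step afterwards is routine once we know the estimated denominators $s_{in_i(j)}$ stay bounded below.
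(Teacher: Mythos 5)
Your proposal is correct and follows essentially the same route as the paper: the paper's proof of Theorem 5.1 is just a pointer to the uniform concentration bounds established in Appendix B (the exponential tail bounds plus union bound giving $\max_j|\overline{x}_{ijn_i}-\mu_{ij}|$ and $\max_j|s_{in_i(j)}-\sigma_{i(j)}|$ of order $(n_{\min}^{-1}\log p)^{1/2}$, via the same decomposition of $s_{in_i(j)}$ into a centered average of $(x_{ijk}-\mu_{ij})^2$ minus $(\overline{x}_{ijn_i}-\mu_{ij})^2$), after which the ratio algebra with denominators bounded away from zero is exactly as you describe. The only cosmetic difference is that the paper invokes a self-normalized moderate-deviation bound from de la Pe\~{n}a, Lai and Shao in place of your Bernstein-type inequality, which serves the same purpose.
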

Let $\bD=\{j\ |\ \theta_j>0\ \mbox{for $j=1,...,p$} \}$ and $p_*=\#\bD$, where $\#\bS$ denotes the number of elements in a set $\bS$. 
Let $\xi=(n_{\min}^{-1}\log{p})^{1/2}$. 
We select a set of significant variables by 
\begin{equation}
\widehat{\bD}=\{j\ |\  \hat{\theta}_j>\xi^{\gamma}\ \mbox{for $j=1,...,p$} \}, 
\label{5.1}
\end{equation}
where $\gamma \in (0,1)$ is a chosen constant. 
Then, from Theorem \ref{thm5.1}, we have the following result. 
\begin{cor}
\label{cor5.1}
Assume (A-iii) and $n_{\min}^{-1}\log{p}=o(1)$. 
Assume also $\liminf_{p\to \infty}\theta_j>0$ for all $j\in \bD$. 
Then, we have that $P(\bD=\widehat{\bD})\to 1$ as $p\to \infty$.
\end{cor}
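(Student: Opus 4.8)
\textbf{Proof proposal for Corollary \ref{cor5.1}.}

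The plan is to show that, with probability tending to one, the empirical selection rule $\widehat{\bD}=\{j:\hat{\theta}_j>\xi^{\gamma}\}$ neither drops a true signal coordinate nor admits a null coordinate, where $\xi=(n_{\min}^{-1}\log p)^{1/2}\to 0$. The whole argument runs through the uniform deviation bound of Theorem \ref{thm5.1}, namely $\max_{j}|\hat{\theta}_j-\theta_j|=O_P(\xi)$, so fix a constant $C>0$ and work on the event $E_p=\{\max_{1\le j\le p}|\hat{\theta}_j-\theta_j|\le C\xi\}$, which by Theorem \ref{thm5.1} satisfies $P(E_p)\to 1$.

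First I would handle the two inclusions separately. For $\widehat{\bD}\subseteq \bD$: if $j\notin\bD$ then $\theta_j=0$ (recall $\theta_j\ge 0$ always, with $\theta_j>0$ iff $\mu_{1j}\ne\mu_{2j}$ or $\sigma_{1(j)}\ne\sigma_{2(j)}$), so on $E_p$ we have $\hat{\theta}_j\le C\xi$. Since $\gamma\in(0,1)$ and $\xi\to 0$, we have $\xi^{\gamma}/\xi=\xi^{\gamma-1}\to\infty$, hence $C\xi<\xi^{\gamma}$ for all large $p$; therefore $\hat{\theta}_j\le C\xi<\xi^{\gamma}$ and $j\notin\widehat{\bD}$. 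For $\bD\subseteq\widehat{\bD}$: if $j\in\bD$ then by the assumption $\liminf_{p\to\infty}\theta_j>0$ (uniformly over $j\in\bD$), there is $c_0>0$ with $\theta_j\ge c_0$ for all large $p$ and all $j\in\bD$; on $E_p$ then $\hat{\theta}_j\ge \theta_j-C\xi\ge c_0-C\xi$, and since $\xi^{\gamma}\to 0$ while $c_0-C\xi\to c_0>0$, for all large $p$ we get $\hat{\theta}_j\ge c_0-C\xi>\xi^{\gamma}$, i.e. $j\in\widehat{\bD}$.

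Combining, on $E_p$ we have $\widehat{\bD}=\bD$ for all sufficiently large $p$, so $P(\widehat{\bD}=\bD)\ge P(E_p)-o(1)\to 1$, which is the claim. The only genuine subtlety — and the step I would be most careful about — is the uniformity of the threshold comparison over all $p$ coordinates simultaneously: the argument needs the single random event $E_p$ controlling $\max_j|\hat{\theta}_j-\theta_j|$ (not a per-coordinate bound, which would not survive a union over $p\to\infty$ coordinates), and it needs $\liminf\theta_j>0$ to hold uniformly over $j\in\bD$ rather than merely for each fixed $j$; both are exactly what Theorem \ref{thm5.1} and the hypothesis of the corollary supply, so no extra work beyond bookkeeping the ``for all large $p$'' quantifiers is required.
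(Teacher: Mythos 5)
Your proposal is correct and is exactly the argument the paper intends: the paper's own proof is just the one-line remark that Corollary \ref{cor5.1} follows straightforwardly from the uniform bound $\max_j|\hat{\theta}_j-\theta_j|=O_P(\xi)$ of Theorem \ref{thm5.1}, combined with the threshold separation $\xi=o(\xi^{\gamma})=o(1)$ for $\gamma\in(0,1)$ and the uniform signal strength on $\bD$. Your explicit handling of the two inclusions and of the uniformity issues is precisely the ``straightforward'' bookkeeping the authors omit (modulo the standard $\varepsilon$-argument needed to pass from an $O_P$ bound to $P(E_p)\to 1$).
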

\begin{rem}
\label{rem5.1}
As for $l\ (\ge 3)$-class classification, one may consider $\hat{\theta}_j$ such as $\hat{\theta}_j=\sum_{i\neq i'}^k\{(\overline{x}_{ijn_i}-\overline{x}_{i'jn_{i'}})^2+s_{in_i(j)}\}/\{k(k-1)s_{i'n_{i'}(j)}\}-1$ for $j=1,...,p$.
\end{rem}

Now, we consider a classifier using only the variables in $\widehat{\bD}$. 
We define the classifier by 
\begin{align}
W_i( \bS_{in_i(d)}^{-1})_{FS}=\sum_{j \in { \mbox{\scriptsize $\widehat{\bD}$} } } \Big(\frac{(x_{0j}-\overline{x}_{ijn_i})^2}{ s_{in_i(j)} }-\frac{1}{n_i}+\log{s_{in_i(j)} }\Big) \label{5.2}
\end{align}
for $i=1,2$. 
We consider the classification rule by (\ref{1.3}) with (\ref{5.2}).
We call this feature selected DQDA ``FS-DQDA". 
Let us write that $\bx_{i*k}=(x_{ij_1k},....,x_{ij_{p_*}k})^T$ for all $i,k$, where $\bD=\{j_1,...,j_{p_*} \}$. 
Let $\bSig_{i*}=\Var(\bx_{i*k})$ for $i=1,2\ (k=1,...,n_i)$. 
Then, from Theorem \ref{thm2.1} and Corollary \ref{cor5.1}, we have the following result. 
\begin{cor}
\label{cor5.2}
Assume (A-i) and (A-iii). 
Assume also $\lambda_{\max}(\bSig_{i*})=o(p_*)$ for $i=1,2$, and $\liminf_{p\to \infty}\theta_j>0$ for all $j\in \bD$.
Then, for the classification rule by (\ref{1.3}) with (\ref{5.2}), we have (\ref{2.2}) under $n_{\min}^{-1}\log{p}=o(1)$.
\end{cor}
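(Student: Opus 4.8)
The plan is to reduce the claim to a direct application of Theorem~\ref{thm2.1} applied to the sub-populations $\pi_{i*}$ obtained by restricting each $\bx_{ik}$ to the coordinates in $\bD$, and then to argue that replacing $\bD$ by the estimated set $\widehat{\bD}$ costs nothing asymptotically because of Corollary~\ref{cor5.1}. First I would note that, conditionally on the event $\{\widehat{\bD}=\bD\}$, the classifier by (\ref{5.2}) is exactly the classifier by (\ref{4.3}) applied to the $p_*$-dimensional populations $\pi_{i*}:\ \bx_{i*k}=\bGamma_{i*}\by_{i*k}+\bmu_{i*}$, where $\bGamma_{i*}$ is the corresponding sub-matrix of $\bGamma_i$; under (A-i) this sub-population structure still satisfies (A-i) (the moment and cross-moment conditions (\ref{2.1}) are inherited by sub-collections of the $y_{ijk}$), so Theorem~\ref{thm2.1} is applicable with $\bA_i=\bSig_{i*(d)}^{-1}$ once its hypotheses (C-i)--(C-iii) are verified. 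Since $\bA_i$ is diagonal with $\lambda(\bA_i)\in(0,\infty)$ as $p\to\infty$ (because $\sigma_{i(j)}\in(0,\infty)$), by Proposition~\ref{pro2.1} it suffices to check the simpler conditions (C-i$'$) and (C-ii$'$) for the restricted problem, i.e.\ with $\Delta_{\min}$ replaced by $\Delta_{\min*}=\min\{\Delta_{1*},\Delta_{2*}\}$ and $\tr(\bSig_{\max}^2)$, $\lambda_{\max}$ by their $*$-analogues.

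The key quantitative step is to show $\liminf_{p\to\infty}\Delta_{\min*}/p_*>0$. By the definition of $\Delta_{i(III)}$ specialised to the coordinates in $\bD$, $\Delta_{1*}+\Delta_{2*}=2\sum_{j\in\bD}\theta_j\ge 2 p_*\inf_{j\in\bD}\theta_j$, which is bounded below by a positive multiple of $p_*$ by the assumption $\liminf_{p\to\infty}\theta_j>0$ for all $j\in\bD$; a symmetric bound on $\min\{\Delta_{1*},\Delta_{2*}\}$ follows since each summand $\mu_{12j}^2/\sigma_{i'(j)}+\sigma_{i(j)}/\sigma_{i'(j)}-1+\log(\sigma_{i'(j)}/\sigma_{i(j)})$ is nonnegative (Proposition~\ref{pro1.1}-type pointwise inequality) and their average is $\theta_j$ up to bounded factors. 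Next, $\tr(\bSig_{i*}^2)\le p_*\,\lambda_{\max}(\bSig_{i*})^2=o(p_*^3)$ by the hypothesis $\lambda_{\max}(\bSig_{i*})=o(p_*)$, while $\lambda_{\max}(\bSig_{i*})=o(p_*)$ directly; combined with $\Delta_{\min*}\asymp p_*$ this gives $\tr(\bSig_{\max*}^2)/(n_{\min}\Delta_{\min*}^2)=o(p_*^3)/(n_{\min}p_*^2)=o(p_*/n_{\min})$, which is $o(1)$ once $p_*/n_{\min}=o(1)$; here I would invoke that $p_*\le p$ and that $n_{\min}^{-1}\log p=o(1)$ forces $n_{\min}\to\infty$, and handle the borderline case where $p_*$ is bounded separately (then $m$ is effectively low-dimensional large-sample and the result is classical). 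Similarly (C-ii$'$) reads $\lambda_{\max}(\bSig_{i*})/\Delta_{\min*}=o(p_*)/p_*=o(1)$. Hence Theorem~\ref{thm2.1} yields $e(i)\to 0$ for the restricted classifier.

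Finally I would remove the conditioning: by Corollary~\ref{cor5.1}, $P(\widehat{\bD}=\bD)\to 1$, and on the complementary event the misclassification indicator is bounded by $1$, so for each $i$,
\begin{align*}
e(i)\ \le\ P\big(\text{misclassify}\mid \widehat{\bD}=\bD\big)+P(\widehat{\bD}\neq\bD)\ \longrightarrow\ 0,
\end{align*}
which is exactly (\ref{2.2}). The main obstacle I anticipate is the bookkeeping in the second paragraph: one must be careful that the lower bound $\Delta_{\min*}\asymp p_*$ genuinely holds coordinatewise on $\bD$ (not merely for the sum $\Delta_{1*}+\Delta_{2*}$), and that the case of a bounded or slowly growing $p_*$ is not swept under the rug, since then the asymptotic regime and the applicable conditions (C-i$'$), (C-ii$'$) degenerate and need a direct argument. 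Everything else is a routine transcription of Theorem~\ref{thm2.1} and Proposition~\ref{pro2.1} to the selected sub-model.
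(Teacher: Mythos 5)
Your overall strategy---condition on $\{\widehat{\bD}=\bD\}$ via Corollary~\ref{cor5.1}, verify (C-i$'$) and (C-ii$'$) for the restricted $p_*$-dimensional model, and conclude through Proposition~\ref{pro2.1} and Theorem~\ref{thm2.1}---is the same as the paper's, but two steps do not go through as written. First, your bound $\tr(\bSig_{i*}^2)\le p_*\lambda_{\max}(\bSig_{i*})^2=o(p_*^3)$ is too crude: it yields $\tr(\bSig_{\max*}^2)/(n_{\min}\Delta_{\min(III)}^2)=o(p_*/n_{\min})$, and you are then forced to require $p_*/n_{\min}=o(1)$. That condition is not among the hypotheses and does not follow from $n_{\min}^{-1}\log p=o(1)$; indeed the entire point of FS-DQDA (see the discussion after Corollary~\ref{cor4.3} and the simulation setting (c), where $p_*=\lceil p^{1/2}\rceil$ and $n_{\min}=\lceil(\log p)^2\rceil$) is that it works when $p_*$ grows much faster than $n_{\min}$, so your argument fails in exactly the regime the corollary targets, and treating ``bounded $p_*$'' separately does not rescue it. The repair is the sharper inequality $\tr(\bSig_{i*}^2)\le\lambda_{\max}(\bSig_{i*})\tr(\bSig_{i*})=o(p_*)\cdot O(p_*)=o(p_*^2)$ (using that the $\sigma_{i(j)}$ are bounded), which together with $\liminf_{p\to\infty}\Delta_{\min(III)}/p_*>0$ gives (C-i$'$) with no extra condition.

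Second, you apply Theorem~\ref{thm2.1} with the deterministic matrix $\bA_i=\bSig_{i(d)*}^{-1}$, but the classifier (\ref{5.2}) uses the sample variances $s_{in_i(j)}$. This plug-in step is not free: one must verify (C-viii) for the restricted problem, namely $p_*\|\bS_{i(d)*}^{-1}-\bSig_{i(d)*}^{-1}\|=o_P(\Delta_{\min(III)})$, and then invoke Proposition~\ref{pro4.1}. This is precisely where (A-iii) and $n_{\min}^{-1}\log p=o(1)$ enter the paper's argument, through the uniform concentration bound $\max_j|s_{in_i(j)}-\sigma_{i(j)}|=O_P\{(n_{\min}^{-1}\log p)^{1/2}\}$ as in (\ref{B.19}); in your write-up (A-iii) is used only for the selection consistency of $\widehat{\bD}$, so the estimation error of the $p_*$ variances is never controlled. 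A smaller point: nonnegativity of the per-coordinate terms plus the identity $(\Delta_{1(III)}+\Delta_{2(III)})/2=\sum_j\theta_j$ only bounds the \emph{larger} of $\Delta_{1(III)},\Delta_{2(III)}$ from below; to get $\liminf_{p\to\infty}\Delta_{\min(III)}/p_*>0$ you need the pointwise fact that each coordinate's contribution to $\Delta_{i(III)}$ stays bounded away from zero whenever $\theta_j$ does (which holds because $\sigma_{i(j)}\in(0,\infty)$ and $|\mu_{12j}|$ is bounded), not merely the statement about their average.
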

By comparing Corollary \ref{cor5.2} with \ref{cor4.3}, note that the condition ``$n_{\min}^{-1}\log{p}=o(1)$" is much milder than (\ref{4.4}). 
Thus we recommend FS-DQDA more than DQDA-bc (or the original DQDA). 
For a choice of $\gamma\in (0,1)$ in (\ref{5.1}), we recommend applying cross-validation procedures or choosing a constant such as $\gamma=0.5$ because Corollary \ref{cor5.2} is claimed for any $\gamma\in (0,1)$. 
In addition, we emphasize that the computational cost of FS-DQDA is quite low even when $p\ge 10,000$. 
\subsection{Quadratic classifier by sparse inverse covariance matrix estimation}
We consider applying a sparse estimation of inverse covariance matrices to classification. 
\cite{Bickel:2008} gave a sparse estimator of $\bSig_i^{-1}$. 
Let $\sigma_{i(st)}$ be the $(s,t)$ element of $\bSig_{i}$ for $s,t=1,...,p\ (i=1,2)$.
A sparsity measure of $\bSig_i\ (i=1,2)$ is given by
$
c_{p,h_i}=\max_{1\le t \le p} \sum_{s=1}^p|\sigma_{i(st)}|^{h_i}
$
for $0\le h_i <1$, where $0^0$ is defined to be $0$. 
Note that $\lambda_{\max}(\bSig_i)\le M c_{p,h_i}$ for some constant $M>0$.
If $c_{p,h_i}$ is much smaller than $p$ for a constant $h_i\in [0,1)$, $\bSig_i$ is considered as sparse in the sense that many elements of $\bSig_i$ are very small. 
See Section 3 in \cite{Shao:2011} for the details. 
Let $I(\cdot)$ be the indicator function. 
A thresholding operator is defined by 
$
T_{\tau}(\bM)=[m_{st}I(|m_{st}|\ge \tau)] 
$
for any $\tau>0$ and any symmetric matrix $\bM=[m_{st}]$. 
Let $\tau_{n_i}=M'(n_i^{-1}\log{p})^{1/2}$ for some constant $M'>0$. 
Then, \cite{Bickel:2008} gave the following result. 
\begin{thm}
\label{thm5.2}
Assume (A-iii), $n_i^{-1}\log{p}=o(1)$ and $\liminf_{p\to \infty}\lambda_{\min}(\bSig_i)>0$. 
For a sufficiently large $M'(>0)$, it holds that as $p\to \infty$ 
$$
|| \{T_{\tau_{n_i}}(\bS_{in_i})\}^{-1}-\bSig_i^{-1}||=O_P\Big(c_{p,h_i} (n_i^{-1}\log{p})^{(1-h_i)/2}\Big).
$$
\end{thm}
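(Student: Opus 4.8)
The plan is to follow the classical argument of \cite{Bickel:2008}, reducing the assertion about inverses to an operator-norm bound on $T_{\tau_{n_i}}(\bS_{in_i})-\bSig_i$ together with a lower bound on the smallest eigenvalue of the thresholded matrix. Using the resolvent identity $\bA^{-1}-\bB^{-1}=\bA^{-1}(\bB-\bA)\bB^{-1}$ with $\bA=T_{\tau_{n_i}}(\bS_{in_i})$ and $\bB=\bSig_i$, one has
$$
\|\{T_{\tau_{n_i}}(\bS_{in_i})\}^{-1}-\bSig_i^{-1}\|\le \|\{T_{\tau_{n_i}}(\bS_{in_i})\}^{-1}\|\,\|\bSig_i-T_{\tau_{n_i}}(\bS_{in_i})\|\,\|\bSig_i^{-1}\|.
$$
Since $\liminf_{p\to\infty}\lambda_{\min}(\bSig_i)>0$, the last factor is $O(1)$; and provided the nominal rate $c_{p,h_i}(n_i^{-1}\log p)^{(1-h_i)/2}$ is $o(1)$ (which is implicit in the statement), the operator-norm bound below together with Weyl's inequality gives $\lambda_{\min}(T_{\tau_{n_i}}(\bS_{in_i}))\ge \lambda_{\min}(\bSig_i)/2$ with probability tending to $1$, so the first factor is $O_P(1)$ as well. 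Thus the whole quantity is $O_P(\|T_{\tau_{n_i}}(\bS_{in_i})-\bSig_i\|)$, and it remains to bound this operator norm at the claimed rate.

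The key step is entrywise concentration. Writing $\hat\sigma_{i(st)}$ for the $(s,t)$ entry of $\bS_{in_i}$, I would show that for $M'$ large enough the event $\{\max_{s,t}|\hat\sigma_{i(st)}-\sigma_{i(st)}|\le \tau_{n_i}/2\}$ has probability tending to $1$. The exponential-moment hypothesis in (A-iii) implies, via Cauchy--Schwarz applied to the moment generating functions of $|x_{isk}-\mu_{is}|^2$ and $|x_{itk}-\mu_{it}|^2$, that the products $(x_{isk}-\mu_{is})(x_{itk}-\mu_{it})$ are sub-exponential with uniformly controlled parameters; Bernstein's inequality applied to $n_i^{-1}\sum_k(x_{isk}-\mu_{is})(x_{itk}-\mu_{it})-\sigma_{i(st)}$ then yields a tail of order $\exp(-c\,n_i u^2)$ for $u$ in the range permitted by $n_i^{-1}\log p=o(1)$. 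The replacement of $\bmu_i$ by $\overline{\bx}_{in_i}$ contributes only a term of order $|\overline{x}_{isn_i}-\mu_{is}|\,|\overline{x}_{itn_i}-\mu_{it}|=O_P(n_i^{-1})$ uniformly in $s,t$ after a further union bound. Finally a union bound over the $O(p^2)$ pairs $(s,t)$, with $M'$ chosen so that $p^2\exp(-cM'^2\log p)\to 0$, completes this step.

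The operator-norm bound is then essentially deterministic. On the event above, $\|T_{\tau_{n_i}}(\bS_{in_i})-\bSig_i\|$ is at most its maximum absolute row sum $\max_t\sum_s|T_{\tau_{n_i}}(\bS_{in_i})_{st}-\sigma_{i(st)}|$, and I would split each row sum according to the size of $|\sigma_{i(st)}|$ relative to $\tau_{n_i}$: entries with small $|\sigma_{i(st)}|$ are thresholded to $0$ and contribute at most $\sum_s|\sigma_{i(st)}|\,I(|\sigma_{i(st)}|\le c\tau_{n_i})\le c\,\tau_{n_i}^{1-h_i}c_{p,h_i}$; entries with large $|\sigma_{i(st)}|$ are retained, each off by at most $\tau_{n_i}/2$ on a set of size at most $\tau_{n_i}^{-h_i}c_{p,h_i}$; the borderline entries are treated the same way. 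Summing, each row sum is $O(\tau_{n_i}^{1-h_i}c_{p,h_i})=O(c_{p,h_i}(n_i^{-1}\log p)^{(1-h_i)/2})$, which combined with the first paragraph gives the theorem.

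I expect the main obstacle to be the entrywise concentration in the second paragraph: one must extract a genuine sub-exponential Bernstein bound for the off-diagonal products from the marginal exponential-moment hypothesis in (A-iii), handle the sample-mean centering cleanly and uniformly over all $p$ coordinates, and calibrate $M'$ so that the $p^2$-fold union bound still tends to $0$ under the weak growth condition $n_i^{-1}\log p=o(1)$. Once that uniform bound is in hand, the passage to the operator norm is a counting argument controlled by the sparsity parameter $c_{p,h_i}$, and the passage to the inverses is the routine resolvent estimate above.
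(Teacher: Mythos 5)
Your proposal is correct and follows exactly the route the paper takes: the paper gives no proof of its own but simply cites Theorem 1 and Section 2.3 of Bickel and Levina (2008b), and your argument --- entrywise sub-exponential concentration with a $p^2$-fold union bound, the row-sum/sparsity counting bound on $\| T_{\tau_{n_i}}(\bS_{in_i})-\bSig_i\|$, and the resolvent identity combined with $\liminf_{p\to\infty}\lambda_{\min}(\bSig_i)>0$ --- is precisely a reconstruction of that cited argument (the entrywise concentration step is also carried out in the paper's own Appendix B, cf.\ (B.20)). The only point worth flagging is the one you already note: invertibility of the thresholded matrix requires the rate $c_{p,h_i}(n_i^{-1}\log p)^{(1-h_i)/2}=o(1)$, which the theorem statement leaves implicit.
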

\begin{rem}
Theorem \ref{thm5.2} is obtained by Theorem 1 and Section 2.3 in \cite{Bickel:2008}. 
\end{rem}
We use $\hat{\bA}_i=\{T_{\tau_{n_i}}(\bS_{in_i})\}^{-1}$ as an estimator of $\bSig_i^{-1}$ and consider the classifier by $W_i( \{T_{\tau_{n_i}}(\bS_{in_i})\}^{-1} )$.
By combining Theorem \ref{5.2} and Proposition \ref{pro4.1}, if it holds that $\lambda(\bSig_i)\in (0,\infty)$ as $p\to \infty$ and 
\begin{equation}
\frac{pc_{p,h_i} (n_i^{-1}\log{p})^{(1-h_i)/2}}{\Delta_{\min(IV)}}=o_P(1),
\label{5.3}
\end{equation}
the classification rule by (\ref{1.3}) with $W_i( \{T_{\tau_{n_i}}(\bS_{in_i})\}^{-1} )$ has (\ref{2.2}) under some regularity conditions. 
When $\bSig_i$s are sparse as $c_{p,h_i}=O(1)$ for some $h_i (i=1,2)$ and $\liminf_{p\to \infty}\Delta_{\min(IV)}/p>0$, (\ref{5.3}) holds in HDLSS situations such as $n_{\min}^{-1}\log {p}=o(1)$. 
\cite{Shao:2011} and \cite{Li:2015} considered a linear and a quadratic classifier by the sparse estimation of $\bSig_i^{-1}$s under some sparsity conditions. 
On the other hand, \cite{Cai:2011a} gave the constrained $\ell_1$-minimization for inverse matrix estimation (CLIME). 
One may apply the CLIME to the classification rule by (\ref{1.3}). 
However, one should note that the computational cost for the sparse estimation of $\bSig_i^{-1}$s is extremely high even when $p \approx 1,000$. 
It is quite unrealistic to apply the estimation to classification when $p$ is very high as $p \ge 10,000$.
Also, the sparsity condition ``$\lambda(\bSig_i)\in(0,\infty)$ as $p\to\infty$" is quite severe for high-dimensional data.
In actual data analyses, we often encounter the situation that $\lambda_{ij}\to\infty$ as $p\to\infty$ for the first several $j$s.
See \cite{Yata:2013b} for the details.
\subsection{Simulation}
We used computer simulations to compare the performance of the classifiers: DBDA by (\ref{2.4}), GQDA by (\ref{4.2}), DLDA-bc by (\ref{4.5}), DQDA-bc by (\ref{4.3}) and FS-DQDA by (\ref{5.2}). 
We did not compare the classifiers with the one given by sparse estimation of $\bSig_i^{-1}$s such as $W_i( \{T_{\tau_{n_i}}(\bS_{in_i})\}^{-1} )$ in Section 5.2 because the computational cost of the sparse estimation is very high when $p$ is large. 
Thus we considered the classifier by (\ref{2.7}) instead of using the sparse estimation, provided that $\bSig_i$s were known. 
We set $\gamma=0.5$ in (\ref{5.1}). 
We considered $p_*=\lceil p^{1/2} \rceil$. 
We generated $\bx_{ik}-\bmu_i$, $k=1,2,...,\ (i=1,2)$ independently from (i) $N_p(\bze, \bSigma_i)$ or (ii) a $p$-variate $t$-distribution, $t_p(\bze,\bSig_i,\nu)$ with mean zero, covariance matrix $\bSig_i$ and degrees of freedom $\nu$. 
We set $p=2^{s},\ s=3,...,10$ for (i), and $p=500$ and $\nu=4s,\ s=1,...,8$ for (ii). 
We set $\bmu_1=\bze$, $\bmu_{2}=(0,...,0,1,...,1)^T$ whose last $p_*$ elements are $1$ and $\bSig_1=\bB_1( 0.3^{|i-j|^{1/3}})\bB_1$, where $\bB_1$ is defined in Section 1. 
Let $\bB_2=\mbox{diag}(1,...,1,2^{1/2},...,2^{1/2})$ whose last $p_* $ diagonal elements are $2^{1/2}$.
We considered four cases:
\begin{flushleft}
(a) $n_1=10$, $n_2=20$ and $\bSig_2=\bSig_1$ for (i) $N_p(\bze, \bSigma_i)$; \\
(b) $n_1=\lceil (\log{p})^2 \rceil$, $n_2=2n_1$ and $\bSig_2=\bSig_1$ for (i) $N_p(\bze, \bSigma_i)$; \\
(c) $n_1=\lceil (\log{p})^2 \rceil$, $n_2=2n_1$ and $\bSig_2=\bB_2 \bSig_1 \bB_2$ for (i) $N_p(\bze, \bSigma_i)$; \\
and (d) $n_1=\lceil (\log{p})^2 \rceil$, $n_2=2n_1$ and $\bSig_2=\bB_2 \bSig_1 \bB_2$ for (ii) $t_p(\bze,\bSig_i,\nu)$. 
\end{flushleft}
It holds that $n_{\min}^{-1}\log{p}=o(1)$ for (b), (c) and (d), $\liminf_{p\to \infty}\Delta_{\min}/p_*>0$ for (a) to (d), and $\liminf_{p\to \infty}|\tr(\bSig_1)-\tr(\bSig_2)|/p_*>0$ for (c) and (d). 
Similar to Section 1, we calculated the average error rate, $\overline{e}$, by $2000$ replications and plotted the results in Figure~\ref{F4} (a) to (d). 

We observed from (a) in Figure~\ref{F4} that DBDA and GQDA give preferable performances when $n_i$s are fixed. 
DLDA-bc gave a moderate performance because $\bSig_1=\bSig_2$. 
However, the other classifiers did not give preferable performances when $p$ is large. 
This is probably due to the consistency property of those classifiers (except (\ref{2.7})) which is claimed under at least $n_{\min}^{-1}\log{p}=o(1)$. 
Actually, as for (b), the other classifiers gave moderate performances because $n_{\min}^{-1}\log{p}=o(1)$. 
Thus we do not recommend to use quadratic classifiers including all the elements (or the diagonal elements) of sample covariance matrices, such as DQDA-bc and FS-DQDA, when the condition ``$n_{\min}^{-1}\log{p}=o(1)$" is not satisfied. 
When $n_{\min}^{-1}\log{p}\ne o(1)$ or $n_i$s are fixed, we recommend to use DBDA and GQDA. 
On the other hand, FS-DQDA gave a good performance for (c) as $p$ increases because the difference of the covariance matrices becomes large as $p$ increases. 
We note that from Corollary \ref{cor5.2} FS-DQDA holds the consistency property for (c). 
However, DQDA-bc did not give a preferable performance because $\Delta_{\min}(III)=O(p^{1/2})$, so that DQDA-bc does not hold the consistency property from Corollary \ref{cor4.3}. 
We note that $\bSig_1 \neq \bSig_2$ but $\Delta_{(I)}/\delta_{i(I)} \approx \Delta_{i(II)}/\delta_{i(II)}$ for (c). 
Thus GQDA gave a similar performance to DBDA for (c). 
As for (d), DBDA gave a preferable performance even when $\nu$ is small because DBDA holds the consistency property without (A-i). 
The other classifiers did not give preferable performances when $\nu$ is small. 
However, these classifiers gave moderate performances when $\nu$ becomes large because $t_p(\bze,\bSig_i,\nu)\Rightarrow N_p(\bze, \bSigma_i)$ as $\nu\to \infty$. 
Especially, FS-DQDA gave a good performance when $\nu$ is not small. 
This is probably because FS-DQDA has smaller variance by feature selection, such as $p_*/p\to 0$, compared to the other classifiers. 

Throughout the simulations, the classifier by (\ref{2.7}) did not give preferable performances in spite that $\bSig_i$s are known. 
See Section 3.2 for theoretical reasons. 
Therefore, it is likely that the classifier by $W_i( \{T_{\tau_{n_i}}(\bS_{in_i})\}^{-1} )$ gives poor performances for the high-dimensional settings. 
\begin{figure}
\begin{centering}
\includegraphics[scale=0.41]{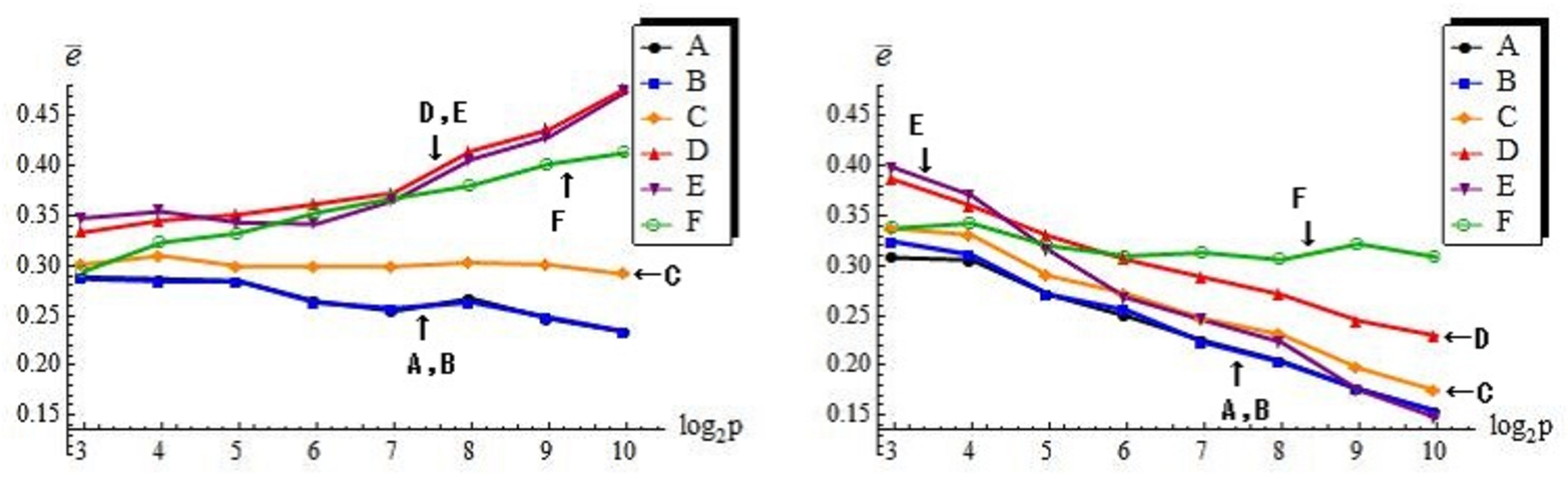}  \\[-1mm]
\quad (a) $\bSig_1=\bSig_2$ ($n_i$s are fixed) \hspace{1.2cm} (b) $\bSig_1=\bSig_2$ \hspace{2cm} \ \\[2mm]
\includegraphics[scale=0.41]{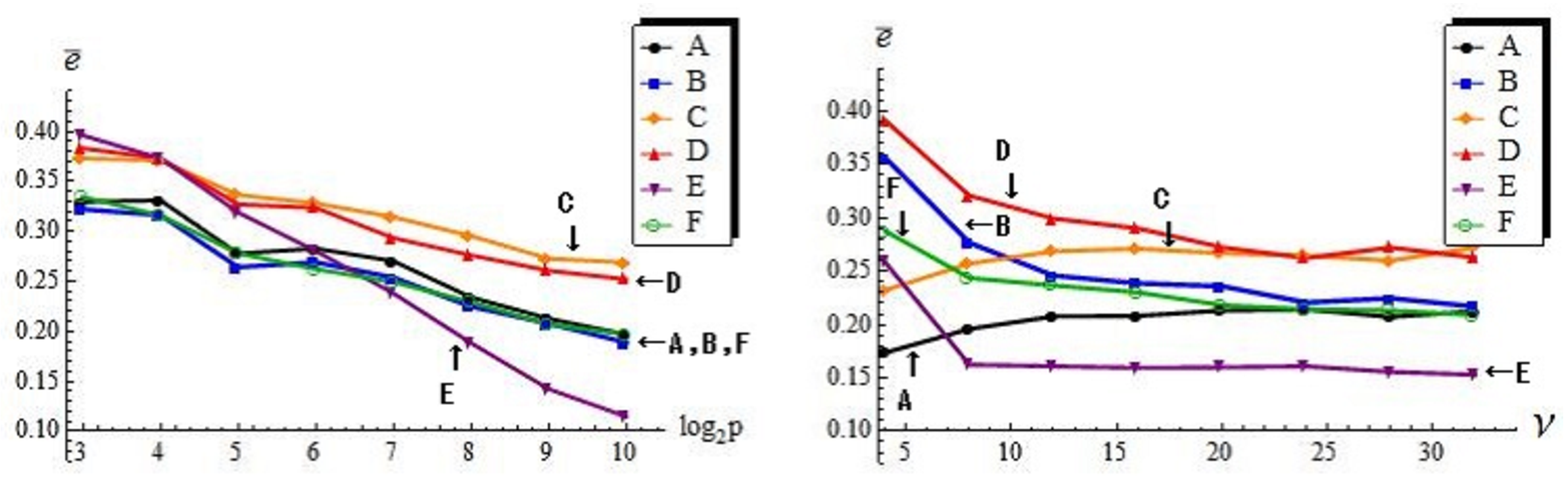} \\[-1mm]
\quad \quad \ (c) $\bSig_1\neq \bSig_2$ \hspace{3.1cm} (d) $\bSig_1\neq \bSig_2$ (non-Gaussian case)
\caption{The average error rates of the classifiers: A: DBDA, B: GQDA, C: DLDA-bc, D: DQDA-bc, E: FS-DQDA, and F: the classifier by (\ref{2.7}).} 
\label{F4}
\end{centering}
\end{figure}
\section{Example: Leukemia data sets}
We first analyzed gene expression data given by \cite{Golub:1999} in which the data set consists of $7129\ (=p)$ genes and $72$ samples. 
We had 2 classes of leukemia subtypes, that is, $\pi_1$: acute lymphoblastic leukemia (ALL) ($47$ samples) and $\pi_2$: acute myeloid leukemia (AML) ($25$ samples). 
The data set consisted of two sets as $38$ training samples (ALL: $27$ samples and AML: $11$ samples) and $34$ test samples (ALL: $20$ samples and AML: $14$ samples). 
Note that $\bS_{1n_1(d)}=\bS_{2n_2(d)}$ if each sample has unit variance.
Thus we did not standardize each sample so as to have unit variance. 

First, we checked several sparsity conditions. 
We standardized each sample by $\bx_{ik}$
$/\{\sum_{l=1}^2\tr(\bS_{ln_l})/(2p)\}^{1/2}$ for all $i,k$, so that $\tr(\bS_{1n_1})/2+\tr(\bS_{2n_2})/2=p$. 
By using all the samples (i.e., $72$ samples), we calculated that
\begin{equation}
\hat{\Delta}_{(I)}=2060\ (=0.289p), \label{6.1}
\end{equation}
where $\hat{\Delta}_{(I)}$ is given in Section 4.2. 
Note that $E(\hat{\Delta}_{(I)})=||\bmu_{12}||^2$. 
From this observation, we concluded that $\bmu_{12}$ is non-sparse.
Next, we considered an estimator of $||\bSig_{12}||_F^2=\sum_{i=1}^2\tr(\bSig_i^2)-2\tr(\bSig_1\bSig_2)$ by $\hat{\Delta}_{\Sigma}=\sum_{i=1}^2W_{in_i}-2\tr(\bS_{1n_1}\bS_{2n_2})$ having $W_{in_i}$s defined by (16) in \cite{Aoshima:2014}.
Here, $W_{in_i}$ is an unbiased estimator of $\tr(\bSig_i^2)$, so that $E(\hat{\Delta}_{\Sigma})=||\bSig_{12}||_F^2$. 
We calculated that
$$
\hat{\Delta}_{\Sigma}=9.77\times 10^5\ (=137p).
$$
From this observation, we concluded that $\bSig_{12}$ is non-sparse. 
Therefore, the Bayes error rates of this data set are probably close to $0$.
Also, we calculated 
\begin{equation}
(\acute{\lambda}_{\max}(\bSig_1),\acute{\lambda}_{\max}(\bSig_2))=(1223,1457)\ (=(0.172p,0.204p)),
\label{6.2}
\end{equation}
where $\acute{\lambda}_{\max}(\bSig_i)$ is an estimate of the largest eigenvalue due to the noise-reduction methodology by \cite{Yata:2013b}. 
We concluded that ``$\lambda(\bSig_i)\in (0,\infty) $ as $p\to \infty$" does not hold and $\bSig_i$s are non-sparse because $\lambda_{\max}(\bSig_i)$s are very large. 
Therefore, we do not recommend to apply the classifier by sparse estimation of $\bSig_i^{-1}$, such as $W_i(\{T_{\tau_{n_i}}(\bS_{in_i})\}^{-1})$. 
Actually, we did not use any classifiers by sparse estimation of $\bSig_i^{-1}$ in this section. 
Also, note that the computational cost for the sparse estimation of $\bSig_i^{-1}$ is very high when $p$ is large.

We constructed the classifiers: DBDA, GQDA, DLDA-bc, DQDA-bc and FS-DQDA, by using the training samples of sizes $n_1=27$ and $n_2=11$, and checked the accuracy by using the test samples from each $\pi_i$. 
Throughout this section, we set $\gamma=0.5$ in (\ref{5.1}) for FS-DQDA. 
We compared the classifiers with the hard-margin linear support vector machine (HM-LSVM).
See \cite{Vapnic:1999} for the details.
Note that the data sets are linearly separable by a hyperplane because $p>n_1+n_2$.
We emphasize that the computational cost of DBDA, GQDA, DLDA-bc, DQDA-bc or FS-DQDA is as low as HM-LSVM even when $p\ge 10,000$.   
We summarized misclassification rates in the first block of Table~\ref{Tab1}. 
We note that $n_{\min}=11$ and $n_{\min}^{-1}\log{p}=0.81$, so that ``$n_{\min}^{-1}\log{p}=o(1)$" does not hold. 
That is probably the reason why DLDA-bc, DQDA-bc and FS-DQDA seem to lose the consistency property. 
See Sections 4 and 5 for the details. 
On the other hand, DBDA and GQDA gave reasonable performances even when $n_i$s are small and seem to hold the consistency property.
We calculated $\tr(\bS_{1n_1})/\tr(\bS_{2n_2})=0.989$ and $(\hat{\Delta}_{i(II)}\tr(\bS_{i'n_{i'}})/p)/\hat{\Delta}_{(I)} \approx 1$ for $i \neq i'$. 
The difference of the trace of the covariance matrices is small and this is probably the reason why DBDA gave a preferable performance. 
See Section 4.2 for the details. 
In addition, HM-LSVM also gave a preferable performance. 
See \cite{Hall:2005} for the consistency property of HM-LSVM. 
For this data set, \cite{Cai:2011b} summarized misclassification rates for several other classifiers including a sparse linear classifier called LPD. 
See Table 6 in \cite{Cai:2011b} for the performances of the other classifiers.
Note that LPD has the Bayes error rates asymptotically under several sparsity conditions. 
We observed that DBDA and GQDA gave the same accuracy as LPD. 
This is probably because the sparsity conditions do not hold for this data set, so that the Bayes error rates are almost $0$.
However, the computational cost for DBDA and GQDA is much lower than LPD. 
\begin{table}
\begin{center}
\caption{Error rates of the classifiers for samples from \cite{Golub:1999}.}
\label{Tab1}
{\footnotesize
\begin{tabular}{c|cccccc}
\hline
\\[-3mm]
 Classifier & DBDA & GQDA & DLDA-bc & DQDA-bc & FS-DQDA & HM-LSVM \\ 
\hline \\[-3mm]
&\multicolumn{6}{c}{Test samples (ALL: 20 and AML: 14)}   \\[0mm]
Error rate & 1/34 & 1/34 & 5/34 & 2/34 & 3/34  & 1/34 \\[1mm]
\hline \\[-3mm]
&\multicolumn{6}{c}{LOOCV of samples (ALL: 47 and AML: 25)}   \\[0mm]
Error rate & 3/72 & 6/72 & 11/72 & 1/72 & 0/72 & 2/72  \\[1mm]
\hline    
\end{tabular}
}
\end{center}
\end{table}

Next, by using all the samples (i.e., $72$ samples), we checked the accuracy of the classifiers by the leave-one-out cross-validation (LOOCV). 
We summarized misclassification rates in the second block of Table~\ref{Tab1}. 
We note that $n_{\min}=24$ and $n_{\min}^{-1}\log{p}=0.37$ or $n_{\min}=25$ and $n_{\min}^{-1}\log{p}=0.35$ in this case, so that $n_{\min}^{-1}\log{p}$ is a little small. 
We observed that DQDA-bc and FS-DQDA give preferable performances. 
On the other hand, DLDA-bc gave a poor performance because it does not draw information about heteroscedasticity. 
For other classifiers, \cite{Tan:2005} summarized results of the LOOCV for this data set. 

Finally, we analyzed gene expression data given by \cite{Armstrong:2002} in which the data set consists of $12582\ (=p)$ genes and $72$ samples. 
We had 3 classes of leukemia subtypes: acute lymphoblastic leukemia (ALL: $24$ samples), mixed-lineage leukemia (MLL: $20$ samples), and acute myeloid leukemia (AML: $28$ samples).
We considered three cases: (a) ALL and MLL, (b) ALL and AML, and (c) MLL and AML. 
We standardized each sample by $\bx_{ik}/\{\sum_{l=1}^3\tr(\bS_{ln_l})/(3p)\}^{1/2}$ for all $i,k$, as before. 
Then, we calculated ($\hat{\Delta}_{(I)}$, $\hat{\Delta}_{\Sigma}$) for the three cases. 
We summarized ($\hat{\Delta}_{(I)}$, $\hat{\Delta}_{\Sigma}$)s in Table~\ref{Tab2}. 
\begin{table}
\begin{center}
\caption{Estimates of ($||\bmu_{12}||^2$,\ $||\bSig_{12}||_F^2$) by ($\hat{\Delta}_{(I)}$, $\hat{\Delta}_{\Sigma}$) for \cite{Armstrong:2002}.}
\label{Tab2}
{\footnotesize
\begin{tabular}{c|ccc}
\hline
\\[-3mm]
 Case  & \ (a) ALL and MLL \ &  \ (b) ALL and AML \ &  \ (c) MLL and AML \ \\ 
\hline \\[-3mm]
$||\bmu_{12}||^2$  & 4076\ ($=0.324p$) & 15050\ ($=1.2p$) & 8546\ ($=0.679p$) \\[1mm]
\hline \\[-2mm]
$||\bSig_{12}||_F^2$ & $1.12\times 10^8$\ ($=8863p$)  & $5.49\times 10^6$\ ($=436p$) & $1.16\times 10^8$\ ($=9192p$) \\[1mm]
\hline    
\end{tabular}
}
\end{center}
\end{table}
From Table~\ref{Tab2}, we concluded that $\bmu_{12}$ and $\bSig_{12}$ are non-sparse for (a) to (c).
Also, by using $\acute{\lambda}_{\max}(\bSig_i)$, we estimated the largest eigenvalues as $1896$, $3206$ and $2101$ for ALL, MLL and AML, respectively. 
From this observation, we concluded that $\bSig_i$s are non-sparse. 
We estimated $\tr(\bSig_{\max}^2)/(n_{\min}\Delta_{(I)}^2)$ and $\lambda_{\max}/\Delta_{(I)}$ by $C_1=\max\{W_{1n_1},W_{2n_2}\}/(n_{\min}\hat{\Delta}_{(I)}^2)$ and $C_{2}=\max\{ \acute{\lambda}_{\max}(\bSig_1),\acute{\lambda}_{\max}(\bSig_2) \}/\hat{\Delta}_{(I)}$ in (C-i') and (C-ii'). 
Then, we had ($C_1,C_2$) as (0.362, 0.787) for (a), (0.001, 0.14) for (b), and (0.082, 0.375) for (c). 
Note that $\liminf_{p\to \infty}\Delta_{\min(II)}/\Delta_{(I)}>0$ and $\liminf_{p\to \infty}\Delta_{\min(III)}/\Delta_{(I)}>0$. 
From these observations, it is likely that the classifiers by (I) to (III) satisfy (C-i') and (C-ii') specially for (b) and hold the consistency property in (\ref{2.2}) from Proposition \ref{pro2.1}. 

Based on all the samples, we checked the accuracy of the classifiers by using the LOOCV for (a) to (c).
We checked the accuracy for 3-class classification as well by using the multiclass classification rule given in Remark \ref{rem1.1}. 
In the 3-class classification, we used $\hat{\theta}_j$ given in Remark \ref{rem5.1} for FS-DQDA and used the one-versus-one approach for HM-LSVM.
We summarized misclassification rates in Table~\ref{Tab3}. 
We observed that FS-DQDA gives excellent performances. 
HM-LSVM also gave reasonable performances, however, it does not draw information about the difference of the covariance matrices. 
See Section 2.2 in \cite{Aoshima:2014} for such an example. 
As for (b), all the classifiers gave preferable performances.
This is probably because the classifiers by (I) to (III) satisfy (C-i') and (C-ii') for (b). 
\begin{table}
\begin{center}
\caption{Error rates of the classifiers for samples from \cite{Armstrong:2002}.}
\label{Tab3}
{\footnotesize
\begin{tabular}{c|cccccc}
\hline
\\[-3mm]
 Classifier & DBDA & GQDA & DLDA-bc & DQDA-bc & FS-DQDA & HM-LSVM \\ 
\hline \\[-3mm]
&\multicolumn{6}{c}{LOOCV of samples from (a) ALL: 24 and MLL: 20}   \\[0mm]
Error rate & 1/44 & 2/44 & 6/44 & 1/44 & 0/44  & 0/44 \\[1mm]
\hline \\[-3mm]
&\multicolumn{6}{c}{LOOCV of samples from (b) ALL: 24 and AML: 28}   \\[0mm]
Error rate & 1/52 & 1/52 & 1/52 & 0/52 & 0/52  & 0/52 \\[1mm]
\hline \\[-3mm]
&\multicolumn{6}{c}{LOOCV of samples from (c) MLL: 20 and AML: 28}   \\[0mm]
Error rate & 4/48 & 4/48 & 1/48 & 3/48 & 3/48  & 3/48 \\[1mm]
\hline \\[-3mm]
&\multicolumn{6}{c}{LOOCV of samples from ALL: 24, MLL: 20 and AML: 28}   \\[0mm]
Error rate & 5/72   & 6/72 & 7/72 & 4/72 & 2/72& 3/72 \\[1mm]
\hline    
\end{tabular}
}
\end{center}
\end{table}
\section{Concluding remarks}
In this paper, we considered high-dimensional quadratic classifiers in non-sparse settings. 
The classifier based on the Mahalanobis distance does not always give a preferable performance even when $n_{\min}\to \infty$ and $\pi_i$s are assumed Gaussian, having known covariance matrices. 
See Sections 1 and 3. 
We emphasize that the quadratic classifiers proposed in this paper draw information about heterogeneity effectively through both the differences of mean vectors and covariance matrices. 
See Section 3.4 for the details. 
If the difference is not sufficiently large, we recommend to use the linear classifiers, DBDA and DLDA-bc (or the original DLDA).
They are quite flexible about the conditions to claim the consistency property. 
See Sections 4.2 and 4.3 for the details. 
We emphasize that DLDA-bc, DQDA-bc and FS-DQDA can hold the consistency property under at least $n_{\min}^{-1}\log{p}=o(1)$. 
Thus we do not recommend to use the classifiers when $n_{\min}^{-1}\log{p}\ne o(1)$. 
In such cases, one should use DBDA and GQDA because they hold the consistency property even when $n_i$s are fixed. 
See Section 4.2 about the choice between DBDA and GQDA. 
When $n_{\min}^{-1}\log{p}=o(1)$, we recommend DQDA-bc and FS-DQDA. 
Especially, FS-DQDA can claim the consistency property even when $n_{\min}/p\to 0$ and $\Delta_{\min}$ is not sufficiently large. 
See Section 5.1 for the details. 
For a choice of $\gamma\in (0,1)$ in (\ref{5.1}), one may apply cross-validation procedures or simply choose as $\gamma=0.5$. 
Actually, FS-DQDA with $\gamma=0.5$ gave preferable performances throughout our simulations and real data analyses. 
On the other hand, even when $n_{\min}^{-1}\log{p}=o(1)$, we do not recommend to use classifiers by the sparse estimation of $\bSig_i^{-1}$ unless (1) the eigenvalues are bounded in the sense that $\lambda(\bSig_i)\in (0,\infty)$ as $p\to \infty$, and (2) $\bSig_i$s are sparse in the sense that many elements of $\bSig_i$s are very small. 
We emphasize that ``$\lambda_{\max}(\bSig_i)$s are bounded" is a strict condition since the eigenvalues should depend on $p$ and it is probable that $\lambda_{ij}\to \infty$ as $p\to \infty$ for the first several $j$s. 
See \cite{Yata:2013b} for the details. 
Also, the computational cost of the classifiers by the sparse estimation is terribly high.

In conclusion, we hope we have given simpler classifiers which will be more effective in the real world analysis of high-dimensional data. 
\appendix
\section*{Appendix A}
\def\theequation{A.\arabic{equation}}
We give proofs of the theorems. 
For proofs of the corollaries and the propositions, see Appendix B.
\begin{proof}[Proof of Theorem 2.1] 
We consider the case when $\bx_0\in \pi_1$. 
Under (C-i) and (C-ii), it holds that for $i=1,2$ 
\begin{align}
&\Var \{(\bx_0-\bmu_1)^T \bA_i(\overline{\bx}_{i n_i}-\bmu_i) \}=\tr(\bSig_i\bA_i \bSig_1\bA_i)/n_i=o(\Delta_{1}^2)\notag \\
&\mbox{and} \ \Var \{(\bx_0-\bmu_1-\overline{\bx}_{2n_2}+\bmu_2)^T\bA_2\bmu_{12} \} =\bmu_{12}^T\bA_2(\bSig_1+\bSig_2/n_2)\bA_2\bmu_{12} =o(\Delta_{1}^2) \label{A.1}
\end{align}
from the fact that 
$$
\bmu_{12}^T \bA_2 \bSig_2 \bA_2 \bmu_{12} \le \bmu_{12}^T\bA_2 \bmu_{12} \lambda_{\max}(\bA_2^{1/2} \bSig_2\bA_2^{1/2} ) \le \Delta_{1}\tr\{(\bSig_2\bA_2)^2\}^{1/2}=o(n_2\Delta_{1}^2)
$$ 
under (C-i). 
Note that $(\overline{\bx}_{in_i}-\bmu_i)^T\bA_i(\overline{\bx}_{in_i}-\bmu_i)-\tr(\bA_i\bS_{in_i})/n_i=\sum_{k \neq k'}^{n_i}(\bx_{ik}-\bmu_i)^T\bA_i(\bx_{ik'}-\bmu_i)/\{n_i(n_i-1)\}$.
Then, under (C-i) it follows that for $i=1,2$
\begin{align}
\Var\{(\overline{\bx}_{in_i}-\bmu_i)^T\bA_i(\overline{\bx}_{in_i}-\bmu_i)-\tr(\bA_i\bS_{in_i})/n_i\}
=O[\tr\{(\bSig_i\bA_{i})^2\}/n_i^2]=o(\Delta_{1}^2).
\label{A.2} 
\end{align}
Then, by using Chebyshev's inequality, from (\ref{A.1}) and (\ref{A.2}), we find that 
\begin{align}
W_2(\bA_2)-W_1(\bA_1) =\tr[\{ (\bx_0-\bmu_1)(\bx_0-\bmu_1)^T-\bSig_1\}(\bA_2-\bA_1)]+\Delta_{1}+o_P(\Delta_{1}).
\label{A.3}
\end{align}
Here, under (A-i) and (C-iii), it follows that 
\begin{align}
\Var \big( \tr[\{ (\bx_0-\bmu_1)(\bx_0-\bmu_1)^T-\bSig_1\}(\bA_2-\bA_1)] \big)&=O\big(\tr[\{\bSig_1(\bA_2-\bA_1)\}^2] \big)=o(\Delta_{1}^2).
\label{A.4}
\end{align}
Thus by combining (\ref{A.3}) with (\ref{A.4}), under (A-i) and (C-i) to (C-iii), we obtain that $\{W_2(\bA_2)-W_1(\bA_1)\}/\Delta_{1}=1+o_P(1)$, 
so that $P\{W_2(\bA_2)-W_1(\bA_1)>0 \}\to 1$. 
When $\bx_0\in\pi_2$, we have the same arguments.
The proof is completed. 
\end{proof}
\begin{proof}[Proof of Theorem 3.1] 
Note that $\tr\{(\bSig_i\bA_{i})^2\}/n_i^2=o(\delta_{i}^2)$, $i=1,2$.
Then, similar to (\ref{A.1}) to (\ref{A.4}), under (A-i) and (C-iv) to (C-vi), we have that as $m\to \infty$
\begin{align}
W_{i'}(\bA_{i'})-W_i(\bA_i)-\Delta_{i}=&2(\bx_0-\bmu_i)^T\{\bA_i(\overline{\bx}_{in_i}-\bmu_i)-\bA_{i'}(\overline{\bx}_{i'n_{i'}}-\bmu_{i'})\}+o_P(\delta_{i})
\label{A.5}
\end{align}
when $\bx_0\in \pi_i$ $(i'\neq i)$. 
Note that $2\omega_{i}/\delta_{i}\to 1$ as $m\to \infty$ for $i=1,2,$ under (C-vi), where 
$\omega_{i}=\{\tr\{(\bSig_i\bA_i)^2\}/n_i+\tr(\bSig_i \bA_{i'} \bSig_{i'} \bA_{i'})/n_{i'}\}^{1/2}$ $(i' \neq i)$ in view of Lemma B.1 of Appendix B. 
Then, by combining Lemma B.1 with (\ref{A.5}), we conclude the results.
\end{proof}
\begin{proof}[Proof of Theorem 3.2] 
Similar to (\ref{A.5}), under (A-i), (C-iv) and (C-v), we have that as $m\to \infty$
\begin{align}
W_{i'}(\bA_{i'})-W_i(\bA_i)-\Delta_{i}=&2(\bx_0-\bmu_i)^T\{\bA_i(\overline{\bx}_{in_i}-\bmu_i) \notag \\
&-\bA_{i'}(\overline{\bx}_{i'n_{i'}}-\bmu_{i'}+(-1)^i\bmu_{12}) \}+o_P(\delta_{i})
\label{A.6}
\end{align}
when $\bx_0\in \pi_i$ $(i' \neq i)$. 
Then, by combining Lemma B.2 of Appendix B with (\ref{A.6}), we conclude the results.
\end{proof}
\begin{proof}[Proof of Theorem 5.1]
By using (B.23) and (B.24) in Appendix B, we claim the result. 
\end{proof}
\section*{Appendix B}
\def\theequation{B.\arabic{equation}}
\def\thesection{B}
Throughout, 
we consider the eigen-decomposition of $\bA_{i}$ by $\bA_{i}=\bH_{i(A)}\bLam_{i(A)}\bH_{i(A)}^T$ for $i=1,2$, where $\bLam_{i(A)}=$diag$(\lambda_{i1(A)},...,\lambda_{ip(A)})$ having eigenvalues such as $\lambda_{i1(A)}\ge \cdots \ge \lambda_{ip(A)}> 0$ and $\bH_{i(A)}=[\bh_{i1(A)},...,\bh_{ip(A)}]$ is an orthogonal matrix of the corresponding eigenvectors. 
Let $a_{i(j)}$ be the $j$-th diagonal element of $\bA_i$ for $j=1,...,p\ (i=1,2)$. 
Let $\tilde{\bx}_{1k}=\bA_1^{1/2} (\bx_{1k}-\bmu_1)$ and $\tilde{\bx}_{2k}=\bA_1^{-1/2} \bA_2(\bx_{2k}-\bmu_2)$ for $k=1,...,n_i$.
Let $\tilde{\bSig}_{1}=\bA_1^{1/2}\bSig_1 \bA_1^{1/2}$, 
$\tilde{\bSig}_{2}=\bA_1^{-1/2} \bA_2 \bSig_2\bA_2\bA_1^{-1/2}$, 
$\tilde{\bGamma}_1=[\tilde{\bgam}_{11},...,\tilde{\bgam}_{1q_1}]=\bA_1^{1/2}\bGamma_1$ and
$\tilde{\bGamma}_2=[\tilde{\bgam}_{21},...,\tilde{\bgam}_{2q_2}]=\bA_1^{-1/2}\bA_2 \bGamma_2$. 
Note that $\Var(\tilde{\bx}_{ij})=\tilde{\bGamma}_i\tilde{\bGamma}_i^T=\sum_{j=1}^{q_i}\tilde{\bgam}_{ij}\tilde{\bgam}_{ij}^T =\tilde{\bSig}_{i}$, $i=1,2$. 
Let $\hat{\bB}_i=\hat{\bA}_i-{\bA}_i$ for $i=1,2$. 
Let $x_{oijk}=x_{ijk}-\mu_{ij}$ for $j=1,...,p\ (i=1,2;\ k=1,...,n_i)$.

\begin{proof}[Proof of Proposition 1.1]
We can write that $\tr(\bA_i^{-1}\bA_{i'})=\sum_{j=1}^p \bh_{ij(A)}^T\bA_{i'}\bh_{ij(A)}$
$/\lambda_{ij(A)}$. 
Note that $\sum_{j=1}^p \bh_{ij(A)}^T\bA_{i'}\bh_{ij(A)}=\tr(\bA_{i'})$ and 
$\sum_{j=1}^t(\bh_{ij(A)}^T\bA_{i'}\bh_{ij(A)}-\lambda_{i'j(A)})\le 0$ for any $t \in \{1,...,p\}$. 
Then, by noting that $\lambda_{i1(A)}\ge \cdots \ge \lambda_{ip(A)}> 0$, 
we have that 
\begin{align}
\tr(\bA_i^{-1}\bA_{i'})
&=\frac{\lambda_{i'1(A)} }{\lambda_{i1(A)}}+
\frac{\bh_{i1(A)}^T\bA_{i'}\bh_{i1(A)}-\lambda_{i'1(A)} }{\lambda_{i1(A)}}+
\sum_{j=2}^p\frac{\bh_{ij(A)}^T\bA_{i'}\bh_{ij(A)}}{\lambda_{ij(A)}} \notag \\
&\ge 
\sum_{j=1}^2 \frac{\lambda_{i'j(A)} }{\lambda_{ij(A)}}+
\sum_{j=1}^2 \frac{\bh_{ij(A)}^T\bA_{i'}\bh_{ij(A)}-\lambda_{i'j(A)} }{\lambda_{i2(A)}}+
\sum_{j=3}^p\frac{\bh_{ij(A)}^T\bA_{i'}\bh_{ij(A)}}{\lambda_{ij(A)}}\notag \\
&\quad \vdots \notag \\
\label{B.1}
&\ge 
\sum_{j=1}^p \frac{\lambda_{i'j(A)} }{\lambda_{ij(A)}}+
\sum_{j=1}^p \frac{\bh_{ij(A)}^T\bA_{i'}\bh_{ij(A)}-\lambda_{i'j(A)} }{\lambda_{ip(A)}}
=\sum_{j=1}^p \frac{\lambda_{i'j(A)} }{\lambda_{ij(A)}}.
\end{align}
Thus, when $\tr\{\bSig_i (\bA_{i'}-\bA_i) \}=\tr(\bA_i^{-1}\bA_{i'})-p$, 
it holds that 
$$
\Delta_{i} \ge \sum_{j=1}^p\{\lambda_{i'j(A)}/\lambda_{ij(A)}-1+\log(\lambda_{ij(A)}/\lambda_{i'j(A)}) \}\ge 0 
$$ 
from the fact that $c-1+\log{c^{-1}}\ge 0$ for any positive constant $c$. 
Note that $\lambda_{1j(A)}\neq \lambda_{2j(A)}$ or $\bh_{ij(A)}^T\bA_{i'}\bh_{ij(A)}<\lambda_{i'j(A)}$ for some $j$ 
when $\bA_1 \neq \bA_2$. 
Since $c-1+\log{c^{-1}}> 0$ when $c \neq 1\ $, it holds that $\Delta_{i} >0$ when $\lambda_{1j(A)}\neq \lambda_{2j(A)}$ for some $j$. 
From (\ref{B.1}), if $\bh_{ij(A)}^T\bA_{i'}\bh_{ij(A)}<\lambda_{i'j(A)}$ for some $j$, 
it follows that $\tr(\bA_i^{-1}\bA_{i'})>\sum_{j=1}^p(\lambda_{i'j(A)}/\lambda_{ij(A)})$, 
so that $\Delta_{i} >0$. 
When $\bmu_1 \neq \bmu_2$, it holds that $\Delta_{i}\ge \bmu_{12}^T\bA_{i'} \bmu_{12}>0$. 
Hence, it concludes the results. 
\end{proof}
\begin{proof}[Proof of Proposition 2.1]
We note that
\begin{align}
\label{B.2}
&\Delta_{iA} \le 
\bmu_{12}^T\bA_{i'}\bmu_{12}\lambda_{\max}(\bA_{i'}^{1/2} \bSig_i \bA_{i'}^{1/2})\le 
\Delta_{i}\lambda_{\max}(\bA_{i'}^{1/2} \bSig_i \bA_{i'}^{1/2}) \\
&\notag
\mbox{and }\ \tr(\bSig_i \bA_{i'}\bSig_{i'} \bA_{i'}) \le \tr\{(\bSig_i \bA_{i'})^2\}^{1/2} \tr\{(\bSig_{i'} \bA_{i'})^2\}^{1/2}.
\end{align}
When $\limsup_{p\to \infty}\lambda_{i1(A)}<\infty$, $i=1,2$, it holds that 
\begin{align}
\label{B.3}
&\lambda_{\max}(\bA_{i'}^{1/2} \bSig_i \bA_{i'}^{1/2})\le \lambda_{i1}\lambda_{\max}(\bA_{i'})=\lambda_{i1}\lambda_{i'1(A)}=O(\lambda_{i1}) \ \ \mbox{and} \\
&\tr\{(\bSig_{l}\bA_{l'})^2\}\le \tr(\bSig_{l}\bA_{l'}\bSig_{l})\lambda_{l'1(A)}\le \tr(\bSig_{l}^2)\lambda_{l'1(A)}^{2}=O\{ \tr(\bSig_{l}^2)\} \notag
\end{align}
for all $l,l'$.
By combining (\ref{B.2}) with (\ref{B.3}), (C-i') and (C-ii') imply (C-i) and (C-ii). 

Next, for (C-iii), it holds that 
$\tr[\{\bSig_i(\bA_1-\bA_2)\}^2]\le \lambda_{i1}\tr\{(\bA_1-\bA_2)\bSig_i(\bA_1-\bA_2) \}$. 
When $\bA_i$s are diagonal matrices such as $\bA_i=\mbox{diag}(a_{i(1)},....,a_{i(p)}),\ i=1,2$, 
it holds that 
$\Delta_{i}\ge \sum_{j=1}^p\{a_{i'(j)}/a_{i(j)}-1-\log(a_{i'(j)}/a_{i(j)})\}$ and
$\tr\{(\bA_1-\bA_2)\bSig_i(\bA_1-\bA_2)\}=\sum_{j=1}^p\sigma_{i(j)}(a_{1(j)}-a_{2(j)})^2$. 
Note that $a_{i(j)}\in (0,\infty)$ as $p\to \infty$ for all $i,j$, under $\lambda(\bA_i)\in (0,\infty)$ as $p\to \infty$ for $i=1,2$. 
By Taylor expansion, we claim that 
$$
a_{i'(j)}/a_{i(j)}-1-\log(a_{i'(j)}/a_{i(j)})\ge a_{i(j)}^{-2}(a_{1(j)}-a_{2(j)})^2/(2\max \{1, a_{i'(j)}^2/a_{i(j)}^2\}).
$$
Then, it follows that $\sum_{j=1}^p\sigma_{i(j)}(a_{1(j)}-a_{2(j)})^2=O(\Delta_{i})$ because $\sigma_{i(j)}\in (0,\infty)$ as $p\to \infty$ for all $i,j$. 
Thus we have that $\tr[\{\bSig_i(\bA_1-\bA_2)\}^2]=O(\Delta_{i}\lambda_{i1})$.
It concludes the results. 
\end{proof}
\begin{proof}[Proofs of Corollaries 2.1 and 2.2]
From Theorem 2.1 and Proposition 2.1, we can claim Corollaries 2.1 and 2.2 straightforwardly. 
\end{proof}
\begin{proof}[Proof of Proposition 2.2] 
We first consider the case when $\liminf_{p\to \infty}\sum_{j=1}^p|\lambda_{ij}/\lambda_{i'j}-1|/p>0$. 
When $c_{1j}< | \lambda_{ij}/\lambda_{i'j}-1|<c_{2j}$ for some constants $c_{1j}\ (>0)$ and $c_{2j}\ (<\infty)$, 
by Taylor expansion, it holds that 
$$
\lambda_{ij}/\lambda_{i'j}-1-\log(\lambda_{ij}/\lambda_{i'j})\ge \frac{(\lambda_{ij}/\lambda_{i'j}-1)^2}{2\max \{1, \lambda_{ij}^2/\lambda_{i'j}^2\}}\ge \frac{c_{1j}|\lambda_{ij}/\lambda_{i'j}-1|}{ 2(c_{2j}+1)^2}.
$$ 
When $ \lambda_{ij}/\lambda_{i'j} \to \infty$ as $p\to \infty$, it holds that for sufficiently large $p$ 
$$
\lambda_{ij}/\lambda_{i'j}-1-\log(\lambda_{ij}/\lambda_{i'j})>|\lambda_{ij}/\lambda_{i'j}-1|/2.
$$ 
Thus, when $\liminf_{p\to \infty}\sum_{j=1}^p|\lambda_{ij}/\lambda_{i'j}-1|/p>0$, it follows that $\liminf_{p\to \infty}\Delta_{i(IV)}/p \ge \liminf_{p\to \infty} \sum_{j=1}^p\{\lambda_{ij}/\lambda_{i'j}-1-\log(\lambda_{ij}/\lambda_{i'j})\}/p>0$ from (\ref{B.1}). 

Next, we consider the case when $\liminf_{p\to \infty} |\tr(\bSig_i\bSig_{i'}^{-1})/p-1 |>0$. 
We note that $\tr(\bSig_i\bSig_{i'}^{-1})\ge \sum_{j=1}^p\lambda_{ij}/\lambda_{i'j} $ from (\ref{B.1}). 
When $\tr(\bSig_i\bSig_{i'}^{-1})/(\sum_{j=1}^p\lambda_{ij}/\lambda_{i'j})\to 1$ as $p\to \infty$, 
it holds that $\liminf_{p\to \infty}|\sum_{j=1}^p(\lambda_{ij}/\lambda_{i'j})/p-1|>0$ 
under $\liminf_{p\to \infty} |\tr(\bSig_i\bSig_{i'}^{-1})/p-1 |>0$.
It follows that $\liminf_{p\to \infty} \Delta_{i(IV)}/p>0$ from the fact that $\sum_{j=1}^p|\lambda_{ij}/\lambda_{i'j}-1|/p \ge |\sum_{j=1}^p(\lambda_{ij}/\lambda_{i'j})/p-1|$. 
On the other hand, we note that 
$$
\Delta_{i(IV)} \ge \tr(\bSig_i\bSig_{i'}^{-1})-p-\sum_{j=1}^p\log(\lambda_{ij}/\lambda_{i'j})\ge \tr(\bSig_i\bSig_{i'}^{-1})-\sum_{j=1}^p(\lambda_{ij}/\lambda_{i'j})
$$ 
because $\sum_{j=1}^p \{\lambda_{ij}/\lambda_{i'j}-1-\log(\lambda_{ij}/\lambda_{i'j})\}\ge 0$. 
Thus, when $\sum_{j=1}^p(\lambda_{ij}/\lambda_{i'j})/p-1\to 0$ as $p\to \infty$ and $\liminf_{p\to \infty}\{\tr(\bSig_i\bSig_{i'}^{-1})/(\sum_{j=1}^p\lambda_{ij}/\lambda_{i'j})\}> 1$, we have that $\liminf_{p\to \infty}$ $\Delta_{i(IV)}/p>0$. 
Hence, it concludes the results. 
\end{proof}

\begin{proof}[Proof of Proposition 3.1] 
Under $\limsup_{p\to \infty} \lambda_{i1(A)}<\infty$ for $i=1,2$, 
we have that $\tr\{(\bSig_{i'}\bA_{i'})^2\}=O\{\tr(\bSig_{i'}^2)\}$ and 
\begin{align*} 
&\tr\{(\bSig_i \bA_l \bSig_l \bA_l )^2\}=\tr\{(\bSig_i^{1/2} \bA_l\bSig_l \bA_l \bSig_i^{1/2})^2\}\\
&\le \lambda_{\max}(\bSig_i^{1/2} \bA_l \bSig_l \bA_l \bSig_i^{1/2})\tr(\bSig_i^{1/2} \bA_l \bSig_l \bA_l \bSig_i^{1/2})\\
&\le \lambda_{\max}(\bSig_i^{1/2} \bA_l^{2} \bSig_i^{1/2} )\lambda_{l1} \delta_{i}^2n_l=O(\lambda_{i1}\lambda_{l1}\delta_{i}^2n_l); \ \ \mbox{and}  \\
&\bmu_{12}^T\bA_{i'} \bSig_l \bA_{i'} \bmu_{12}\le ||\bmu_{12}||^2\lambda_{\max}(\bA_{i'} \bSig_l \bA_{i'} )=O(||\bmu_{12}||^2 \lambda_{l1})\ \ \mbox{for $l=i,i'$.}
\end{align*}
Then, when $\limsup_{p\to \infty} \lambda_{i1(A)}<\infty$, $i=1,2$, 
(C-iv') and (C-vi') imply (C-iv) and (C-vi), respectively. 
Similar to Proof of Proposition 2.1, we can claim the result for (C-v') from $\tr\{(\bA_1-\bA_2)^2\}=\sum_{j=1}^p (a_{1(j)}-a_{2(j)})^2$ 
when $\bA_i$s are diagonal matrices. 
The proof is completed. 
\end{proof}

\begin{lem}
Let $\omega_{i}=\{\tr\{(\bSig_i \bA_i)^2\}/n_i+\tr(\bSig_i \bA_{i'}\bSig_{i'} \bA_{i'})/n_{i'}\}^{1/2}$ for $i=1,2\ (i'\neq i)$. 
Then, under (A-i), (C-iv) and (C-vi), we have that 
$$
(\bx_0-\bmu_i)^T\{\bA_i (\overline{\bx}_{in_i}-\bmu_i)-\bA_{i'}(\overline{\bx}_{i'n_{i'}}-\bmu_{i'}) \}/\omega_{i}\Rightarrow N(0,1) \ \ \mbox{as $m\to \infty$}
$$
when $\bx_0 \in \pi_i$ for $i=1,2\ (i'\neq i)$. 
\end{lem}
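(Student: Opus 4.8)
The plan is to condition on $\bx_0$, apply a Berry--Esseen (Lyapunov) central limit theorem to the resulting sum of conditionally independent terms, and then average out $\bx_0$ by a variance--concentration argument.

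First, write $\bz=\bx_0-\bmu_i=\bGamma_i\by_{i0}$, where $\by_{i0}$ satisfies the moment conditions of (A-i) and is independent of all training observations, and set $\bv=\bA_i(\overline{\bx}_{in_i}-\bmu_i)-\bA_{i'}(\overline{\bx}_{i'n_{i'}}-\bmu_{i'})$, so that the numerator in the lemma is $T:=\bz^T\bv$. Using $\overline{\bx}_{in_i}-\bmu_i=n_i^{-1}\sum_{k=1}^{n_i}\bGamma_i\by_{ik}$ and the analogue for $\pi_{i'}$,
\[
T=\sum_{k=1}^{n_i}\frac{\bc_{i}^T\by_{ik}}{n_i}-\sum_{k=1}^{n_{i'}}\frac{\bc_{i'}^T\by_{i'k}}{n_{i'}},\qquad \bc_{i}=\bGamma_i^T\bA_i\bz,\quad \bc_{i'}=\bGamma_{i'}^T\bA_{i'}\bz .
\]
Given $\bz$, these $n_i+n_{i'}$ summands are independent with mean zero, so (since $\Var(\by_{ik})=\bI_{q_i}$)
\[
s^2(\bz):=\Var(T\mid\bz)=\frac{\bz^T\bA_i\bSig_i\bA_i\bz}{n_i}+\frac{\bz^T\bA_{i'}\bSig_{i'}\bA_{i'}\bz}{n_{i'}},\qquad E\{s^2(\bz)\}=\omega_{i}^2 .
\]

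Next I would establish a conditional CLT with a rate uniform in $\bz$. For any fixed vector $\bc$, expanding $E\{(\bc^T\by_{ik})^4\}$ with (\ref{2.1}) and $\limsup_{p\to \infty}E(y_{ijk}^4)<\infty$ gives $E\{(\bc^T\by_{ik})^4\}\le C||\bc||^4$ with $C$ universal; since $s^2(\bz)\ge n_i^{-1}||\bc_{i}||^2$ and $s^2(\bz)\ge n_{i'}^{-1}||\bc_{i'}||^2$, a short computation (using $E|X|^3\le(EX^4)^{3/4}$) bounds the Berry--Esseen ratio of $T$ conditionally on $\bz$ by $C'(n_i^{-1/2}+n_{i'}^{-1/2})$, a bound that does not depend on $\bz$. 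Hence there is a deterministic $\varepsilon_m=O(n_{\min}^{-1/2})$ with $\sup_x|P(T/s(\bz)\le x\mid\bz)-\Phi(x)|\le\varepsilon_m$, and $m=\min\{p,n_{\min}\}\to\infty$ forces $n_{\min}\to\infty$, so $\varepsilon_m\to0$.

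Finally I would replace the random scaling $s(\bz)$ by the deterministic $\omega_{i}$. Applying the bound $\Var(\by^T\bC\by)=O(\tr(\bC^2))$, valid for symmetric $\bC$ and any $\by$ obeying (\ref{2.1}) with $\limsup_{p\to \infty}E(y^4)<\infty$, to $\bC=n_i^{-1}\bGamma_i^T\bA_i\bSig_i\bA_i\bGamma_i$ and $\bC=n_{i'}^{-1}\bGamma_i^T\bA_{i'}\bSig_{i'}\bA_{i'}\bGamma_i$ (and the Cauchy--Schwarz inequality for the cross term) gives
\[
\Var\{s^2(\bz)\}=O\Big(\frac{\tr\{(\bSig_i\bA_i)^4\}}{n_i^2}+\frac{\tr\{(\bSig_i\bA_{i'}\bSig_{i'}\bA_{i'})^2\}}{n_{i'}^2}\Big).
\]
Since $\delta_{i}^2=4(\omega_{i}^2+\Delta_{iA})$ and (C-vi) gives $\Delta_{iA}/\delta_{i}^2=o(1)$, we have $\delta_{i}^2=4\omega_{i}^2\{1+o(1)\}$, so the second and third conditions in (C-iv) imply that the right-hand side above is $o(\delta_{i}^4)=o(\omega_{i}^4)$, whence $s^2(\bz)=\omega_{i}^2\{1+o_P(1)\}$ by Chebyshev's inequality. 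Combining this with the previous step, $|P(T/\omega_{i}\le x\mid\bz)-\Phi(x\,\omega_{i}/s(\bz))|\le\varepsilon_m$ for every $x$, and taking expectations and using bounded convergence ($\Phi$ is continuous and bounded while $\omega_{i}/s(\bz)=1+o_P(1)$) gives $P(T/\omega_{i}\le x)\to\Phi(x)$, i.e.\ $T/\omega_{i}\Rightarrow N(0,1)$, which is the claim. The step I expect to be the main obstacle is this last passage: the conditional CLT must be quantitative and uniform in $\bx_0$ so that, after integrating over the law of $\bx_0$ and invoking $s^2(\bz)=\omega_{i}^2\{1+o_P(1)\}$, the limit survives with the deterministic normalization $\omega_{i}$. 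The moment identities for linear and quadratic forms used above are routine but must be obtained from the pairwise constraints (\ref{2.1}), not from full independence of the $y_{ijk}$'s.
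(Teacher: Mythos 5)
Your proof is correct, but it takes a genuinely different route from the paper's. The paper proves Lemma B.1 by treating $\sum_{k}v_k$ with $v_k=\tilde{\bx}_0^T\tilde{\bx}_{1k}/(n_1\omega_1)$ (and the analogous terms for $\pi_2$) as a martingale difference array in the training index $k$, with $\bx_0$ absorbed into every summand, and then invokes McLeish's dependent central limit theorem after verifying the conditional Lindeberg condition and $\sum_k v_k^2=1+o_P(1)$ via the fourth-moment identities (B.4)--(B.5). You instead condition on $\bx_0$, observe that the summands become independent, apply the classical Berry--Esseen theorem with a rate $O(n_{\min}^{-1/2})$ that is uniform in the conditioning value, and then de-condition by showing the conditional variance $s^2(\bz)$ concentrates at $\omega_i^2$. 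The two arguments consume exactly the same ingredients -- the bound $E\{(\bc^T\by)^4\}=O(\|\bc\|^4)$ and $\Var(\by^T\bC\by)=O\{\tr(\bC^2)\}$ derived from (\ref{2.1}) with bounded fourth moments, the second and third parts of (C-iv) to control $\tr\{(\bSig_i\bA_i)^4\}/n_i^2$ and $\tr\{(\bSig_i\bA_{i'}\bSig_{i'}\bA_{i'})^2\}/n_{i'}^2$, and (C-vi) to identify $\delta_i^2$ with $4\omega_i^2\{1+o(1)\}$ -- and your observation that $m\to\infty$ forces $n_{\min}\to\infty$ is exactly what makes the Berry--Esseen rate vanish. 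What your approach buys is elementarity (no martingale CLT, and the quantitative rate makes the de-conditioning step transparent); what the paper's buys is that it never needs the conditional CLT to be uniform in $\bx_0$, since the martingale framework handles the test point and the training data in one pass. One cosmetic point: on the null event $\{s(\bz)=0\}$ the conditional ratio $T/s(\bz)$ is undefined, but since $s^2(\bz)=\omega_i^2\{1+o_P(1)\}$ with $\omega_i>0$ this event has vanishing probability and can be discarded before taking expectations.
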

\begin{proof}[Proof of Lemma B.1] 
We consider the case when $i=1$ $(i'=2)$ and $\bx_0 \in \pi_1$. 
Let $\tilde{\bx}_{0}=\bA_1^{1/2}(\bx_0-\bmu_1)$. 
Then, it holds that $\Var(\tilde{\bx}_{0}|\bx_0\in \pi_1)=\Var(\tilde{\bx}_{1k})=\tilde{\bSig}_{1}$. 
Let 
$$
v_{k}=\tilde{\bx}_0^T\tilde{\bx}_{1k}/(n_1 \omega_{1}),\ k=1,...,n_1,\ \mbox{and}\ 
v_{n_1+k}=-\tilde{\bx}_0^T \tilde{\bx}_{2k}/(n_2\omega_{1}),\ k=1,...,n_2.
$$
Note that $\sum_{k=1}^{n_1+n_2} E(v_{k}^2)=1$ and 
$$
\sum_{k=1}^{n_1+n_2}v_{k}=(\bx_0-\bmu_1)^T\{\bA_1(\overline{\bx}_{1n_1}-\bmu_1)-\bA_2(\overline{\bx}_{2n_2}-\bmu_2) \}/\omega_{1}.
$$
Then, it holds that $E(v_{k}|v_{k-1},...,v_{1})=0$ for $k=2,...,n_1+n_2$. 
We consider applying the martingale central limit theorem given by \cite{McLeish:1974}. 
Under (A-i), we can write that 
$
\tilde{\bx}_{1l}=\tilde{\bGamma}_1\by_{1l}
$
and 
$
\tilde{\bx}_{2l}=\tilde{\bGamma}_2\by_{2l}.
$
Then, in a way similar to the equations (23) and (24) in \cite{Aoshima:2014}, we can evaluate that under (A-i)
\begin{align}
& (n_s \omega_{1})^4E(v_{k}^4)=3\tr(\tilde{\bSig}_1\tilde{\bSig}_s)^2+O[\tr \{(\tilde{\bSig}_1\tilde{\bSig}_s)^2 \}] \quad\mbox{and}
\label{B.4} \\
&(n_sn_{s'})^2 \omega_{1}^4 E(v_{k}^2v_{k'}^2)=\tr(\tilde{\bSig}_1\tilde{\bSig}_s)\tr(\tilde{\bSig}_1\tilde{\bSig}_{s'})
+2\tr(\tilde{\bSig}_1\tilde{\bSig}_s \tilde{\bSig}_1\tilde{\bSig}_{s'})\notag \\
&\hspace{3.5cm} +O[\{\tr(\tilde{\bSig}_1\tilde{\bSig}_s \tilde{\bSig}_1\tilde{\bSig}_{s})
\tr(\tilde{\bSig}_1\tilde{\bSig}_{s'} \tilde{\bSig}_1\tilde{\bSig}_{s'})\}^{1/2}]\label{B.5} 
\end{align}
for $k\neq k'$, 
where $s=1$ for $k \in [1,...,n_1]$, $s=2$ for $k\in [n_1+1,...,n_1+n_2]$, $s'=1$ for $k'\in [1,...,n_1]$, and $s'=2$ for $k'\in [n_1+1,...,n_1+n_2]$. 
Note that $\tr(\tilde{\bSigma}_1^4) \le \tr(\tilde{\bSigma}_1^2)^2$ and $\tr\{(\tilde{\bSigma}_1\tilde{\bSigma}_2)^2 \} \le \tr(\tilde{\bSigma}_1\tilde{\bSigma}_2)^2$. 
Then, by using Chebyshev's inequality and Schwarz's inequality, from (\ref{B.4}), under (A-i), we have that for Lindeberg's condition 
\begin{align}
\sum_{k=1}^{n_1+n_2} E \{  v_{k}^2 I ( 
 v_{k}^2 \ge \tau  ) \}\le \sum_{k=1}^{n_1+n_2} \frac{E(v_{k}^4)}{\tau}=O\Big[\frac{\tr(\tilde{\bSigma}_1^2)^2/n_1^3 +\tr(\tilde{\bSigma}_1\tilde{\bSigma}_2)^2/n_2^3}{
 \omega_{1}^4 } \Big] \to 0 
 \notag
\end{align}
as $m\to \infty$ for any $\tau>0$, where $I(\cdot)$ is the indicator function. 
Since $2\omega_{1}/\delta_{1}=1+o(1)$ under (C-vi), we note that  
\begin{align*}
&\frac{\tr(\tilde{\bSigma}_1^4)}{n_1^2 \omega_{1}^4 }\to 0, \quad  \frac{ \tr\{(\tilde{\bSigma}_1\tilde{\bSigma}_2)^2\}}{n_2^2\omega_{1}^4}
=\frac{\tr\{(\bSigma_1\bA_2 \bSigma_2\bA_2)^2\}}{n_2^2 \omega_{1}^4}\to 0,\\
&\mbox{and}\quad \frac{\tr(\tilde{\bSigma}_1^3\tilde{\bSigma}_2)}{n_1 n_2 \omega_{1}^4}\le 
\frac{\tr(\tilde{\bSigma}_1^4)^{1/2}\tr\{(\tilde{\bSigma}_1\tilde{\bSigma}_2 )^2\}^{1/2}}{n_1 n_2\omega_{1}^4}
\to 0
\end{align*}
under (C-iv).  
Then, by using Chebyshev's inequality, from (\ref{B.4}) and (\ref{B.5}), under (A-i), (C-iv) and (C-vi), we have that for any $\tau>0$ 
\begin{align*}
&P\Big(\Big|\sum_{k=1}^{n_1+n_2} v_{k}^2-1\Big|\ge \tau \Big) \notag \\
&=
O\Big[\frac{\tr(\tilde{\bSigma}_1^4)/n_1^2+\tr(\tilde{\bSigma}_1^4)^{1/2}\tr\{(\tilde{\bSigma}_1\tilde{\bSigma}_2)^2 \}^{1/2}/(n_1n_2)+
\tr\{(\tilde{\bSigma}_1\tilde{\bSigma}_2)^2 \}/n_2^2}{ \omega_{1}^4 } \Big]+o(1)\to 0
\end{align*}
as $m\to \infty$, 
so that $ \sum_{k=1}^{n_1+n_2} v_{k}^2=1+o_P(1)$. 
Hence, by using the martingale central limit theorem, we obtain that $\sum_{k=1}^{n_1+n_2}v_k \Rightarrow N(0,1)$ as $m\to \infty$ under (A-i), (C-iv) and (C-vi). 
Hence, we conclude the result when $i=1$. 
For the case when $i=2$, we can have the same arguments.
The proof is completed. 
\end{proof}
\begin{lem}
Under (A-ii), (C-iv) and (C-vii), we have that  
$$
2(\bx_0-\bmu_i)^T\big\{\bA_i(\overline{\bx}_{in_i}-\bmu_i)-\bA_{i'}(\overline{\bx}_{i'n_{i'}}-\bmu_{i'}+(-1)^i\bmu_{12}) \big\}
/\delta_{i}\Rightarrow N(0,1)
$$
as $m\to \infty$ when $\bx_0 \in \pi_i$ for $i=1,2\ (i'\neq i)$. 
\end{lem}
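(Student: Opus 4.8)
The plan is to extend the martingale central limit argument used in the proof of Lemma B.1 so that the enlarged martingale also absorbs the extra linear term $-2(-1)^i(\bx_0-\bmu_i)^T\bA_{i'}\bmu_{12}$ appearing here but not in Lemma B.1. I treat the case $i=1,\ i'=2$ and condition on $\bx_0\in\pi_1$; the case $i=2$ is symmetric. Under (A-ii) I write $\bx_0-\bmu_1=\bGamma_1\by_0$, where $\by_0=(y_{01},\dots,y_{0q_1})^T$ has independent components with $E(y_{0j})=0$, $E(y_{0j}^2)=1$, $\limsup_{p\to\infty}E(y_{0j}^4)<\infty$, independent of all the $\bx_{ik}$. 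Splitting the numerator of the target statistic as $A+B$ with
\[
A=2(\bx_0-\bmu_1)^T\{\bA_1(\overline{\bx}_{1n_1}-\bmu_1)-\bA_2(\overline{\bx}_{2n_2}-\bmu_2)\},\quad B=2(\bx_0-\bmu_1)^T\bA_2\bmu_{12},
\]
I get $B=\sum_{j=1}^{q_1}c_j y_{0j}$ with $c_j=2\bgam_{1j}^T\bA_2\bmu_{12}$ and $\sum_{j=1}^{q_1}c_j^2=4\bmu_{12}^T\bA_2\bSig_1\bA_2\bmu_{12}=4\Delta_{1A}$, while $A=2\omega_1\sum_{k=1}^{n_1+n_2}v_k$ with $\omega_1$ and $v_k$ exactly as defined in the proof of Lemma B.1.

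Next I would form a single martingale-difference sequence with respect to the filtration that first reveals $y_{01},\dots,y_{0q_1}$ one coordinate at a time and then $\bx_{11},\dots,\bx_{1n_1},\bx_{21},\dots,\bx_{2n_2}$ one vector at a time: its increments are $d_j=c_jy_{0j}/\delta_1$ for $j=1,\dots,q_1$ (so $\sum_j d_j=B/\delta_1$), followed by $(2\omega_1/\delta_1)v_k$ for $k=1,\dots,n_1+n_2$ (so $\sum_k(2\omega_1/\delta_1)v_k=A/\delta_1$). The $d_j$ have conditional mean zero by independence of the coordinates of $\by_0$ --- this is precisely why (A-ii), not merely (A-i), is required --- and each later increment has conditional mean zero because the freshly revealed sample is independent of $\by_0$ and of the earlier samples, as in Lemma B.1. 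Then I would check McLeish's conditions. For Lindeberg's condition, $\sum_{j=1}^{q_1}E\{d_j^2 I(d_j^2\ge\tau)\}\le\tau^{-1}\sum_j E(d_j^4)=O\big(\sum_j(\bgam_{1j}^T\bA_2\bmu_{12})^4/\delta_1^4\big)=o(1)$ by (C-vii), and the contribution of the $v_k$-increments is bounded exactly as in Lemma B.1 after the same conversion by the factor $(2\omega_1/\delta_1)^4$, using $n_{\min}\to\infty$. For the quadratic-variation condition, $\sum_{j=1}^{q_1}d_j^2=\sum_j(c_j^2/\delta_1^2)y_{0j}^2$ has mean $4\Delta_{1A}/\delta_1^2$ and variance $O\big(\sum_j(\bgam_{1j}^T\bA_2\bmu_{12})^4/\delta_1^4\big)=o(1)$, again by (C-vii); combined with the $v_k$-part below, the total quadratic variation tends in probability to $(4\Delta_{1A}+4\omega_1^2)/\delta_1^2=1$, and McLeish's theorem gives $(A+B)/\delta_1\Rightarrow N(0,1)$.

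The hard part will be the $v_k$-half of the quadratic-variation check, namely $(4\omega_1^2/\delta_1^2)\sum_k v_k^2\to 4\omega_1^2/\delta_1^2$ in probability. The proof of Lemma B.1 obtains $\sum_k v_k^2=1+o_P(1)$, but it uses (C-vi) to convert the $\delta_i$-normalized hypotheses (C-iv) into $\omega_i$-normalized variance bounds via $2\omega_1/\delta_1=1+o(1)$; here (C-vi) is replaced by the weaker (C-vii), so that shortcut is not available. The resolution is to keep the $\omega_1$-normalization inside $v_k$ together with the external factor $2\omega_1/\delta_1$: since $E(\sum_k v_k^2)=1$ exactly, it suffices that $(2\omega_1/\delta_1)^4\Var(\sum_k v_k^2)\to 0$, and $\Var(\sum_k v_k^2)$ consists of fourth-moment trace terms over $\omega_1^4$ together with $\Var\{\tilde\bx_0^T(\tilde\bSig_1/n_1+\tilde\bSig_2/n_2)\tilde\bx_0\}/\omega_1^4$; multiplying through by $(2\omega_1/\delta_1)^4$ turns each ratio into a $\delta_1^4$-normalized quantity, namely $\tr\{(\bSig_1\bA_1)^4\}/(n_1^2\delta_1^4)$, $\tr\{(\bSig_1\bA_2\bSig_2\bA_2)^2\}/(n_2^2\delta_1^4)$, and a cross term dominated by the geometric mean of these two via $\tr(\tilde\bSig_1^3\tilde\bSig_2)\le\{\tr(\tilde\bSig_1^4)\tr((\tilde\bSig_1\tilde\bSig_2)^2)\}^{1/2}$, all $o(1)$ under (C-iv). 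Finally, $A$ is, conditionally on $\bx_0$, independent of $B$ (which is $\bx_0$-measurable), so the two blocks of the martingale carry no interaction term and the argument closes; the case $\bx_0\in\pi_2$ follows by the same reasoning.
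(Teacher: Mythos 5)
Your proof is correct, and it reaches the conclusion by a genuinely different martingale decomposition from the paper's. The paper (proof of Lemma B.2) builds a \emph{single} martingale over the coordinates $s=1,\dots,q_1$ of $\by_0$, with increments $w_s=2y_{0s}\tilde{\bgam}_{1s}^T(\acute{\bx}_{1n_1}-\acute{\bx}_{2n_2}+\tilde{\bmu})/\delta_1$ in which the sample randomness and the shift $\tilde{\bmu}=\bA_1^{-1/2}\bA_2\bmu_{12}$ are folded into one factor; the price is that the quadratic--variation check must control cross moments between the samples and $\tilde{\bmu}$ inside each increment, which is where the third--moment terms $M_{lu}$ and the estimate (B.8) enter. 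Your two--block construction (first the $q_1$ increments $c_jy_{0j}/\delta_1$ carrying $B$, then the $n_1+n_2$ increments $(2\omega_1/\delta_1)v_k$ carrying $A$) avoids those cross--moment computations entirely: the first block's fourth moments reduce directly to $\sum_j(\bgam_{1j}^T\bA_2\bmu_{12})^4/\delta_1^4$, i.e.\ exactly (C-vii), and the second block recycles the computations (B.4)--(B.5) of Lemma B.1. Your key observation --- that Lemma B.1's conclusion $\sum_kv_k^2=1+o_P(1)$ is proved there only under (C-vi), but that retaining the external factor $(2\omega_1/\delta_1)^4\le 1$ converts every $\omega_1^4$-normalized bound into a $\delta_1^4$-normalized one so that (C-iv) alone suffices --- is precisely what makes the modular reuse legitimate, and the identity $4\Delta_{1A}+4\omega_1^2=\delta_1^2$ closes the quadratic variation. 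Two minor remarks: your parenthetical that $A$ and $B$ are conditionally independent is not needed (and unconditionally they are dependent through $\bx_0$); what actually matters is only that each block's sum of squares concentrates at its deterministic mean, so the total converges to $1$ regardless of any dependence between the blocks. Also, the Lindeberg bound for the $v_k$-block does use $n_{\min}\to\infty$, which you correctly invoke and which is supplied by $m\to\infty$.
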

\begin{proof}[Proof of Lemma B.2] 
We consider the case when $i=1$ $(i'=2)$ and $\bx_0 \in \pi_1$. 
Let $\bx_0-\bmu_1=\bGamma_1\by_{0}$ and $\by_{0}=(y_{01},...,y_{0q_1})^T$. 
Under (A-ii), $y_{0s},\ s=1,...,q_1,$ are independent. 
Let $\acute{\bx}_{ln_l}=\sum_{k=1}^{n_{l}} \tilde{\bx}_{lk}/n_{l}$, $l=1,2$, $\tilde{\bmu}=\bA_1^{-1/2}\bA_2 \bmu_{12}$ and 
$$
w_s=2y_{0j} \tilde{\bgam}_{1s}^T(\acute{\bx}_{1n_1}-\acute{\bx}_{2n_2}+\tilde{\bmu})/\delta_{1},\ \ s=1,...,q_1.
$$
Note that $q_1\ge p$, $E(w_{s})=0$, $s=1,...,q_1,$  $\sum_{s=1}^{q_1} E(w_{s}^2)=1$ and 
$$
\sum_{s=1}^{q_1}w_{s}=
2(\bx_0-\bmu_1)^T\{\bA_1(\overline{\bx}_{1n_1}-\bmu_1)-\bA_2(\overline{\bx}_{2n_2}-\bmu_2-\bmu_{12}) \}/\delta_{1}.
$$
Also, note that $E(w_{s}|w_{s-1},...,w_{1})=0$ for $s=2,...,q_1$, under (A-ii). 
We consider applying the martingale central limit theorem. 
Let $M_{ls}=E(y_{lsk}^3)$ for all $l,s$. 
Note that $\limsup_{p\to \infty} |M_{ls}|<\infty$ for all $l,s$, under (A-ii) because $\limsup_{p\to \infty} E(y_{lsk}^4)<\infty$. 
Then, by using Schwarz's inequality and the arithmetic mean-geometric mean inequality, we can evaluate that under (A-ii) 
\begin{align*}
&E\{(\tilde{\bgam}_{1s}^T\acute{\bx}_{ln_l})^2(\tilde{\bgam}_{1t}^T\acute{\bx}_{ln_l})^2 \}
=\{1+o(1)\}\tilde{\bgam}_{1s}^T \tilde{\bSig}_{l} \tilde{\bgam}_{1s}\tilde{\bgam}_{1t}^T \tilde{\bSig}_{l} \tilde{\bgam}_{1t}/n_l^2 +O\{( \tilde{\bgam}_{1s}^T\tilde{\bSig}_{l}\tilde{\bgam}_{1t}/n_l)^2\};\ \mbox{and} \\
&|E\{(\tilde{\bgam}_{1s}^T \acute{\bx}_{ln_l})^2\tilde{\bgam}_{1t}^T\acute{\bx}_{ln_l} \tilde{\bgam}_{1t}^T\tilde{\bmu}\}| =\Big| \sum_{u=1}^{q_l}(\tilde{\bgam}_{1s}^T\tilde{\bgam}_{lu})^2
\tilde{\bgam}_{1t}^T\tilde{\bgam}_{lu}\tilde{\bgam}_{1t}^T\tilde{\bmu}M_{lu}/n_l^2\Big|\\
&\le \{ E(\tilde{\bgam}_{1s}^T\acute{\bx}_{ln_l})^4\}^{1/2}
\{ E(\tilde{\bgam}_{1t}^T\acute{\bx}_{ln_l} \tilde{\bgam}_{1t}^T\tilde{\bmu}  )^2\}^{1/2} \\
&=O\{\tilde{\bgam}_{1s}^T\tilde{\bSig}_{l} \tilde{\bgam}_{1s}(\tilde{\bgam}_{1t}^T \tilde{\bSig}_{l}\tilde{\bgam}_{1t}/n_l)^{1/2}|\tilde{\bgam}_{1t}^T\tilde{\bmu}|/n_l \} \\
&=O[\tilde{\bgam}_{1s}^T \tilde{\bSig}_{l} \tilde{\bgam}_{1s}\{ \tilde{\bgam}_{1t}^T \tilde{\bSig}_{l} \tilde{\bgam}_{1t}/n_l+(\tilde{\bgam}_{1t}^T\tilde{\bmu})^2 \}/n_l ] \\
&=O[ \{\tilde{\bgam}_{1s}^T \tilde{\bSig}_{l} \tilde{\bgam}_{1s}/n_l\}^2+\{\tilde{\bgam}_{1t}^T \tilde{\bSig}_{l} \tilde{\bgam}_{1t}/n_l\}^2+(\tilde{\bgam}_{1t}^T\tilde{\bmu})^4 ],\ l=1,2 
\end{align*}
for all $s,t$. 
Then, we have that for all $s,t$
\begin{align}
\label{B.6}
& \delta_{1}^4 E(w_{s}^4)=O\Big[ \sum_{l=1}^2\{\tilde{\bgam}_{1s}^T \tilde{\bSig}_{l} \tilde{\bgam}_{1s}/n_l\}^2+(\tilde{\bgam}_{1s}^T\tilde{\bmu})^4\Big]\quad\mbox{and}  \\
&(\delta_{1}/2)^4 \frac{E(w_{s}^2w_{t}^2)}{E(y_{0s}^2y_{0t}^2)}-
\tilde{\bgam}_{1s}^T\Big(\sum_{l=1}^2 \tilde{\bSig}_{l}/n_l+\tilde{\bmu}\tilde{\bmu}^T\Big)\tilde{\bgam}_{1s}
\tilde{\bgam}_{1t}^T\Big(\sum_{l=1}^2 \tilde{\bSig}_{l}/n_l+\tilde{\bmu}\tilde{\bmu}^T\Big)\tilde{\bgam}_{1t}\ 
\notag \\ 
&= 2\sum_{l=1}^2(-1)^{l+1} \sum_{u=1}^{q_l}  \{(\tilde{\bgam}_{1s}^T\tilde{\bgam}_{lu})^2
\tilde{\bgam}_{1t}^T\tilde{\bgam}_{lu}\tilde{\bgam}_{1t}^T+(\tilde{\bgam}_{1t}^T\tilde{\bgam}_{lu})^2
\tilde{\bgam}_{1s}^T\tilde{\bgam}_{lu}\tilde{\bgam}_{1s}^T\} \tilde{\bmu}M_{lu}/n_l^2 \notag \\ 
& \ \ +o\Big[ \sum_{l=1}^2 \tilde{\bgam}_{1s}^T \tilde{\bSig}_{l} \tilde{\bgam}_{1s} \tilde{\bgam}_{1t}^T \tilde{\bSig}_{l} \tilde{\bgam}_{1t}/n_l^2 \Big]+O\Big[ \sum_{l=1}^2 ( \tilde{\bgam}_{1s}^T \tilde{\bSig}_{l} \tilde{\bgam}_{1t}/n_l)^2 \Big]. \label{B.7}
\end{align}
Here, 
under (C-iv), we can evaluate that 
\begin{align}
\notag
&\sum_{s,t=1}^{q_1}\sum_{u=1}^{q_l}(\tilde{\bgam}_{1s}^T\tilde{\bgam}_{lu})^2
\tilde{\bgam}_{1t}^T\tilde{\bgam}_{lu} \tilde{\bgam}_{1t}^T\tilde{\bmu}M_{lu}/n_l^2
=\sum_{u=1}^{q_l} \tilde{\bgam}_{lu}^T\tilde{\bSig}_{1}\tilde{\bgam}_{lu}\tilde{\bgam}_{lu}^T \tilde{\bSig}_{1} \tilde{\bmu}M_{lu}/n_l^2\\
\notag
&=O\Big[||\tilde{\bmu}^T\tilde{\bSig}_{1}^{1/2}||
\sum_{u=1}^{q_l}||\tilde{\bgam}_{lu}^T\tilde{\bSig}_{1}^{1/2}|| \tilde{\bgam}_{lu}^T\tilde{\bSig}_{1}\tilde{\bgam}_{lu}/n_l^2
\Big] \\
\notag
&=O\Big[||\tilde{\bmu}^T\tilde{\bSig}_{1}^{1/2}||\tr(\tilde{\bSig}_{1}\tilde{\bSig}_{l})^{1/2}
\Big\{\sum_{u=1}^{q_l}(\tilde{\bgam}_{lu}^T\tilde{\bSig}_{1}\tilde{\bgam}_{lu})^2\Big\}^{1/2}/n_l^2 \Big] \\
&=O\big[\{\tilde{\bmu}^T\tilde{\bSig}_{1}\tilde{\bmu}+ \tr(\tilde{\bSig}_{1}\tilde{\bSig}_{l}) \}
\tr\{(\tilde{\bSig}_{1}\tilde{\bSig}_{l})^{2}\}^{1/2}/n_l^2 \Big]
=o(\delta_{1}^4),\quad l=1,2 \label{B.8}
\end{align}
from the fact that $\sum_{u=1}^{q_l}( \tilde{\bgam}_{lu}^T\tilde{\bSig}_{1} \tilde{\bgam}_{lu})^2
\le \sum_{u,u'=1}^{q_l}( \tilde{\bgam}_{lu}^T\tilde{\bSig}_{1} \tilde{\bgam}_{lu'})^2=\tr\{(\tilde{\bSig}_{1} \tilde{\bSig}_{l})^2\}$
$=o(n_l^2 \delta_{1}^4)$ under (C-iv). 
Then, by combining (\ref{B.6}) and (\ref{B.7}) with (\ref{B.8}), under (A-ii), (C-iv) and (C-vii), for any $\tau>0$, we have that as $m\to \infty$
\begin{align*}
&\sum_{s=1}^{q_1} \frac{E(w_{s}^4)}{\tau}
=O\Big[\frac{  
\sum_{l=1}^2 \tr\{(\tilde{\bSig}_{1} \tilde{\bSig}_{l})^2\}/n_l^2+
\sum_{s=1}^{q_1}(\tilde{\bgam}_{1s}^T\tilde{\bmu})^4}{ \delta_{1}^4 } \Big] \to 0 \ \ \mbox{and}\\
&P\Big(\Big|\sum_{s=1}^{q_1} w_{s}^2-1\Big|\ge \tau \Big)\le \frac{\sum_{s,t=1}^{q_1} E(w_{s}^2w_{t}^2)-1}{\tau^{2}}
=O \Big[ \sum_{s=1}^{q_1} E(w_{s}^4) \Big] +o(1)
\to 0,
\end{align*}
so that $\sum_{s=1}^{q_1} E \{  w_{s}^2 I (w_{s}^2 \ge \tau  ) \}\le  \sum_{s=1}^{q_1} E(w_{s}^4)/\tau \to 0$ and $\sum_{s=1}^{q_1}w_s^2=1+o_P(1)$. 
Hence, by using the martingale central limit theorem, we obtain that $\sum_{s=1}^{q_1} w_{s} \Rightarrow N(0,1)$ as $m\to \infty$ under (A-ii), (C-iv) and (C-vii). 
We conclude the result when $i=1$. 
For the case when $i=2$, we can have the same arguments.
The proof is completed. 
\end{proof}
\begin{proof}[Proofs of Corollaries 3.1 and 3.2]
From Theorems 3.1 and 3.2 and Proposition 3.1, we can claim Corollaries 3.1 and 3.2 straightforwardly. 
\end{proof}

\begin{lem}
Assume that when $\bx_0\in \pi_i$ for $i=1,2$ 
\begin{align}
&\tr[\{(\bx_0-\bmu_i)(\bx_0-\bmu_i)^T-\bSig_i\}(\hat{\bB}_1-\hat{\bB}_2)]=o_P(\kappa);  \label{B.9} \\
&\tr\{\bSig_i(\hat{\bB}_1-\hat{\bB}_2)\}-\log|\hat{\bA}_1 {\bA}_1^{-1} |+\log|\hat{\bA}_2 {\bA}_2^{-1} |=o_P(\kappa); \ \mbox{and} \label{B.10} \\
&\{2(\bx_0-\bmu_i)+(-1)^{i+1}\bmu_{12} \}^T\hat{\bB}_{i'} \bmu_{12}=o_P(\kappa)\ (i' \neq i)  \label{B.11} \\
&\mbox{and} \ \ (p/n_l^{1/2})|| \hat{\bB}_l||=o_P(\kappa),\ \ l=1,2,\notag
\end{align}
where $\kappa=\Delta_{\min}$ or $\kappa=\delta_{\min}$. 
Then, (4.1) holds.
\end{lem}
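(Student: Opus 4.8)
The plan is to expand $D:=\{W_1(\hat{\bA}_1)-W_1(\bA_1)\}-\{W_2(\hat{\bA}_2)-W_2(\bA_2)\}$, recognise three of the resulting terms as exactly the left-hand sides of (B.9)--(B.11), and dispose of the leftover terms by crude operator-norm estimates. By symmetry it suffices to treat $\bx_0\in\pi_1$. Write $\bz_0=\bx_0-\bmu_1$ and $\overline{\bw}_l=\overline{\bx}_{ln_l}-\bmu_l$ ($l=1,2$), so $\bx_0-\overline{\bx}_{1n_1}=\bz_0-\overline{\bw}_1$ and $\bx_0-\overline{\bx}_{2n_2}=\bz_0-\overline{\bw}_2+\bmu_{12}$, while from (1.2) one has $W_l(\hat{\bA}_l)-W_l(\bA_l)=(\bx_0-\overline{\bx}_{ln_l})^T\hat{\bB}_l(\bx_0-\overline{\bx}_{ln_l})-\tr(\bS_{ln_l}\hat{\bB}_l)/n_l-\log|\hat{\bA}_l\bA_l^{-1}|$. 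Expanding the two quadratic forms and using $\bz_0^T\hat{\bB}_l\bz_0=\tr[(\bz_0\bz_0^T-\bSig_1)\hat{\bB}_l]+\tr(\bSig_1\hat{\bB}_l)$ yields
\begin{align*}
D &= \tr[(\bz_0\bz_0^T-\bSig_1)(\hat{\bB}_1-\hat{\bB}_2)] + \bigl\{\tr[\bSig_1(\hat{\bB}_1-\hat{\bB}_2)] - \log|\hat{\bA}_1\bA_1^{-1}| + \log|\hat{\bA}_2\bA_2^{-1}|\bigr\} \\
&\quad - \{2(\bx_0-\bmu_1)+\bmu_{12}\}^T\hat{\bB}_2\bmu_{12} + R,
\end{align*}
where $R=(\overline{\bw}_1^T\hat{\bB}_1\overline{\bw}_1-\tr(\bS_{1n_1}\hat{\bB}_1)/n_1)-(\overline{\bw}_2^T\hat{\bB}_2\overline{\bw}_2-\tr(\bS_{2n_2}\hat{\bB}_2)/n_2)-2\bz_0^T\hat{\bB}_1\overline{\bw}_1+2\bz_0^T\hat{\bB}_2\overline{\bw}_2+2\bmu_{12}^T\hat{\bB}_2\overline{\bw}_2$.

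The first term of the display is $o_P(\kappa)$ by (B.9) with $i=1$; the braced term is $o_P(\kappa)$ by (B.10) with $i=1$; and the third term equals $\{2(\bx_0-\bmu_i)+(-1)^{i+1}\bmu_{12}\}^T\hat{\bB}_{i'}\bmu_{12}$ with $i=1,i'=2$, hence $o_P(\kappa)$ by the first part of (B.11). It therefore remains to show $R=o_P(\kappa)$, and for this I would invoke only the second part of (B.11), namely $(p/n_l^{1/2})\|\hat{\bB}_l\|=o_P(\kappa)$ ($l=1,2$), together with the elementary facts $\tr(\bS_{ln_l})=O_P(p)$ and $\|\overline{\bw}_l\|^2=O_P(p/n_l)$ (both by Markov's inequality, since these nonnegative quantities have expectations $\tr(\bSig_l)=O(p)$ and $\tr(\bSig_l)/n_l$), $\|\bmu_{12}\|=O(p^{1/2})$, and $\lambda_{\max}(\bSig_1)\le\tr(\bSig_1)=O(p)$.

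Indeed, $|\overline{\bw}_l^T\hat{\bB}_l\overline{\bw}_l|\le\|\hat{\bB}_l\|\,\|\overline{\bw}_l\|^2$ and $|\tr(\bS_{ln_l}\hat{\bB}_l)/n_l|\le\|\hat{\bB}_l\|\tr(\bS_{ln_l})/n_l$ (using $|\tr(\bM\bB)|\le\|\bM\|\tr(\bB)$ for positive semi-definite $\bB$) are each $o_P(\kappa n_l^{1/2}/p)\cdot O_P(p/n_l)=o_P(\kappa)$; similarly $|\bmu_{12}^T\hat{\bB}_2\overline{\bw}_2|\le\|\hat{\bB}_2\|\,\|\bmu_{12}\|\,\|\overline{\bw}_2\|=o_P(\kappa)$. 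For the terms $\bz_0^T\hat{\bB}_l\overline{\bw}_l$, conditioning on the training data (so that $\hat{\bB}_l$ and $\overline{\bw}_l$ become fixed and $\bz_0$ has conditional mean $\bze$ and covariance $\bSig_1$) gives conditional variance $\overline{\bw}_l^T\hat{\bB}_l\bSig_1\hat{\bB}_l\overline{\bw}_l\le\lambda_{\max}(\bSig_1)\|\hat{\bB}_l\|^2\|\overline{\bw}_l\|^2=O(p)\cdot o_P(\kappa^2 n_l/p^2)\cdot O_P(p/n_l)=o_P(\kappa^2)$, whence $\bz_0^T\hat{\bB}_l\overline{\bw}_l=o_P(\kappa)$ by a conditional Chebyshev bound (applying dominated convergence to the resulting tail probabilities). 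Summing, $R=o_P(\kappa)$, so $D=o_P(\kappa)$, which is (4.1); the case $\bx_0\in\pi_2$ is identical after interchanging the two indices and noting $(-1)^{2+1}=-1$.

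The main obstacle is the bookkeeping of the expansion — in particular tracking the $\bmu_{12}$-shift that appears because an observation from $\pi_1$ is being compared against the $\pi_2$-centroid, and checking that the remaining cross terms line up so that $R$ contains nothing beyond what the crude $\|\hat{\bB}_l\|$-bound can absorb. Beyond $|\tr(\bM\bB)|\le\|\bM\|\tr(\bB)$ for positive semi-definite $\bB$ and the conditional Chebyshev argument, no nontrivial inequality is needed; the content of the lemma lies entirely in isolating the three delicate pieces that the hypotheses (B.9)--(B.11) are tailored to control.
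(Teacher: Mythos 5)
Your proposal is correct and follows essentially the same route as the paper: expand $W_1(\hat{\bA}_1)-W_1(\bA_1)-W_2(\hat{\bA}_2)+W_2(\bA_2)$, isolate the three terms matching (\ref{B.9})--(\ref{B.11}), and absorb the remainder via $\|\overline{\bx}_{ln_l}-\bmu_l\|^2=O_P(p/n_l)$, $\tr(\bS_{ln_l})=O_P(p)$, $\|\bmu_{12}\|=O(p^{1/2})$ and the bound $(p/n_l^{1/2})\|\hat{\bB}_l\|=o_P(\kappa)$. The only (harmless) deviation is your conditional Chebyshev treatment of $\bz_0^T\hat{\bB}_l\overline{\bw}_l$; the paper gets the same order directly from Cauchy--Schwarz using $\|\bx_0-\bmu_1\|^2=O_P(p)$.
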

\begin{proof}[Proof of Lemma B.3] 
We consider the case when $\bx_0\in \pi_1$.
We have that  
\begin{align*}
&W_1(\hat{\bA}_1)-W_1(\bA_1)-W_2(\hat{\bA}_2)+W_2(\bA_2) \\
&=\tr[\{ (\bx_0-\bmu_1)(\bx_0-\bmu_1)^T-\bSig_1\}(\hat{\bB}_1-\hat{\bB}_2)] \\
&\quad +\tr\{\bSig_1(\hat{\bB}_1-\hat{\bB}_2)\}-\log|\hat{\bA}_1 {\bA}_1^{-1} |+\log|\hat{\bA}_2 {\bA}_2^{-1} | \\
&\quad +\sum_{l=1}^2(-1)^{l+1}
\tr[\{2(\bx_0-\bmu_1-(\overline{\bx}_{ln_l}-\bmu_1)/2)(\bmu_1-\overline{\bx}_{ln_l})^T-\bS_{ln_l}/n_l\}\hat{\bB}_l].
\end{align*}
Note that $\tr(\bS_{ln_l})=O_P(p)$, 
$||\overline{\bx}_{ln_l}-\bmu_{1}||^2\le ||\overline{\bx}_{ln_l}-\bmu_l||^2+||\bmu_{l}-\bmu_{1}||^2=||\bmu_l-\bmu_{1}||^2+O_P(p/n_l)$ and 
$|| \bx_0-\bmu_1-(\overline{\bx}_{ln_l}-\bmu_1)/2||^2\le || \bx_0-\bmu_1||^2+||\overline{\bx}_{ln_l}-\bmu_l||^2+||\bmu_1-\bmu_l||^2=O_P(p)$, $l=1,2$, from the facts that 
$E(|| \bx_0-\bmu_1||^2)=\tr(\bSig_1)$,
$E\{\tr(\bS_{ln_l}) \}=\tr(\bSig_l)$, $E(||\overline{\bx}_{ln_l}-\bmu_l||^2)=\tr(\bSig_l)/n_l$, 
$\tr(\bSig_i)=O(p)$, $i=1,2,$ and $||\bmu_{12}||^2=O(p)$. 
Then, we have that for $l=1,2$ 
\begin{align*}
&|\tr[\{2(\bx_0-\bmu_1-(\overline{\bx}_{ln_l}-\bmu_1)/2)(\bmu_l-\overline{\bx}_{ln_l})^T-\bS_{ln_l}/n_l\}\hat{\bB}_l]|\\
& \le 2|| \bx_0-\bmu_1-(\overline{\bx}_{ln_l}-\bmu_1)/2||\cdot ||\overline{\bx}_{ln_l}-\bmu_l||\cdot ||\hat{\bB}_l ||+\tr(\bS_{ln_l})||\hat{\bB}_l||/n_l\\
&=O_P\{ (p/n_l^{1/2}) ||\hat{\bB}_l ||\}.
\end{align*}
Also, we have that 
$|(\overline{\bx}_{2n_2}-\bmu_2)^T \hat{\bB}_2 \bmu_{12}|=O_P\{ (p/n_2^{1/2}) ||\hat{\bB}_2 ||$. 
Thus it holds that 
\begin{align*}
&\sum_{l=1}^2(-1)^{l+1}\tr[\{2(\bx_0-\bmu_1-(\overline{\bx}_{ln_l}-\bmu_1)/2)(\bmu_1-\overline{\bx}_{ln_l})^T-\bS_{ln_l}/n_l\}\hat{\bB}_l]\\
&=-\{2(\bx_0-\bmu_1)+\bmu_{12} \}^T\hat{\bB}_2 \bmu_{12}+O_P\{(p/n_1^{1/2}) ||\hat{\bB}_1 ||+(p/n_2^{1/2}) ||\hat{\bB}_2||\}.
\end{align*}
Hence, it concludes the result when $\bx_0\in \pi_1$. 
For the case when $\bx_0\in \pi_2$, we can have the same arguments.
The proof is completed. 
\end{proof}
\begin{proof}[Proofs of Propositions 4.1 and 4.2]
We consider the case when $\bx_0\in \pi_1$. 
Similar to Proof of Lemma B.3, we can claim that 
$|\{2(\bx_0-\bmu_1)+\bmu_{12} \}^T\hat{\bB}_2 \bmu_{12}|\le ||2(\bx_0-\bmu_1)+\bmu_{12}||\cdot|| \bmu_{12}||\cdot ||\hat{\bB}_2||=O_P(p^{1/2}||\bmu_{12}||\cdot ||\hat{\bB}_2||)=O_P(p ||\hat{\bB}_2||)$ 
because $||\bmu_{12}||^2=O(p)$ and $||2(\bx_0-\bmu_1)+\bmu_{12}||^2=O_P(p)$.
Thus, (\ref{B.11}) holds under (C-viii) or (C-ix). 
Note that (\ref{B.9}) and (\ref{B.10}) naturally hold when $\hat{\bA}_1=\hat{\bA}_2$ and ${\bA}_1={\bA}_2$. 
Hence, from Lemma B.3, it concludes the result of Proposition 4.2 when $\bx_0\in \pi_1$.

Next, we consider (\ref{B.9}) and the first term of (\ref{B.10}). 
We have that for $l=1,2$ 
\begin{align*}
&|\tr(\bSig_1\hat{\bB}_l)|\le \tr(\bSig_1)||\hat{\bB}_l ||=O_P(p||\hat{\bB}_l||) \ \mbox{and}\\
&|\tr[\{(\bx_0-\bmu_1)(\bx_0-\bmu_1)^T-\bSig_1\}\hat{\bB}_l]|\\
&\le || \bx_0-\bmu_1||^2||\hat{\bB}_l ||+\tr(\bSig_1)||\hat{\bB}_l||=O_P(p||\hat{\bB}_l ||).
\end{align*}

Finally, we consider $\log|\hat{\bA}_l {\bA}_l^{-1} |, l=1,2,$ in (\ref{B.10}). 
Let $\be_{p}$ be an arbitrary (random) $p$-vector such that $||\be_p||=1$. 
Note that $||\be_{p}^T \bA_l^{-1/2}||\in (0,\infty)$ as $p\to \infty$ 
under $\lambda(\bA_l)\in (0,\infty)$ as $p\to \infty$. 
Thus we have that  
$$
\be_{p}^T \bA_l^{-1/2}\hat{\bB}_l\bA_l^{-1/2}\be_{p}=\be_{p}^T \bA_l^{-1/2} \hat{\bA}_l\bA_l^{-1/2}\be_{p}-1=O_P(||\hat{\bB}_l||),
$$
so that $\lambda_{\min}(\bA_l^{-1/2} \hat{\bA}_l\bA_l^{-1/2})-1=O_P(||\hat{\bB}_l||)$ and 
$\lambda_{\max}(\bA_l^{-1/2} \hat{\bA}_l\bA_l^{-1/2})-1=O_P(||\hat{\bB}_l||)$. 
Hence, under $||\hat{\bB}_l||=o_P(1)$, it holds that for $l=1,2$ 
$$
\log|\hat{\bA}_l {\bA}_l^{-1} | =\log|\bA_l^{-1/2} \hat{\bA}_l \bA_l^{-1/2}|=O_P(p|| \hat{\bB}_l||).
$$ 
Note that $\Delta_{\min}=O(p)$ and $\delta_{\min}=O(p)$ under 
$\lambda(\bA_i)\in (0,\infty)$ as $p\to \infty$ for $i=1,2$. 
Then, under (C-viii), it holds that $||\hat{\bB}_l||=o_P(1)$ for $l=1,2$. 
Hence, (C-viii) implies (\ref{B.9}) and  (\ref{B.10}).
It concludes the result of Proposition 4.1 when $\bx_0\in \pi_1$. 
For the case when $\bx_0\in \pi_2$, we can have the same arguments.
The proof is completed. 
\end{proof}
\begin{proof}[Proof of Corollary 4.1]
Under (A-i) we have that $\Var \{ \tr(\bS_{in_i}) \}=O(\tr(\bSig_{i}^2)/n_i)$, $i=1,2$, so that 
$
\tr(\bS_{in_i})=\tr(\bSig_{i})+O_P \{(\tr(\bSig_{i}^2)/n_i)^{1/2}\}. 
$ 
Then, under (C-i') it holds that $\tr(\bS_{in_i})=\tr(\bSig_i)+o_P(\Delta_{\min(II)})=\tr(\bSig_i)\{1+o_P(1)\}$ and $\tr(\bSig_{i}^2)/(n_i p^2)=o(\Delta_{\min(II)}^2/p^2)$
$=o(1)$ for $i=1,2$ because $\Delta_{\min(II)}=O(p)$. 
Thus, we have that under (A-i) and (C-i') 
\begin{align}
||\hat{\bB}_i||&=||\{p/\tr(\bS_{in_i})-p/\tr(\bSigma_i)  \}\bI_p||=\frac{p|\tr(\bS_{in_i})-\tr(\bSigma_i)|}{\tr(\bS_{in_i})\tr(\bSigma_i)}\notag \\
&=O_P\{ (\tr(\bSig_{i}^2)/n_i)^{1/2}/\tr(\bS_{in_i}) \}=o_P\{ \Delta_{\min(II)}/p\}=o_P(1), \label{B.12}
\end{align}
so that $ p||\hat{\bB}_i||=o_P(\Delta_{\min(II)})$. 
Note that $\lambda_{\max}(\bA_i)=\lambda_{\min}(\bA_i)=\tr(\bSig_i)/p\in (0,\infty)$ as $p\to \infty$. 
Thus, from Corollary 2.1 and Proposition 4.1, it concludes the result. 
\end{proof}
\begin{proof}[Proof of Corollary 4.2]
We consider the case when $\bx_0\in \pi_i$. 
Note that $\tr(\bS_{ln_l})/\tr(\bSig_{l})$
$=1+O_P \{(\tr(\bSig_{l}^2)/n_l)^{1/2}/p\}=1+o_P(1)$, $l=1,2,$ and $\tr\{(\bx_0-\bmu_i)(\bx_0-\bmu_i)^T-\bSig_i\}=O_P(\tr(\bSig_i^2)^{1/2})$ under (A-i). 
Also, note that $\tr(\bSig_{i}^2)\tr(\bSig_{l}^2) \le  \lambda_{i1}\lambda_{il}\tr(\bSig_i)\tr(\bSig_l)=o(n_{\min} \delta_{\min(II)}^2p^2 )$, $l=1,2$ under (C-iv').
Then, from (\ref{B.12}), it holds that for $l=1,2$ 
\begin{align}
&\tr[\{(\bx_0-\bmu_i)(\bx_0-\bmu_i)^T-\bSig_i\}\hat{\bB}_l ]\notag \\
&=p\frac{\tr(\bSig_{l})-\tr(\bS_{ln_l})}{\tr(\bSig_{l})\tr(\bS_{ln_l})}
\tr\{(\bx_0-\bmu_i)(\bx_0-\bmu_i)^T-\bSig_i\}\notag \\
&=O_P\{ (\tr(\bSig_{i}^2)\tr(\bSig_{l}^2) /n_l)^{1/2}/p\}=o_P(\delta_{\min(II)}), \ \ \mbox{and} 
\label{B.13}\\
&p||\hat{\bB}_l  ||/n_l^{1/2}=O_P\{\tr(\bSig_{l}^2)^{1/2}/n_l \}=o_P(\delta_{\min(II)})
\notag
\end{align}
under (A-i) and (C-iv'). 
Similarly, from (\ref{B.12}), under (A-i) and (C-iv'), we have that for $i' \neq i$ 
\begin{align*}
&\{2(\bx_0-\bmu_i)+(-1)^{i+1}\bmu_{12} \}^T\hat{\bB}_{i'} \bmu_{12} \\
&=O_P\{(\bmu_{12}^T \bSig_i\bmu_{12}/n_{i'})^{1/2}\}+O_P \{(\tr(\bSig_{i'}^2)/n_{i'})^{1/2}||\bmu_{12} ||^2/p\}\\
&=O_P \{( \lambda_{i1}||\bmu_{12}||^2/n_{i'})^{1/2}\}+O_P \{( \lambda_{i'1}||\bmu_{12}||^2/n_{i'})^{1/2}\}=o_P(\delta_{\min(II)}) 
\end{align*}
from the facts that $\bmu_{12}^T \bSig_i\bmu_{12}\le \lambda_{i1} ||\bmu_{12}||^2$,  
$\tr(\bSig_{i'}^2)=O(\lambda_{i'1}p)$ and $||\bmu_{12}||^2=O(p)$. 
On the other hand, under (A-i) and (C-iv'), from (\ref{B.12}), we have that for $l=1,2$ 
\begin{align*}
\log\{ \tr(\bSig_{l})/\tr(\bS_{ln_l}) \}&=(\tr(\bSig_{l})/\tr(\bS_{ln_l})-1)+O_P\{(\tr(\bSig_{l})/\tr(\bS_{ln_l})-1)^2\}\\
&=(\tr(\bSig_{l})/\tr(\bS_{ln_l})-1)+O_P\{\tr(\bSig_{l}^2)/(n_lp^2)\}\\
&=(\tr(\bSig_{l})/\tr(\bS_{ln_l})-1)+o_P(\delta_{\min(II)}/p)
\end{align*}
from the facts that $\tr(\bSig_{l}^2)/p=O(\lambda_{l1})$ and $\tr(\bSig_{l})/\tr(\bS_{ln_l})=1+o_P(1)$. 
Then, under (A-i) and (C-iv'), it holds that 
$$
\tr(\bSig_i\hat{\bB}_i)-\log|\hat{\bA}_i {\bA}_i^{-1}| 
=p(\tr(\bSig_{i})/\tr(\bS_{in_i})-1)-p \log\{ \tr(\bSig_{i})/\tr(\bS_{in_i}) \}=o_P(\delta_{\min(II)}).
$$
Similarly, under (A-i) and (C-iv'), we have that 
\begin{align}
\notag
\tr(\bSig_i\hat{\bB}_{i'})-\log|\hat{\bA}_{i'}{\bA}_{i'}^{-1}| 
&=p(\tr(\bSig_{i})/\tr(\bSig_{i'})-1)(\tr(\bSig_{i'})/\tr(\bS_{i'n_{i'}})-1)+o_P(\delta_{\min (II)})\\
\label{B.14}
&=O_P(|\tr(\bSig_{i})/\tr(\bSig_{i'})-1|(\tr(\bSig_{i'}^{2})/n_{i'})^{1/2})+o_P(\delta_{\min(II)}).
\end{align}
By combining (\ref{B.13}) to (\ref{B.14}) with Lemma B.3 and Corollary 3.1, we can claim the result.  
\end{proof}
\begin{proof}[Proof of Corollary 4.3]
We can write that 
\begin{align}
s_{in_i(j)}=n_is_{oin_i(j)}/(n_i-1)-n_i(\overline{x}_{ijn_i}-\mu_{ij})^2/(n_i-1),
\label{B.15}
\end{align}
where $s_{oin_i(j)}=\sum_{k=1}^{n_i}(x_{ijk}-\mu_{ij})^2/n_i $. 
Note that 
$\limsup_{p\to \infty} E \{ \exp(t_{ij} | (x_{ijk}-\mu_{ij})^2-\sigma_{i(j)}|/\eta_{i(j)}^{1/2} )\}\le \limsup_{p\to \infty}[ E \{ \exp(t_{ij}|x_{ijk}-\mu_{ij}|^2/\eta_{i(j)}^{1/2} )\}+\exp(t_{ij}\sigma_{i(j)}/\eta_{i(j)}^{1/2} )]<\infty$ under (A-iii).
Then, under (A-iii), for any $x$ satisfying $x\to \infty$ and $x=o(n_i^{1/2})$ as $n_i\to \infty$, we have that as $n_i\to \infty$
$$
P(n_i^{1/2}|s_{oin_i(j)}-\sigma_{i(j)}|/\eta_{i(j)}^{1/2} \ge x )=\exp\Big(-\frac{x^2}{2}\{1+o(1)\}\Big).
$$
Refer to Chapter 6 in \cite{Delapena:2009} for the details of this result. 
Let $ \tau_{1j}= M(\eta_{i(j)}n_i^{-1}\log{p})^{1/2}$ for $j=1,...,p$, where $M>2^{1/2}$. 
Then, under $n_i^{-1}\log{p}=o(1)$, it holds that as $p\to \infty$ 
\begin{align}
\notag
\sum_{j=1}^p
P(|s_{oin_i(j)}-\sigma_{i(j)}|\ge \tau_{1j} )&=
\sum_{j=1}^p
P(n_i^{1/2} |s_{oin_i(j)}-\sigma_{i(j)}|/\eta_{i(j)}^{1/2}\ge M (\log{p})^{1/2} )\\
&=\sum_{j=1}^p\exp\Big(-\frac{M^2 \log{p}}{2}\{1+o(1)\}\Big)
\to 0.
\label{B.16}
\end{align}

Next, we consider the second term of (\ref{B.15}). 
Let $u_{ij}=t_{ij}(\sigma_{i(j)}/\eta_{i(j)})^{1/2}$ for $j=1,...,p$. 
Then, we have that for $j=1,...,p$ 
\begin{align*}
&E \{ \exp (u_{ij}|x_{oijk}|/\sigma_{i(j)}^{1/2} ) \} \\
&=E\{ \exp (u_{ij}|x_{oijk}|/\sigma_{i(j)}^{1/2} )I(|x_{oijk}| \le 1)\}+E \{ \exp (u_{ij}|x_{oijk}|/\sigma_{i(j)}^{1/2} )I(|x_{oijk}|> 1)\}\\
&\le  \exp (u_{ij}/\sigma_{i(j)}^{1/2} )+
E \{  \exp (u_{ij}x_{oijk}^2/\sigma_{i(j)}^{1/2} )\} \le   \exp (u_{ij}/\sigma_{i(j)}^{1/2} )+
E \{  \exp (t_{is}x_{oijk}^2/\eta_{i(j)}^{1/2} )\}, 
\end{align*}
so that $\limsup_{p\to \infty} E \{ \exp (u_{ij}|x_{oijk}|/\sigma_{i(j)}^{1/2} ) \}<\infty$ under (A-iii).  
Thus, in a way similar to (\ref{B.16}), we have that 
\begin{align}
\label{B.17}
\sum_{j=1}^p P(|\overline{x}_{ijn_i}-\mu_{ij}|\ge \tau_{2j} )=
\sum_{j=1}^p P(n_i^{1/2}|\overline{x}_{ijn_i}-\mu_{ij}|/\sigma_{i(j)}^{1/2}\ge M (\log{p})^{1/2} )\to 0
\end{align}
for $ \tau_{2j}= M( \sigma_{i(j)}n_i^{-1}\log{p})^{1/2}$, $j=1,...,p$. 
By combining (\ref{B.16}) and (\ref{B.17}) with (\ref{B.15}), under $n_i^{-1}\log{p}=o(1)$ and (A-iii), we have that 
\begin{align*}
&\sum_{j=1}^pP\{|s_{in_i(j)}- n_i\sigma_{i(j)}/(n_i-1)|\ge n_i(\tau_{1j}+\tau_{2j}^2)/(n_i-1)\} \\
&\le \sum_{j=1}^pP(|s_{oin_i(j)}-\sigma_{i(j)}|+|\overline{x}_{ijn_i}-\mu_{ij}|^2\ge  \tau_{1j}+\tau_{2j}^2) \\
&\le \sum_{j=1}^pP(|s_{oin_i(j)}-\sigma_{i(j)}|\ge \tau_{1j})+
\sum_{j=1}^p P(|\overline{x}_{ijn_i}-\mu_{ij}|^2\ge \tau_{2j}^2)\to 0.
\end{align*}
Note that  $n_i\sigma_{i(j)}/(n_i-1)=\sigma_{i(j)}+o(n_i^{-1/2})$ and $\tau_{2j}^2=o(\tau_{1j})$ under $n_i^{-1}\log{p}=o(1)$. 
Thus we have that $\max_{j=1,...,p}\{|s_{in_i(j)}- \sigma_{i(j)}|\}=O_P( \max_{j=1,...,p}\tau_{1j}) $ under $n_i^{-1}\log{p}=o(1)$ and (A-iii), so that 
\begin{align}
\label{B.18}
\max_{j=1,...,p}\{|s_{in_i(j)}- \sigma_{i(j)}|\}=O_P\{(n_i^{-1}\log{p})^{1/2}\} .
\end{align}
Then, for $i=1,2$, it holds that under $n_i^{-1}\log{p}=o(1)$ 
\begin{align}
\notag
||\hat{\bB}_i||=
||\bS_{i(d)}^{-1}-\bSig_{i(d)}^{-1}||&=\max_{j=1,...,p}\{ |s_{in_i(j)}-\sigma_{i(j)}|/ (s_{in_i(j)} \sigma_{i(j)}) \} \\ 
&=
O_P\{(n_i^{-1}\log{p})^{1/2}\}= o_P(1).
\label{B.19}
\end{align}
Then, 
it follows that (C-i') holds under (4.4). 
From the facts that $\Delta_{\min(III)}=O(p)$, note that $n_{\min}^{-1} \log {p}=o(1)$ under (4.4).
Then, by combining (\ref{B.19}) with Proposition 4.1 and Corollary 2.1, we can claim the result of Corollary 4.3. 
\end{proof}
\begin{proof}[Proofs of Corollary 4.4]
First, note that $s_{n(j)}-\sigma_{(j)}=\sum_{i=1}^2(n_i-1)(s_{in_i(j)}-\sigma_{i(j)})$
$/(\sum_{i=1}^2n_i-2)$.
From (\ref{B.18}), we can claim that $\max_{j=1,...,p}\{|s_{n(j)}- \sigma_{(j)}|\}=O_P\{(n_{\min}^{-1}$
$\log{p})^{1/2}\}$ 
under $n_{\min}^{-1} \log {p}=o(1)$ and (A-iii).
Thus it follows that $||\bS_{n(d)}^{-1}-\bSig_{(d)}^{-1}||=O_P\{(n_{\min}^{-1}\log{p})^{1/2} \}$. 
Note that $\Delta_{(III')}/||\bmu_{12}||^2\in (0,\infty)$ as $p\to \infty$.
Then, by combining Theorem 2.1 with Propositions 2.1 and 4.2, we can claim the result of Corollary 4.4. 
\end{proof}
\begin{proof}[Proofs of Corollary 4.5]
Let $\bS_{oin_i}=\sum_{k=1}^{n_i}(\bx_{ik}-\bmu_{i})(\bx_{ik}-\bmu_{i})^T/n_i$ and denote its $(r,s)$ element by $s_{o in_i(rs)}$ for $r,s=1,...,p$. 
Let $u_{i(rs)}=\min\{t_{ir}/\eta_{i(r)}^{1/2},t_{is}/\eta_{i(s)}^{1/2}\}\eta_{i(rs)}^{1/2}$
for $r,s=1,...,p$. 
Then, we have that for $r,s=1,...,p$ 
\begin{align*}
&E\{\exp (u_{i(rs)} |x_{oirk}x_{oisk}-\sigma_{i(rs)} |/\eta_{i(rs)}^{1/2}) \} \\
&\le E [\exp \{u_{i(rs)} (x_{oirk}^2/2+x_{oisk}^2/2+\sigma_{i(rs)})/\eta_{i(rs)}^{1/2} \}] \\
&\le  \exp( u_{i(rs)}\sigma_{i(rs)}/\eta_{i(rs)}^{1/2})E[ \exp\{t_{ir} x_{oirk}^2/(2\eta_{i(r)}^{1/2})\} \exp\{t_{is}x_{oisk}^2/(2\eta_{i(s)}^{1/2})\}] \\
&\le  \exp( u_{i(rs)}\sigma_{i(rs)}/\eta_{i(rs)}^{1/2})[E\{\exp(t_{ir}x_{oirk}^2/\eta_{i(r)}^{1/2})\}E\{\exp(t_{is}x_{oisk}^2/\eta_{i(s)}^{1/2})\}]^{1/2},
\end{align*}
so that $\limsup_{p\to \infty} E\{\exp (u_{i(rs)} |x_{oirk}x_{oisk}-\sigma_{i(rs)} |/\eta_{i(rs)}^{1/2}) \}<\infty$ under (A-iii). 
Note that $s_{in_i(rs)}=n_is_{oin_i(rs)}/(n_i-1)-n_i(\overline{x}_{irn_i}-\mu_{ir})(\overline{x}_{isn_i}-\mu_{is})/(n_i-1)$, where 
$s_{in_i(rs)}$ is the $(r,s)$ element of $\bS_{in_i}$. 
Also, note that $\eta_{i(rs)}\in(0,\infty)$ as $p\to \infty$ under (A-iii) and 
$\liminf_{p\to \infty} \eta_{i(rs)}>0$ for all $r,s$, 
from the fact that $\eta_{i(rs)}\le  \{(\eta_{i(r)}+\sigma_{i(r)}^2)(\eta_{i(s)}+\sigma_{i(s)}^2) \}^{1/2}$. 
In a way similar to (\ref{B.16}) and (\ref{B.17}), 
under $n_i^{-1}\log{p}=o(1)$, (A-iii) and 
$\liminf_{p\to \infty} \eta_{i(rs)}>0$ for all $r,s$, we have that 
\begin{align*}
&\sum_{r,s=1}^pP\{|s_{in_i(rs)}- n_i\sigma_{i(rs)}/(n_i-1)|\ge n_i(\tau_{1(rs)}+\tau_{2(rs)} )/(n_i-1)\} \\
&\le \sum_{r,s=1}^p\{P(|s_{oin_i(rs)}-\sigma_{i(rs)}|\ge \tau_{1(rs)}) +P(|\overline{x}_{irn_i}-\mu_{ir}||\overline{x}_{isn_i}-\mu_{is}| \ge \tau_{2(rs)} )\}\\
&\le \sum_{r,s=1}^pP(|\overline{x}_{irn_i}-\mu_{ir}|^2+|\overline{x}_{isn_i}-\mu_{sr}|^2 \ge \tau_{2(rs)})\}
+o(1)\to 0
\end{align*}
for $ \tau_{1(rs)}= M(\eta_{i(rs)}n_i^{-1}\log{p})^{1/2}$ and 
$ \tau_{2(rs)}= M^2\{(\sigma_{i(r)}+ \sigma_{i(s)}) n_i^{-1}\log{p}\}$, $r,s=1,...,p$, where $M>2$. 
Thus it holds that $\max_{r,s=1,...,p}\{|s_{in_i(rs)}- \sigma_{i(rs)}|\}=O_P( \max_{r,s=1,...,p}\tau_{1(rs)} )$ because $\tau_{2(rs)}=o(\tau_{1(rs)})$, so that 
\begin{align}
\label{B.20}
\max_{r,s=1,...,p}\{|s_{in_i(rs)}- \sigma_{i(rs)} \}=O_P\{(n_i^{-1}\log{p})^{1/2}\} .
\end{align}
Here, from the equations (A1) and (A2) in \cite{Bickel:2008a}, we have that 
$
||\bM||\le \max_{s=1,...,p}\sum_{t=1}^p|m_{st}| 
$
for any symmetric matrix $\bM$, where $m_{st}$ is the $(s,t)$ element of $\bM$. 
From (\ref{B.20}), we have that 
\begin{align}
||\bS_{in_i}-\bSig_i||=O_P\{p(n_i^{-1} \log{p})^{1/2}\}=o_P(1) \label{B.21}
\end{align}
under $n_i^{-1} p^2 \log{p} =o(1)$, (A-iii) and $\liminf_{p\to \infty} \eta_{i(rs)}>0$ for all $r,s$.
Then, under $\lambda(\bSig_i)\in (0,\infty)$ as $p\to \infty$, we can claim that $\lambda(\bS_{in_i})\in (0,\infty)$ in probability. 
Thus it holds that $||\be_{p}^T\bSig_{i}^{-1}||\in (0, \infty)$ and $||\be_{p}^T\bS_{in_i}^{-1}||\in (0, \infty)$ in probability, where $\be_{p}$ is an arbitrary (random) $p$-vector such that $||\be_p||=1$. 
Then, from (\ref{B.21}), we have that $\be_{p}^T\bSig_{i}^{-1}(\bS_{in_i}-\bSig_i)\bS_{in_i}^{-1}\be_{p}=\be_{p}^T(\bSig_i^{-1}-\bS_{in_i}^{-1})\be_{p}=O_P\{p(n_i^{-1} \log{p})^{1/2}\}$  under $n_i^{-1} p^2 \log{p} =o(1)$, (A-iii) and $\liminf_{p\to \infty} \eta_{i(rs)}>0$ for all $r,s$, so that 
$
||\hat{\bB}_i||=O_P\{p(n_i^{-1} \log{p})^{1/2}\}=o_P(1). 
$
Note that (C-i') and (C-ii') hold under the conditions of Corollary 4.5. 
Also, note that $\tr\{(\bI_p-\bSig_i\bSig_{i'}^{-1})^2\}=O(p)$ ($i'\neq i$) under $\lambda(\bSig_i)\in (0,\infty)$ as $p\to \infty$. 
By combining Corollary 3.2 with Proposition 4.1, we can claim the result of Corollary 4.5. 
\end{proof}
\begin{proof}[Proof of Corollary 5.1]
By using Theorem 5.1, we can claim the result straightforwardly. 
\end{proof}
\begin{proof}[Proof of Corollary 5.2]
Let us write that for $i=1,2$  
$$
W_i( \bSig_{i(d)}^{-1} )_{FS}=\sum_{j\in \mbox{{\footnotesize $\bD$}}} \{(x_{0j}-\overline{x}_{ijn_i})^2/{\sigma}_{i(j)}-s_{in_i(j)}/({\sigma}_{i(j)} n_i)+\log{{\sigma}_{i(j)} }\}.
$$
Note that $E\{W_{i'}( \bSig_{i'(d)}^{-1} )_{FS}\}-E\{W_i( \bSig_{i(d)}^{-1} )_{FS}\}=\Delta_{i(III)}$ ($i' \neq i$) when $\bx_0 \in \pi_i$. 
Also note that $\liminf_{p\to \infty} \Delta_{\min(III)}/p_*>0$ under $\liminf_{p\to \infty} \theta_j>0$ for all $j \in \bD$. 
If $\lambda_{\max}(\bSig_{i*})=o(p_*)$, (C-i') and (C-ii') hold for $\bSig_{i*},\ i=1,2$. 
Here, 
let us write that $\bSig_{i(d)*}=\mbox{diag}(\sigma_{i(j_1)},...,\sigma_{i(j_{p_*})})$ and 
$\bS_{i(d)*}=\mbox{diag}(s_{in_i({j}_1)},...,s_{in_i({j}_{p_*})})$ for $i=1,2$, 
where $\bD=\{j_1,....,j_{p_*}\}$. 
Then, in a way similar to (\ref{B.19}), under $n_i^{-1}\log{p}=o(1)$ and (A-iii), it holds that
$
||\bS_{i(d)*}^{-1} -\bSig_{i(d)*}^{-1}||=O_P\{(n_i^{-1}\log{p})^{1/2}\}.
$
Hence, we have that $p_*||\bS_{j(d)*}^{-1} -\bSig_{j(d)*}^{-1}||=o_P(\Delta_{\min(III)})$ under $\liminf_{p\to \infty} \theta_j>0$ for all $j\in \bD$. 
By combining Corollary 5.1 with Propositions 2.1 and 4.1, we can claim the result.
\end{proof}

\section*{Acknowledgements}
Research of the first author was partially supported by 
Grants-in-Aid for Scientific Research (A) and Challenging Exploratory Research, Japan Society for the Promotion of Science (JSPS), under Contract Numbers 15H01678 and 26540010.
Research of the second author was partially supported by 
Grant-in-Aid for Young Scientists (B), JSPS, under Contract Number 26800078.

\end{document}